\documentclass{article}


\usepackage[final,nonatbib]{neurips_2021}




\usepackage[utf8]{inputenc} 
\usepackage[T1]{fontenc}    
\usepackage{hyperref}       
\usepackage{url}            
\usepackage{booktabs}       
\usepackage{amsfonts}       
\usepackage{nicefrac}       
\usepackage{microtype}      
\usepackage{xcolor}         
\usepackage[linesnumbered,lined,ruled,vlined,noend]{algorithm2e}

\usepackage{graphicx}
\usepackage{subfigure}
\usepackage{amsmath}
\usepackage{amsthm}

\usepackage{mathtools}

\newcommand{\abs}[1]{\left| #1\right|}

\newcommand{\br}[1]{\left\{#1\right\}}

\newcommand{\REAL}{\ensuremath{\mathbb{R}}}

\newcommand{\eps}{\varepsilon}

\newcommand{\OPT}{\mathrm{opt}}
\newcommand{\cost}{\mathrm{\ell}}

\newcommand{\K}{\mathcal{K}}
\newcommand{\B}{\mathcal{B}}
\newcommand{\E}{\mathcal{E}}

\newcommand{\coresetAlg}{\textsc{Signal-Coreset}}
\newcommand{\coresetQuery}{\textsc{Fitting-loss}}

\newcommand{\partitionAlg}{\textsc{Partition}}
\newcommand{\partitionAlgOneD}{\textsc{SlicePartition}}
\newcommand{\bicriteriaAlg}{\textsc{Bicriteria}}

\newcommand{\DTcoreset}{\texttt{DT-coreset}}
\newcommand{\randomSample}{\texttt{RandomSample}}

\newcommand{\randomForest}{\texttt{RandomForestRegressor}}
\newcommand{\lightGBM}{\texttt{LGBMRegressor}}

\renewcommand{\S}{\mathrm{SEG}}

\renewcommand{\paragraph}[1]{\medskip\noindent\textbf{{#1} }}

\newcommand{\vertiii}[1]{{\left\vert\kern-0.25ex\left\vert\kern-0.25ex\left\vert #1
    \right\vert\kern-0.25ex\right\vert\kern-0.25ex\right\vert}}

\newtheorem{theorem}{Theorem}
\newtheorem{lemma}[theorem]{Lemma}
\newtheorem{observation}[theorem]{Observation}

\newtheorem{definition}[theorem]{Definition}
\newtheorem{corollary}[theorem]{Corollary}
\newtheorem{claim}{Claim}[theorem]

\newif\ifproofs
\proofstrue%

\DeclareMathOperator*{\argmin}{arg\,min}

\title{Coresets for Decision Trees of Signals}

%

\author{
    Ibrahim Jubran, Ernesto Sanches, Ilan Newman, Dan Feldman\\
    University of Haifa, Israel\\
    {\tt\small \{ibrahim.jub,  ernestosanches, dannyf.post\}@gmail.com}\\
    {\tt\small \{ilan\}@cs.haifa.ac.il}
    }

\begin{document}

\maketitle

\begin{abstract}
A $k$-decision tree $t$ (or $k$-tree) is a recursive partition of a matrix (2D-signal) into $k\geq 1$ block matrices (axis-parallel rectangles, leaves) where each rectangle is assigned a real label. Its regression or classification loss to a given matrix $D$ of $N$ entries (labels) is the sum of squared differences over every label in $D$ and its assigned label by $t$.
Given an error parameter $\varepsilon\in(0,1)$, a $(k,\varepsilon)$-coreset $C$ of $D$ is a small summarization that provably approximates this loss to \emph{every} such tree, up to a multiplicative factor of $1\pm\varepsilon$. In particular, the optimal $k$-tree of $C$ is a $(1+\varepsilon)$-approximation to the optimal $k$-tree of $D$.

We provide the first algorithm that outputs such a $(k,\varepsilon)$-coreset for \emph{every} such matrix $D$. The size $|C|$ of the coreset is polynomial in $k\log(N)/\varepsilon$, and its construction takes $O(Nk)$ time.
This is by forging a link between decision trees from machine learning -- to partition trees in computational geometry. 

Experimental results on \texttt{sklearn} and \texttt{lightGBM} show that applying our coresets on real-world data-sets boosts the computation time of random forests and their parameter tuning by up to x$10$, while keeping similar accuracy. Full open source code is provided.
\end{abstract}

\section{Introduction} \label{sec:intro}
Decision trees are one of the most common algorithms used in machine learning today, both in the academy and industry, for classification and regression problems~\cite{rokach2005decision}. 
Informally, a decision tree is a recursive binary partition of the input feature space into hyper-rectangles, where each such hyper-rectangle is assigned a label. 
If the labels are given from a discrete set, the trees are usually called \emph{classification trees}, and otherwise they are usually called \emph{regression trees}.
Variants include non-binary partitions and forests \cite{ho1995random}.

\begin{figure}[h]
  \centering
  \includegraphics[width=0.9\textwidth]{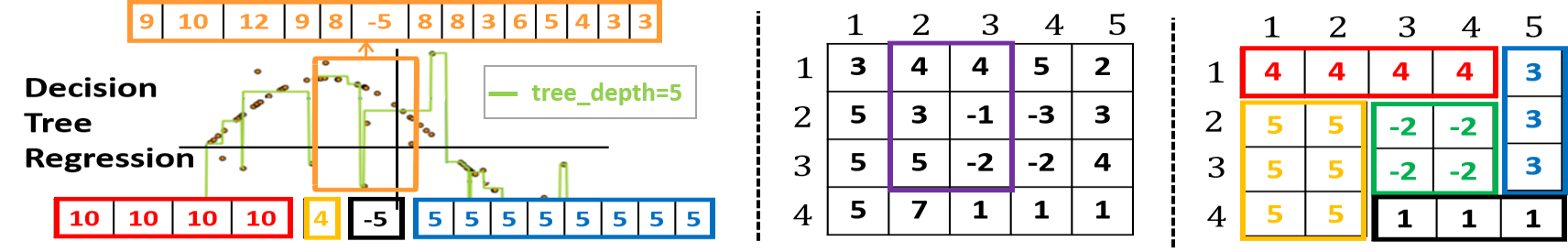}
  \caption{\textbf{(Left): }A one dimensional signal (orange points) and its segmentation into $25$ ``smooth'' segments / leaves (green lines). Image taken from Section 1.10 (``Decision Trees'') of the sklearn's User Guide~\cite{sklearnDT}. The vector $v$ on top represents a subset of the signal's values. The bottom vectors represent a $4$-segmentation of $v$, similar to the horizontal green line segments. Each segment contains the average value of its corresponding segment from $v$. \textbf{(Middle): }A matrix that represents the $4\times 5$ signal $D = \br{((1,1),3), ((1,2),4), ((1,3),5),\cdots}$ (in black) and a $3\times 2$ matrix that represents a $3\times 2$ sub-signal $B = \br{((1,2),4), ((1,3),4),((2,2),3),\cdots}$ (in purple). \textbf{(Right): }A matrix that represents a 5-segmentation $s$ of $A = [4]\times [5]$; see Definition~\ref{def:ksegmentation}. Since $s$ is a $5$-segmentation, it partitions the $[4]\times[5]$ matrix into $5$ distinct block matrices $B_1$ (red), $B_2$ (blue), $B_3$ (yellow), $B_4$ (green), and $B_5$ (black) such that $s$ assigns the same number for all the entries in the same block. The SSE fitting loss $\cost(D,s)$ is the sum of squared differences over every entry in the left matrix to its corresponding entry on the right matrix. Also, there is no $k$-tree that can obtain the same partition.}
  \label{fig:defsPic}
\end{figure}

\textbf{Why decision trees?}
Advantages of decision trees, especially compared to deep learning, include:\\
\textbf{(i) }Interpretability. They are among the most popular algorithms for interpretable (transparent) machine learning~\cite{hu2019optimal}.
\textbf{(ii) }Usually require small memory space, which also implies fast classification time.
\textbf{(iii) }Accuracy. Decision trees are considered as one of the few competitors of deep networks. 
In competitions, such as the ones in Kaggle~\cite{kaggle}, they are one of the favorite classifiers~\cite{bojer2021kaggle}, especially on small or traditional tabular data.
\textbf{(iv) }May learn from small training data.

\textbf{The goal }is usually to compute the optimal $k$-tree $t^*$ for a given dataset $D$ and a given number $k$ of leaves, according to some given loss function. 
In practice, researchers usually use ensemble of trees called forests, e.g., a Random Forest~\cite{breiman2001random}, which are usually learned from different subsets of the training set. The final classification is then based on a combination rule, such as majority or average vote. Since both the training and classification of each tree are computed independently and possibly in parallel, we focus on the construction of a single tree. 

\textbf{A \emph{dataset}} $D$ in this paper is a set $D=\{(x_1,y_1),\cdots,(x_N,y_N)\} \subseteq A\times\REAL$ of pairs, where $A$ is the \emph{feature space}. Each pair $(x,y)\in D$ consists of a database record (vector / sample) $x \in A$ and its real label $y \in \REAL$. 
As common, we assume that non-real features, such as categorical features, are converted to real numbers; see e.g.~\cite{hardy1993regression}.
For example, in common classification problems $A = \REAL^d$ for some $d\geq 1$ and $y\in\br{0,1}$ is a binary number.
The resulting model may be used for prediction on another test dataset, completion of missing values, or efficient storage of the original dataset by replacing the label $y$ of each pair $(x,y)\in D$ with the label $t(x)$ that was assigned to it by the tree $t$. The last technique is used e.g. in the MPEG4 encoder~\cite{marpe2006h}, where decision trees of a specific structure (quad-trees) are used to compress an image $D$ that consists of pixel-grayscale pairs $(x,y)$.


\textbf{Challenges. }
The motivation for this paper originated from the following challenges:\\
\textbf{(i) Sub-optimality.} Hardness of decision tree optimization is both a theoretical and practical obstacle~\cite{hu2019optimal}. It is NP-hard to compute the optimal $k$-tree, or its approximation, when the number $k$ is not fixed~\cite{chakaravarthy2007decision, laurent1976constructing}. 
There were several attempts to improve the optimality of decision tree algorithms, from binary-split decision trees as in~\cite{bennett1992decision,bertsimas2017optimal}, in a line of work of e.g.~\cite{blanquero2018optimal, verwer2019learning}. 
Nevertheless, greedy implementations e.g., CART~\cite{loh2011classification} and C4.5~\cite{quinlan2014c4} have remained the dominant methods in practice.\\
\textbf{(ii) Computation time.} Due to this lack of optimality, finding a decision tree that provides a good accuracy usually requires many runs, since each of them returns only a local minimum that might be arbitrarily far from the global optimum. The final model usually consists of a forest containing many trees. Popular forest implementations include the famous Sklearn, XGBoost, LightGBM, and CatBoost libraries~\cite{pedregosa2011scikit, chen2016xgboost, ke2017lightgbm, dorogush2018catboost}, which all utilize (as default) an ensemble of at least $100$ trees. 
Moreover, there is a list of dozen parameters to calibrate including: number of trees, depth of each tree, pruning/splitting strategies on each tree and between them, and many others. To this end, the running time for obtaining reasonable results even on moderate size datasets might be impractical.\\ 
\textbf{(iii) Scalability.} Existing techniques tend not to scale to realistically-sized problems unless simplified to trees of a specific form as stated in~\cite{hu2019optimal}.\\
\textbf{(iv) Streaming, parallel, and dynamic updates.} The common algorithms mentioned above do not support continuous learning or updating of the modeled tree when an input sample is either added or removed from the dataset, e.g., when the dataset does not fit into memory or arrives on-the-fly. Similarly, we do not know techniques to train a single tree in parallel on multiple machines.
\subsection{Coresets} \label{sec:coresets}
``Coresets are one of the central methods to facilitate the analysis of large data sets.''~\cite{munteanu2018on}.
Informally, for an input dataset $D$, a set $T$ of models, an approximation error $\eps\in(0,1)$, and a loss function $\cost$, a coreset $C$ is a data structure that approximates the loss $\cost(D,t)$ for every model $t\in T$, up to a multiplicative factor of $1\pm\eps$, in time that depends only on $|C|$. Hence, ideally, $C$ is also much smaller than ther original input $D$.

\textbf{Why coresets? }The main motivation for constructing a coreset is to compute the optimal model or its approximation, much faster, while sacrificing little accuracy. Furthermore, a coreset for a family of classifiers is many times a ``silver bullet'' that provides a unified solution to all Challenges (i)-(iv) above. 
Combining the two main coreset properties: merge and reduce~\cite{indyk2014composable, bentley1980decomposable, har2004coresets, agarwal2013mergeable}, which are usually satisfied, with the fact that a coreset approximates every model, and not just the optimal model, enables it to support streaming and distributed data~\cite{braverman2016new,lu2020robust}, parallel computation~\cite{feldman2020core}, handle constrained versions of the problem~\cite{feldman2015more}, model compression~\cite{dubey2018coreset}, parameter tuning~\cite{maalouf2019fast} and more.

\textbf{Coreset construction techniques. }There are many different techniques for coreset construction, ranging from loss-less to lossy, from deterministic to randomized, and from greedy to non-greedy constructions. 
Examples include accurate coresets~\cite{jubranoverview} and non-accurate coresets~\cite{feldman2020turning} via computational geometry, random sampling-based coresets~\cite{feldman2011unified,cohen2015uniform, maalouf2020faster}, and greedy deterministic coresets via the Frank-Wolfe algorithm~\cite{clarkson2010coresets}.
Recently, many works focus on developing frameworks for general families of loss functions, e.g.,~\cite{feldman2011unified, tukan2020coresets}.
We refer the interested reader to the surveys~\cite{agarwal2005geometric,agarwal2013mergeable,phillips2016coresets,braverman2016new,bachem2017practical,feldman2020core} with references therein.

\textbf{Practical usage.} 
Since a coreset is not just another solver that competes with existing solutions, but a data structure for approximating any given model in the family, we can apply \emph{existing} approximation algorithms or heuristics on the coreset to obtain similar results compared to the original (full) data.
Since the coreset is small, we may run these heuristics multiple times, or apply the hyperparameter tuning using the coreset~\cite{maalouf2019fast}, thus reducing the computational burden by orders of magnitude.

\textbf{Main challenges:}
\textbf{(i) }No coreset. Unfortunately, it is not clear at all that a small coreset exists for a given family of models. In fact, we can conclude from~\cite{rosman2014coresets} that a coreset for decision trees does not exist in general; see details below.
In this case, we can either give up on the coreset paradigm and develop a new solver, or add assumption on the input dataset, instead of targeting every possible dataset that may be very artificial and unrealistic, as the counter example in~\cite{rosman2014coresets}. In this paper, we choose the latter option.  
\textbf{(ii) }Unlike, say, uniform sampling, every problem formulation requires a different coreset construction, which may take years of research to design.

\subsection{First coreset for decision trees and their generalization} \label{sec:firstCoreset}

In this paper, we tackle a generalized and more complex set of models than decision trees, where, rather than a recursive binary partition, we allow the input feature space $\REAL^d$ to be partitioned into any $k$ disjoint axis-parallel hyper-rectangles; see Fig.~\ref{fig:defsPic}.
This generalization is essential in order to support future non-recursive and not necessarily binary classification models, e.g., ID3 and C4.5~\cite{hssina2014comparative, quinlan2014c4}. To our knowledge, this is the first coreset with provable guarantees for constructing decision trees. 

\begin{definition}[$k$-segmentation] \label{def:ksegmentation}
For an integer $d \geq 1$ that denotes the dimension of the \emph{feature space} $A = \REAL^d$, and an integer $k\geq1$ that denotes the size of the partition (number of leaves), a function $s:A \to \REAL$ is a \emph{$k$-segmentation} if there is a partition $\B = \br{B_1,\cdots,B_k}$ of $A$ into $k$ disjoint axis-parallel hyper-rectangles (blocks), such that $|\br{s(b) \mid b \in B}|=1$ for every block $B \in \B$, i.e., $s$ assigns a unique value for all the entries in each of its $k$ rectangles; see Fig.~\ref{fig:defsPic}.
We define the union over all possible such $k$-segmentations by $\S_{(k,d)}$.
\end{definition}

We now define our loss function, and an optimal $k$-segmentation model over some set $A\subseteq\REAL^d$.
\begin{definition}[Loss function] \label{def:optSeg}
For a dataset $D = \{(x_1,y_1),\cdots,(x_N,y_N)\} \subseteq A\times\REAL$, an integer $k \geq 1$, and a $k$-segmentation $s \in \S_{(k,d)}$, we define the \emph{sum of squared error (SSE) loss}
\[
\cost(D,s) := \sum_{(x,y) \in D} \left(s(x)-y\right)^2
\]
as the loss of fitting $s$ to $D$.
A $k$-segmentation $s^*$ is an \emph{optimal $k$-segmentation} of $D$ if it minimizes $\cost(D, s)$ over every $k$-segmentation $s\in\S_{(k,d)}$ i.e., $s^*\in \argmin_{s\in \S_{(k,d)}} \cost(D, s)$. The optimal SSE loss is denoted by $\OPT_k(D) := \cost(D, s^*)$.
\end{definition}
For example, the optimal $1$-segmentation $s^*$ of $D$ is the constant function $s^* \equiv \frac{1}{|D|} \sum_{(x,y) \in D} y$ since the mean of a set of numbers minimizes the sum of squared distances to the elements of the set. Also, $\OPT_{|D|}(D)=0$ for every dataset $D$.

We are now ready to formally define a coreset for the $k$-segmentation problem (and $k$-decision trees of at most $k$ leaves, in particular).
\begin{definition} [Coreset] \label{def:epsCoreset}
Let $D = \br{(x_1,y_1),\cdots,(x_n,y_n)} \subseteq A\times\REAL$ be an input dataset. Let $k \geq 1$ be an integer and $\eps\in (0,1)$ be the desired approximation error. A $(k,\eps)$-coreset for $D$ is a data structure $(C,u)$ where $C \subseteq A\times\REAL$ is an ordered set, and $u:C \to [0,\infty)$ is called a \emph{weight function}, such that $(C,u)$ suffices to approximate the loss $\cost(D,s)$ of the original dataset $D$, up to a multiplicative factor of $1\pm\varepsilon$, in time that depends only on $|C|$ and $k$, for any $k$-segmentation $s$.
\end{definition}

\textbf{Practical usage. }As defined above and discussed in Section~\ref{sec:coresets}, a coreset approximates every model in our set of models $\S_{(k,d)}$. Hence, a coreset for decision trees is clearly also a coreset for forests with an appropriate tuning for $k$, since every tree in the forest is approximated independently by the coreset.
We expect that applying existing heuristics (not necessarily with provable guarantees) such as sklearn~\cite{pedregosa2011scikit} or LightGBM~\cite{ke2017lightgbm} on the coreset, would yield similar results compared to the original data. Indeed, our experimental results in Section~\ref{sec:ER} validate those claims.

\textbf{No coreset for general datasets. }Unfortunately, even for the case of $k=4$ and $A \subseteq \REAL$, i.e., when the input is simply a one dimensional dataset $D = \br{(x_1,y_1),\cdots,(x_n,y_n)}$ where $x_1,\cdots,x_n$ are real numbers, and the labels $y_1,\cdots,y_n\in\br{0,1}$ have only binary values, it is easy to construct datasets which have provably no $k$-segmentation (or even $k$-tree) coreset of size smaller than $n$; see e.g.~\cite{rosman2014coresets}. 
Hence, there is no non-trivial decision tree coreset for general datasets of $n$ vectors in any dimension. 
However, as we prove in the rest of the paper, a coreset does exist for datasets where the input is a matrix, i.e., a discrite signal where every coordinate in the domain is assigned a label (value), rather than a random set of $n$ vectors.

\textbf{The first coreset for $n\times m$-signals. }
To overcome the above problem, while still obtaining a small coreset, we assume a discretization of the dataset so that every coordinate has a label. We also assume, mainly for simplicity and lack of space, 
that the input feature space is $A=[n]\times [m] \subseteq \REAL^2$.
That is, the input can be represented by an $n\times m$ matrix. The output coreset may contain fraction of entries, as in Fig.~\ref{fig:blockCoreset}, which is called an \emph{$n\times m$ signal}; see Section~\ref{sec:prelim}. Our assumption on the input data seems to be the weakest assumption that can enable us to have a provably small coreset for any input. Furthermore, it seems natural for e.g. images, matrices, or any input data from sensors (such as GPS) that has a value in every cell or continuous in some other sense.

\textbf{Previous work. } The prior works~\cite{rosman2014coresets, feldman2012single, volkov2015coresets}, which only handle the case of segmenting a $1$-dimensional signal, use relaxations similar to our relaxation above to obtain a coreset of size $O(k/\varepsilon^2)$.
However, our results apply easily for the case of vectors ($1$-dimensional signals) as in~\cite{rosman2014coresets} and generalize for tensors if $d\geq 3$.
We also give further applications, and provide extensive experiments with popular state of the art software.

A special case for $d=2$ includes image compression, where quadtrees are usually used in e.g. MPEG4 to replace the image by smooth blocks of different sizes~\cite{shusterman1994image}, or for completion of missing values~\cite{twala2009empirical}
Using dynamic programming, it is easy to compute the optimal tree of a 2D-signal $D$ in $O(k^2n^5)$ time~\cite{bellman1966dynamic}, which is impractical even for small datasets, \emph{unless applied on a small coreset of $D$}. However we do not know of any such coreset construction, for $d \geq 2$, with provable guarantees on its size.
To this end, the following questions are the motivation for this paper: 
\textbf{(i): }Is there a small coreset for any $n\times m$ signal (e.g. of sub-linear size)? 
\textbf{(ii): }If so, can it be computed efficiently?
\textbf{(iii): }Can it be used on real-world datasets to boost the performance of existing random forest implementations?

\textbf{Extensions. }
For simplicity, we focus on the classic sum of squared distances (SSE) or the risk minimization model~\cite{vapnik1992principles}. However, our suggested techniques mainly assume that a coreset for the case $k=1$ is known, which is trivial for SSE, but exists for many other loss functions e.g., non-squared distances; see Section~\ref{sec:conclusion}.



\subsection{Our Contribution} \label{sec:contrib}
For any given error parameter $\eps\in(0,1)$, and an integer $k\geq 1$, this paper answers affirmatively the above three questions. More formally, in this paper we provide:\\
\textbf{(i): }A proof that \emph{every} $n\times m$ signal $D$ has a $(k,\eps)$-coreset $(C,u)$ of size $|C|$ polynomial in $k\log(nm)/\varepsilon$. To our knowledge, this is the first coreset for decision trees whose size is smaller than the input; see Theorem~\ref{theorem:coreset}. Due to lack of space, our full proofs are given in the appendix.\\
\textbf{(ii): }A novel coreset construction algorithm that outputs such a coreset $(C,u)$ with the above guarantees, for every given input signal $D$. Its running time is $O(nmk)$, i.e., linear in the input size $|D|$. Unlike common coreset constructions, our algorithm is deterministic; see Algorithm~\ref{Coreset}.\\
\textbf{(iii): }Experimental results that apply modern solvers, such as the sklearn and LightGBM libraries, on this coreset for real-world public datasets. We measured the trade-off between the empirical accuracy of the resulting forests, and the coreset size; see Section~\ref{sec:ER}.\\
\textbf{(iv): }AutoML for decision trees: since the suggested coreset approximates \emph{every} tree of \emph{at most} $k$ leaves, we may use the same coreset for hyperparameter tuning. We demonstrate this in Section~\ref{sec:ER}, by calibrating the parameter $k$ using only the coreset, as compared to using the original (big) data.\\
\textbf{(v): }Open source code for our algorithms~\cite{opencode}. We expect that it will be used by both the academic and industry communities, not only to improve the running time of existing projects, but also to extend the algorithms and experimental results to other libraries and cost functions; see Section~\ref{sec:conclusion}.

\subsection{Novel technique: partition trees meet decision trees} \label{sec:novelty}
In a seminal paper~\cite{10.1145/10515.10522} during the 80's of the previous century,  Haussler and Welz introduced the importance of VC-dimension by Vapnik–Chervonenkis~\cite{vapnik2015uniform}.
Their main application was partition trees for answering range queries.

Informally, a \emph{partition tree} of a given set of points on the plane is the result of computing recursively a \emph{simplicial partition} which is defined as follows.
For a set $D$ of $N$ points on the plane, a \emph{$(1/\eps)$-simplicial partition} is the partition of $D$ into $O(1/\eps)$ subsets, such that: (i) each subset has at most $2 \eps N$ points, and (ii) Every line in the plane intersects the convex hull of at most $\sqrt{1/\eps}$ sets.
Answering range queries of the form ``how many points in $D$ are in a given rectangular'' in sub-linear time, using partition trees, is straightforward:
We can sum in $O(1/\eps)$ time the number of points in the subsets of the above simplicial partition that are not intersected by the query rectangular. We then continue recursively to count the points on each of the $\sqrt{1/\eps}$ intersected sets.

In other words, the main idea behind the above work is to partition the input into a (relatively small) number of subsets, each containing a fraction of the input, such that each query (in this case, a rectangular shape) might intersect only a small fraction of those subsets. Such a partition is termed a simplicial partition. The number of points contained in non-intersected subsets can be easily computed, while the sum of points in intersected subsets require a more involved solution. The novelty in that work is how to achieve such a partition of the input.

Our paper closes a loop in the sense that it forges links between decision trees in machine learning -- to partition trees from computational geometry. We aim to generalize the above technique from covering problems to regression and classification problems. This is by devising an algorithm which achieves the above requirements, but where the query is a decision tree (and not a rectangular shape), and the cost function is the sum of squared distances to the query and not the number of points.

More precisely, We partition the input dataset $D$ into a relatively small number of subsets, such that every possible decision tree (query) intersects at most few of these subsets. We then independently compress every subset via another novel algorithm such that the cost (sum of squared distances, in this case) of points contained in non-intersected subsets can be easily and accurately estimated, while the cost of points in intersected subsets can be provably approximated via a more involved calculation.
This is very different from existing coreset techniques that are sampling-based~\cite{langberg2010universal}, or utilize convex optimization greedy algorithms~\cite{clarkson2010coresets}.
Our main challenge was to define and design such a ``simplicial partition for sum of squared distances'', and the coreset to be computed for each subset in this partition.

\subsection{Preliminaries} \label{sec:prelim}
In this section we define the notation that will be used in the next sections.

Let $n\geq 1$ and denote $[n]=\br{1,\cdots,n}$.
An $n$-signal is a set $\{(x,f(x))\mid x\in [n]\}$ that is defined by a function $f:[n]\to\REAL$ (known as the graph of $f$). For an additional integer $m\geq1$, an $n\times m$ signal $D=\{(x,g(x))\mid x\in[n]\times [m]\}$ is the set that corresponds to a function $g:[n]\times [m]\to \REAL$. That is, $D$ represents an $n\times m$ real matrix whose size is $|D|=N=nm$.
For integers $i_1,i_2,j_1,j_2$ such that $1\leq i_1\leq i_2\leq n$ and $1\leq j_1\leq j_2\leq m$, an $n\times m$ sub-signal is the set
$B=\big\{(x,g(x))\mid x\in \{i_1,\cdots,i_2\}\times \{j_1,\cdots,j_2\}\big\} \subseteq D$; see Fig.~\ref{fig:defsPic}.
A sub-signal $B$ is called a \emph{row} (respectively, \emph{column}) if $i_1=i_2$ (respectively, $j_1=j_2$).
For a sub-signal $B$, we denote by ${B}^T = \br{((j,i),y) \mid ((i,j),y)\in B}$ the \emph{transposed sub-signal}. 
A $k$-segmentation $s$ is said to \emph{intersect} an $n\times m$ sub-signal $B$ if $s$ assigns at least two distinct values to the entries of $B$, i.e., $\left|\br{s(x) \mid (x,y) \in B}\right| \geq 2$. Furthermore, by definition, a $k$-segmentation $s$ induces a partition of an $n\times m$ sub-signal $B$ into at most $k$ $n\times m$ sub-signals. 
For two functions $f,g:\REAL \to \REAL$ we use the big $O$ notation $f(x) \in O(g(x))$, thinking of $O(g(x))$ as the class of all functions $h(x)$ such that $|h(x)| \leq c|g(x)|$ for every $x>x_0$, for some constants $c$ and $x_0$.
Lastly, we denote $\S_k:= \S_{(k,2)}$ for brevity.

\textbf{Paper organization.} Section~\ref{sec:bicretiria} provides a rough approximation to the $k$-segmentation problem. Section~\ref{sec:balancedPartition} provides an algorithm for computing a simplicial partition for the $k$-segmentation problem. Each region in this partition will be then compressed individually in Section~\ref{sec:coreset} to obtain our desired coreset.
Experimental results and discussions are given in Section~\ref{sec:ER}, and a conclusion in Section~\ref{sec:conclusion}. 

\section{Bi-criteria Approximation} \label{sec:bicretiria}
A coreset construction usually requires some rough approximation to the optimal solution as its input.
Unfortunately, we do not know how to \emph{efficiently} compute even a constant factor approximation to the optimal $k$-segmentation problem in Definition~\ref{def:optSeg}, as explained in Section~\ref{sec:intro}.
Instead, we provide an $(\alpha,\beta)_k$ or bi-criteria approximation~\cite{feldman2011unified}, where the approximation is with respect to a pair of parameters: the number of segments in the partition may be up to $\beta k$ instead of $k$, and the loss may be up to $\alpha \cdot \OPT_k(D)$ instead of $\OPT_k(D)$.

\begin{definition} [$(\alpha,\beta)_k$-approximation.] \label{def:abApprox}
Let $D$ be an $n\times m$ sub-signal, $k \geq 1$ be an integer and let $\alpha, \beta > 1$.
A function $s:[n]\times[m] \to \REAL$ is an $(\alpha,\beta)_k$-approximation of $D$, if $s$ is a $\beta k$-segmentation whose fitting loss to $D$ is at most $\alpha$ times the loss of the optimal $k$-segmentation of $D$, i.e., $s \in \S_{(\beta k)}$ and
$\cost(D, s) = \sum_{(x,y) \in D} \left(s(x)-y\right)^2 \leq \alpha \cdot \OPT_k(D).$
\end{definition}

We now describe an algorithm that computes such an approximation, in time only linear in the input's size $|D|=nm$. The following lemma gives the formal statement. A suggested implementation for the algorithm is given in the appendix, as well as the full proof of the lemma; see Section~\ref{sec:bicretiria_appendix}.
\begin{lemma} \label{lem:ab}
Let $D = \br{(x_1,y_1),\cdots,(x_{nm},y_{nm})}$ be an $n\times m$ sub-signal and $k \geq 1$ be an integer. Then, there is an algorithm that can compute, in $O(knm)$ time, an $(\alpha,\beta)_k$-approximation for $D$, where $\alpha \in O(k\log (nm))$ and $\beta \in k^{O(1)} \log^2{(nm)}$.
\end{lemma}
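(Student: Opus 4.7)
The plan is to reduce the 2D bi-criteria problem to the 1D case and apply a standard 1D bi-criteria subroutine twice. First I would establish a 1D bi-criteria: given a signal of length $N$ and an integer $k \geq 1$, in $O(kN)$ time one can compute an $(O(\log N), O(k \log N))_k$-approximation. This is done by building a balanced binary partition tree over $[N]$, recording at each internal node the SSE of the $1$-mean assignment on its interval (computable in $O(1)$ per node from prefix sums of values and squared values), and then greedily selecting $O(k \log N)$ tree-intervals whose cumulative SSE approximates $\OPT_k$. The key structural fact is that any $k$-segmentation of $[N]$ can be refined by the tree into at most $O(k \log N)$ dyadic intervals while inflating the SSE by at most an $O(\log N)$ factor.

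Next I would lift this to 2D using two passes. In Pass~1, I treat $D$ as a sequence of $n$ rows and apply the 1D bi-criteria on rows, where the cost of a row-strip (a union of consecutive rows) is the SSE incurred by replacing each column of the strip by its in-strip column-mean; this yields a partition into $\beta_1 k = O(k \log n)$ horizontal strips. In Pass~2, I treat each resulting strip independently as a 1D signal across $m$ columns (each column represented by its in-strip mean, weighted by the strip height) and apply the 1D bi-criteria again to split it into $\beta_2 k = O(k \log m)$ vertical slices. The collection of all resulting rectangles forms a $\beta k$-segmentation with $\beta k = O(k^2 \log n \log m) \in k^{O(1)} \log^2(nm)$.

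For the cost analysis, I would argue that the optimal $k$-segmentation $s^*$ is refined by a grid with at most $O(k)$ horizontal cut-lines and $O(k)$ vertical cut-lines (the axis-projections of the $k$ rectangle boundaries), so $s^*$ is dominated by a grid competitor with $O(k^2)$ cells. Pass~1 pays an $O(\log n)$ factor against the best row-aligned competitor derived from this grid-refinement of $s^*$, and Pass~2 pays an additional $O(\log m)$ factor within each strip; using the Pythagorean decomposition of SSE into within-strip and between-strip components, these two losses add rather than multiply, giving a total multiplicative blow-up of $O(k \log(nm))$ matching $\alpha$. The $O(knm)$ running time follows because all row-interval and column-interval costs can be evaluated in $O(1)$ from $O(nm)$-sized 2D prefix sums of entries and squared entries, and each 1D bi-criteria call is linear in its input times $k$.

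The main obstacle is making the two passes compose cleanly. I must carefully choose the per-strip cost in Pass~1 so that (i) it is computable in $O(1)$ amortized time for any row-interval from precomputed prefix sums, and (ii) the $O(\log n)$-approximation guarantee of the 1D subroutine on this auxiliary cost translates into a bound against $\OPT_k(D)$, not merely against a 1D optimum. Because $s^*$ may place rectangle boundaries through the middle of any row, the best row-interval cost competitor to $s^*$ itself loses a factor of $O(k)$ against $\OPT_k(D)$, which is where the extra $k$ in $\alpha = O(k \log(nm))$ comes from; making this charging argument precise is the central technical step.
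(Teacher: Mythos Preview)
Your two-pass hierarchical approach is genuinely different from the paper's iterative ``peeling'' construction. The paper does not reduce to a 1D subroutine at all; instead it proves a structural lemma (Lemma~\ref{lem:partition}) that, for any $n\times m$ sub-signal, finds in $O(N)$ time a family of $O(k^3)$ disjoint blocks whose total $\OPT_1$ is at most $\OPT_k(D)$ and whose union covers an $\Omega(1/k)$ fraction of the remaining points. Repeating this $O(k\log N)$ times covers everything, which is where $\alpha=O(k\log N)$ comes from; squaring the block count to get actual rectangle boundaries gives the $\beta=k^{O(1)}\log^2 N$. There is no row/column two-pass and no dyadic tree.

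Your proposal has a real gap in the composition step, and it is precisely the step you flag as ``central'' without carrying it out. After Pass~1 you hold strips $S_1,\dots,S_r$ that need \emph{not} respect the horizontal cut-lines of the grid $G$ refining $s^*$. For Pass~2 you want to charge, within each $S_i$, against the $G$-column partition; via the Pythagorean identity this requires bounding $\sum_{i,l}\OPT_1(S_i\times J_l)$ by $O(\cdot)\,\OPT_k(D)$. For strips $S_i$ lying inside a single $G$-row this sum is indeed at most $\OPT_k(D)$ by subadditivity of $\OPT_1$ under partitions. But for each of the up to $O(k)$ ``bad'' strips that straddle a $G$-row boundary, $\sum_l \OPT_1(S_i\times J_l)$ can be as large as $\OPT_1(S_i)$, which is not controlled by $\OPT_k(D)$ (the between-$G$-row variance inside $S_i\times J_l$ has no relation to the optimal cost). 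No factor of $O(k)$ repairs this: it is not the number of bad strips but their uncontrolled per-strip cost that breaks the bound. Your remark that ``the best row-interval cost competitor loses a factor of $O(k)$'' is also off target---the Pass~1 competitor built from $G$'s $O(k)$ horizontal lines has column-wise SSE at most $\OPT_k(D)$ with no $k$-loss; the trouble is entirely downstream in Pass~2.

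A secondary issue: your 1D bi-criteria sketch is too vague to be checked. Refining a $k$-segmentation into $O(k\log N)$ dyadic intervals can only \emph{decrease} the SSE, not ``inflate it by $O(\log N)$'', so the stated structural fact does not by itself yield a selection procedure; you would need to say how the greedy actually picks intervals without knowing the optimal breakpoints and why its output cost is within $O(\log N)$ of $\OPT_k$.
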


\textbf{Overview of the bicriteria algorithm from Lemma~\ref{lem:ab}: }
The algorithm is iterative and works as follows. At the $i$th iteration, we find a collection $\B_i$ of at most $t$ disjoint sub-signals in $D_i$ (where $D_0 = D$ is the input), for which: 
(i) $\sum_{B \in \B_i} \OPT_1(B) \leq \OPT_k(D_i) \leq \OPT_k(D)$, and 
(ii)  $\cup_{B \in \B_i} B$ has size $ \abs{\cup_{B \in \B_i} B} \geq |D_{i}|/c$ for some parameter $c$ that depends on $k$, i.e., those sub-signals contain at least a $1/c$ fraction of $D_i$. We then define $D_{i+1} = D_i \setminus \cup_{B \in \B_i} B$.
After repeating this for at most $\psi \in O(c\log (nm))$ iterations, we end up covering all entries of $D$ with sub-signals where the overall loss of the sub-signals in each iteration is at most $\OPT_k(D)$.
This defines a partition of $D$ into a collection of at most $t \psi$ disjoint sets $\B'$, which, in turn, define a set of at most $(t\psi)^2$ distinct sub-signals. The output is now simply the function $s$ that assigns, for every $B \in \B'$ and $b \in B$, the mean value $s(b) = \frac{1}{\abs{B}} \sum_{((i,j),y) \in B}y$ of $B$. See Pseudo-code in Algorithm~\ref{Alg:bicriteria} at the appendix.

\section{Balanced Partition}\label{sec:balancedPartition}

In this section we present Algorithm~\ref{PTpartitionAllDims}, which computes a partition similar to the simplicial partition described in Section~\ref{sec:novelty};
It computes, in $O(|D|)$ time, a partition $\B$ of the input $D$ that satisfies the following properties: (i) $|\B|$ depends on $k/\varepsilon$ but independent of $|D|$, (ii) the loss $\OPT_1(B)$ of every $B \in \B$ is small, and (iii) every $k$-segmentation $s$ intersects only few sub-signals $B \in \B$; see Definition~\ref{def:balancedPartition}, Fig.~\ref{fig:balancedPartition}, and Lemma~\ref{PTpartition2D}. A full proof is given at the appendix; see Section~\ref{sec:balancedPartition_appendix}.

\begin{figure}[h]
  \centering
  \includegraphics[width=\textwidth]{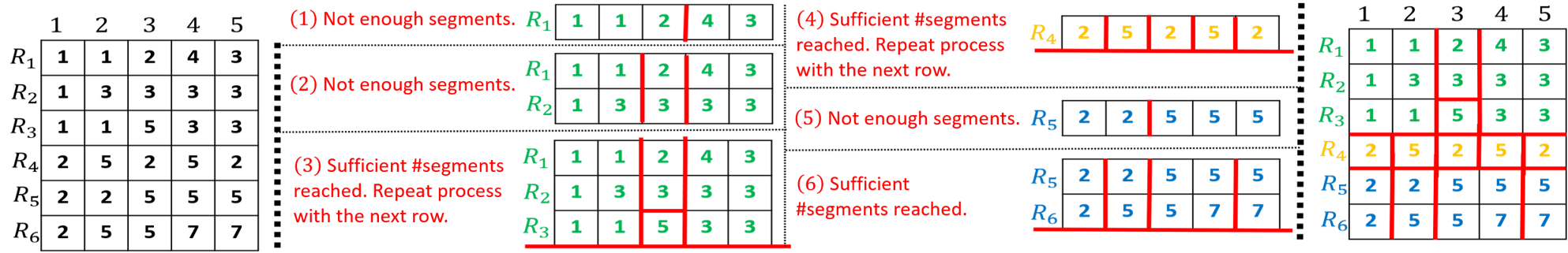}
  \caption{\textbf{(Left): }A $6\times 5$ signal $D$ consisting of $6$ rows $R_1,\cdots,R_6$. \textbf{(Middle): }A step by step illustration of the call to $\B := \partitionAlg(D,1/4,64)$ which partitions $D$ into $|\B|=13$ sub-signals (sub-matrices) as follows. (1) $\B' := \partitionAlgOneD(\br{R_1}, 4)$ (top green row). (2) $\B' := \partitionAlgOneD(R_1 \cup R_2, 4)$ (middle green matrix), and so on as long as the output contains at most $|\B'| \leq 1/\gamma = 4$ sub-signals (as in the bottom green matrix (3)). We then append $B'$ to the output $\B$, and repeat with the remaining $\br{R_4, R_5, R_6}$. (4) $\B' := \partitionAlgOneD(\br{R_4}, 4)$ which already returns $|\B'| = 5 > 1/\gamma$ signals (yellow matrix). We append them to $\B$ and repeat.
  \textbf{(Right): }The final partition $\B$, where $\OPT_1(B) \leq \gamma^2\sigma = 1/4^2 \cdot 64 = 4$ for every $B \in \B$.}
  \label{fig:balancedPartition}
\end{figure}

\begin{definition} [Balanced Partition] \label{def:balancedPartition}
Let $D$ be an $n\times m$ signal, $k\geq 1$ be an integer, and $c_1,c_2,c_3 > 0$. A $(c_1,c_2,c_3)_k$-balanced partition of $D$ is a partition $\B$ of $D$ such that: (i) $\B$ contains $|\B| \leq c_1$ $n\times m$ sub-signals, (ii) $\OPT_1(B) \leq c_2$ for every $B \in \B$, and (iii) every $k$-segmentation $\hat{s}$ intersects at most $c_3$ sub-signals $B \in \B$ (i.e., assigns more than one unique number to those sub-signals).
\end{definition}

\begin{lemma} \label{PTpartition2D}
Let $D$ be an $n\times m$ signal, $k\geq 1$ be an integer, $\varepsilon \in (0,1/4)$ be an error parameter, and
$s:[n]\times[m] \to \REAL$ be an $(\alpha,\beta)_k$-approximation of
$D$, where $\alpha,\beta > 1$. Define $\sigma := \frac{\cost(D,s)}{\alpha}$ and $\gamma :=
\frac{\varepsilon^2}{\beta k}$. 
Let $\B$ be an output of a call to $\partitionAlg(D,\gamma,\sigma)$; see Algorithm~\ref{PTpartitionAllDims}.
Then $\B$ an $\left(O\left(\frac{\alpha}{\gamma^2}\right), \gamma^2\sigma, O\left(\frac{k\alpha}{\gamma}\right)\right)_k$-balanced partition of $D$. Moreover, $\B$ can be computed in $O(nm)$ time.
\end{lemma}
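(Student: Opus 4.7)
My plan is to analyze the algorithm $\partitionAlg$ sketched in Fig.~\ref{fig:balancedPartition} (Algorithm~\ref{PTpartitionAllDims}): it greedily grows horizontal \emph{bands} of consecutive rows, and for each band it invokes the 1D subroutine $\partitionAlgOneD(\cdot,\gamma^2\sigma)$, which returns a minimal partition of the band into vertical \emph{strips} whose individual $\OPT_1$ is at most $\gamma^2\sigma$. A band is committed once adding one more row would push its strip count above $1/\gamma$; a fresh single-row band that already exceeds $1/\gamma$ strips is committed as a ``bail-out'' band. The $O(nm)$ running time follows because $\OPT_1$ of a growing strip can be maintained in $O(1)$ via prefix sums of values and squared values, and each row is touched a constant number of times across the outer loop.

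Property (ii) is immediate from the stopping rule of $\partitionAlgOneD$. For (i) I combine two ingredients. The first is \emph{greedy maximality}: for any two consecutive strips $B,B'$ within a band, $\OPT_1(B\cup B')>\gamma^2\sigma$, since otherwise $\partitionAlgOneD$ would have merged them. The second is a \emph{bicriteria refinement}: since $s$ is constant on each of its $\beta k$ rectangles, for two consecutive strips lying in a common $s$-rectangle $R$ with value $c_R$, $\OPT_1(B\cup B')\le\cost(B\cup B',c_R)$, so that summing over all such ``same-rectangle'' pairs is bounded by $\cost(D,s)=\alpha\sigma$, yielding at most $O(\alpha/\gamma^2)$ of them. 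The remaining pairs straddle an $s$-boundary; since there are $O(\beta k)$ vertical $s$-boundaries and each can be the junction of at most one cross-rectangle pair per band, their count is $O(q\beta k)$. Combining, $|\B|=O(\alpha/\gamma^2+q\beta k)$, which reduces to $O(\alpha/\gamma^2)$ once $q=O(\alpha/\gamma)$ is established (the parameter choices make $q\beta k\le\alpha/\gamma^2$ since $\varepsilon<1$).

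For (iii), I decompose the boundary of any $k$-segmentation $\hat s$ into $O(k)$ horizontal and $O(k)$ vertical axis-parallel segments. A vertical segment crosses at most one strip per band (the strips of a band partition its column range), contributing $O(kq)$ intersections total. A horizontal segment at row $r$ can intersect a strip only in bands whose row range \emph{strictly} contains $r$; this excludes every single-row bail-out band (which has no strict interior), and in any good band it contributes at most $1/\gamma$ intersections, summing to $O(k/\gamma)$. Bounding the number of bands $q=O(\alpha/\gamma)$---bail-out bands contribute $\le|\B|\cdot\gamma=O(\alpha/\gamma)$ via their per-band strip-count lower bound $>1/\gamma$, and a parallel cost-charging argument handles good bands---gives the desired $O(k\alpha/\gamma)$.

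The main obstacle I expect is the cost-accounting bound on the number of good bands $q$: good bands are chosen purely from strip-count criteria local to $\partitionAlgOneD$ and independent of $s$, so one must argue that each good-band commit certifies at least $\Omega(\gamma\sigma)$ of ``consumable'' cost. The natural way is a pair argument on the band \emph{extended} by the refused first row of the next band: its strip count exceeds $1/\gamma$, so pair-summing yields $\OPT_1$ at least $\Omega(\gamma\sigma)$, which can then be charged against $\cost(D,s)=\alpha\sigma$ while ensuring the charges across consecutive bands do not overlap (each ``refused'' row is associated with a single commit event). All other pieces---greedy maximality, the refinement step, the bail-out accounting, and the intersection counting in (iii)---are routine once this charging lemma is in place.
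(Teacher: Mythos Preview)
Your approach shares the paper's core ingredients---greedy maximality of strip pairs, cost-charging against the bicriteria witness $s$, and band-based intersection counting---but the order you impose creates one real problem. You bound $|\B|$ in terms of the band count $q$ (paragraph on (i)) and then, when bounding $q$ (paragraph on (iii)), you bound the number of bail-out bands by $|\B|\cdot\gamma$. That is circular: $|\B|=O(\alpha/\gamma^2)$ has not yet been established at the point where you invoke it. The paper breaks the loop by proving the band bound $q=O(\alpha/\gamma)$ \emph{first}, independently of $|\B|$: for any two consecutive bands not horizontally cut by $s$, the greedy stopping rule of $\partitionAlg$ guarantees their union would be split by $\partitionAlgOneD$ into more than $1/\gamma$ pieces, so the pair lemma gives cost to $s$ at least $\Omega(\gamma\sigma)$; hence at most $O(\alpha/\gamma)$ such pairs, plus $O(\beta k)$ bands horizontally cut by $s$. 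Bail-out bands are then handled by \emph{the same} direct cost-charging (a single-row band with $>1/\gamma$ strips already has cost $\Omega(\gamma\sigma)$ to $s$), not via $|\B|$. With $q$ in hand, good bands contribute at most $q/\gamma$ cells, and cells in bail-out bands are bounded by summing their per-band cost lower bounds against $\alpha\sigma$. Your extended-band charging for good bands is essentially this argument; the fix is simply to apply it to bail-out bands as well instead of routing through $|\B|$.

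A second, smaller gap: your greedy-maximality claim that \emph{every} consecutive strip pair satisfies $\OPT_1(B\cup B')>\gamma^2\sigma$ overlooks the recursive branch of $\partitionAlgOneD$, where a single column with $\OPT_1>\gamma^2\sigma$ is split horizontally. A grown strip adjacent to one horizontal fragment of such a column need not satisfy the inequality. The paper's auxiliary lemma on $\partitionAlgOneD$ only asserts the inequality for at least a quarter of the consecutive pairs (those where both members arise from the same branch), which is enough for every charging step you need. You also implicitly assume that only vertical $s$-boundaries create ``cross-rectangle'' strip pairs; horizontal $s$-boundaries inside a multi-row band do too, but only in the $O(\beta k)$ bands that $s$ cuts horizontally, contributing a harmless extra $O(\beta k/\gamma)$ term.
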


\textbf{Overview of Algorithms~\ref{PTpartitionOneDim} and~\ref{PTpartitionAllDims}.}
Algorithm~\ref{PTpartitionAllDims} gets as input an $n\times m$-signal $D$ and two parameters $\sigma, \gamma$. 
Algorithm~\ref{PTpartitionAllDims} aims to compute a balanced partition of $D$; see Fig.~\ref{fig:balancedPartition}. In turn, it calls Algorithm \ref{PTpartitionOneDim}, which takes as input an $n\times m$ sub-signal $R$ that is defined
by several contiguous rows of the original dataset $D$, and a parameter $\sigma > 0$, and aims to compute a partition $\B$ of $R$. 
To do so, Algorithm \ref{PTpartitionOneDim} partitions $R$ along the vertical dimension (e.g., into vertical slices), in a greedy fashion, such that for every $B \in \B$, $\OPT_1(B)$ is as large as possible, while still upper bounded by $\sigma$. 
This will ensure that the partition is into a relatively small number of slices. In the case where one of the sub-signals $B$ in this vertical partition of $R$ contains only one column, and already exceeds the maximum tolerance $\OPT_1(B) > \sigma$, we recursively apply Algorithm~\ref{PTpartitionOneDim} to $B^T$ in order to partition $B$ horizontally.
As long as the total number of slices returned by Algorithm~\ref{PTpartitionOneDim} is smaller than $1/\gamma$, Algorithm~\ref{PTpartitionAllDims} adds yet another row to the previous set of rows, and repeats the above process. 
At this point, the partition of the current horizontal slice (collection of rows) $R$ is final, and is added to the output partition of Algorithm~\ref{PTpartitionAllDims}. In turn, a new horizontal slice $R$ of just one row, the
first row of $D$ that is not included in the previous $R$, is initiated on which we again call
Algorithm \ref{PTpartitionOneDim}.

\begin{minipage}{0.51\textwidth}
\begin{algorithm}[H]
    \caption{\textsc{$\partitionAlgOneD(D, \sigma)$}}
    \label{PTpartitionOneDim}
    \SetKwInOut{Input}{Input}
	\SetKwInOut{Output}{Output}
    \Input{A parameter $\sigma > 0$ and an $n\times m$ signal $D = \br{(x_i,y_i)}_{i=1}^N$.}
    \Output{A partition $\B$ of $D$.}

    $\B := \emptyset$ and $c_{begin} := 1$
    
    \While{$c_{begin} \leq m$} 
    {
    
        $B := \br{\left((i,j),y\right)\in D \mid j = c_{begin}}$ \tcp{extract first column}
        
        \If{$\OPT_1(B) > \sigma$}
        {
              
                  $\B' := \partitionAlgOneD(B^T, \sigma)$ \label{line:1dRecursive} 
                  
                 $\B := \B \cup \br{{B'}^T\mid B' \in \B'}$ \label{line:addToOutput1D_2}
              $c_{begin} := c_{begin}+1$
              
        } \Else 
        {
            $c_{end} := c_{begin}$
            
            \While{$\OPT_1(B) \leq \sigma \text{ and } c_{end} < m$ \label{line:condOptIsBig}}
            {
                $c_{end} := c_{end} + 1$ and $lastB := B$
                
                $B := \br{\left((i,j),y\right)\in D \mid i \in [c_{begin} ,c_{end}]}$
                
            }
            
            $\B := \B \cup \br{lastB}$ \label{line:addToOutput1D_3}
            
            $c_{begin} := c_{end}$
        }
    }
    
    \Return $\B$
\end{algorithm}
\end{minipage}
\hfill
\begin{minipage}{0.47\textwidth}
\begin{algorithm}[H]
    \caption{\textsc{$\partitionAlg(D,\gamma, \sigma)$}}
    \label{PTpartitionAllDims}
    \SetKwInOut{Input}{Input}
	\SetKwInOut{Output}{Output}
    \Input{An $n\times m$ signal $D$, a parameter $\gamma \in (0,1)$, and a lower bound $\sigma \in [0, \OPT_k(D)]$.}
    \Output{ A partition $\B$ of $D$; see Lemma~\ref{PTpartition2D}}

    $\B := \emptyset$ and $r_{begin}:=1$
    
    \While{$r_{begin} \leq n$} 
    {
        $R := \br{\left((i,j),y\right)\in D \mid i = r_{begin}}$ \tcp{extract first row}
        
        $\B' := \partitionAlgOneD(R, \gamma^2\sigma)$  
        
        $r_{end} := r_{begin}$
        
        $last\B' := \B'$
        
        \While{$|\B'| \leq 1/\gamma$ and $r_{end} < n$ } 
        {
        
             $r_{end} := r_{end} + 1$
            
            $last\B' := \B'$
            
            $S := \br{\left((i,j),y\right)\in D \mid i \in [r_{begin},r_{end}]}$ \tcp{extract a slice}
            
            $\B' = \partitionAlgOneD(S, \gamma^2\sigma)$ \label{line:partitionSlice}
        } 
        
        $\B := \B \cup last\B'$ \label{line:addToOutput}
        
        $r_{begin} := r_{end}$
    }

    \Return $\B$
\end{algorithm}
\end{minipage}

\section{Coreset Construction} \label{sec:coreset}
In this section, we present our main algorithm (Algorithm~\ref{Coreset}), which outputs a $(k,\varepsilon)$-coreset for a given $n\times m$ signal $D$, the number of leaves $k\geq1$, and an approximation error $\eps\in (0,1)$.

\textbf{Overview of Algorithm~\ref{Coreset}:} The algorithm first utilizes the $(\alpha,\beta)_k$-approximation from Section~\ref{sec:bicretiria} to obtain a lower bound $\sigma \leq \OPT_k(D)$ for the optimal $k$-segmentation. It then computes, as described in Section~\ref{sec:balancedPartition}, a balanced partition $\B$ of $D$, where $\OPT_1(B)$ is small and depends on $\sigma$, for every $B \in \B$. 
Finally, it computes a small representation $(C_B,u_B)$ for every $B \in \B$, and returns the union of those representations. Each such pair $(C_B,u_B)$ satisfies: (i) $|C_B| = 4$, and (ii) has the same weighted sum of values, weighted sum of squared values, and sum of weights, as $B$, i.e., $ \sum_{(a,b) \in C_B} u_B((a,b))\cdot (b \mid b^2 \mid 1) = \sum_{(x,y) \in B}(y \mid y^2 \mid 1)$; see Fig.~\ref{fig:blockCoreset}. Such a representation can be computed using Caratheodory's theorem, as explained in Section~\ref{sec:cara} of the supplementary material.

\begin{figure}[h]
\centering
\includegraphics[width=\textwidth]{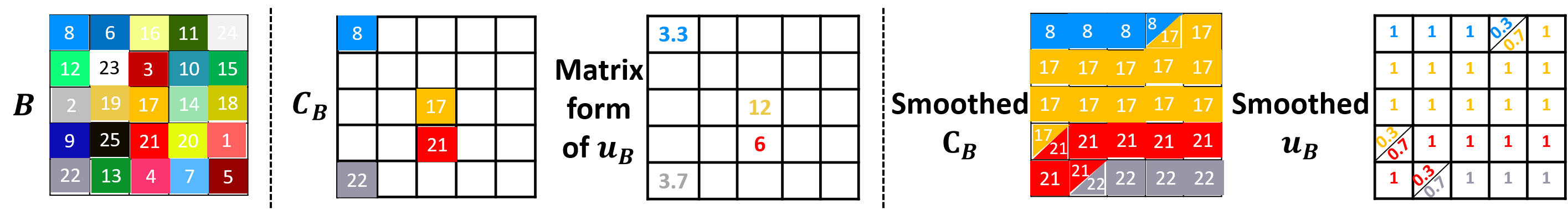}
\caption{\textbf{(Left): }A matrix representing of a $5\times 5$ sub-signal $B$ where $y$ is mapped into unique colors for every $(x,y) \in B$. \textbf{(Middle): }A representative (coreset) pair $(C_B,u_B)$ for $B$ where $C_B \subseteq B$ is a (small) subset and $u_B:C_B \to [0,\infty)$ is a weight function. That is, the pair $(C_B,u_B)$ satisfies $\sum_{(a,b)\in C_B}u_B((a,b)) \cdot (b \mid b^2 \mid 1) = \sum_{(x,y) \in B}(y \mid y^2 \mid 1)$. \textbf{(Right): }A duplication of the coreset points according to their weight. We call the resulting pair a ``smoothed'' version of $(C_B,u_B)$; see more details and formal definition in Section~\ref{sec:coresetConstruction_appendix} of the supplementary material.}
\label{fig:blockCoreset}
\end{figure}

\textbf{Some intuition behind Algorithm~\ref{Coreset}: }Consider some $k$-segmentation $s$. 
By the properties of the balanced partition $\B$ of $D$, only a small number of sub-signals $B \in \B$ are intersected by $s$, i.e., assigned at least $2$ distinct values. For every non-intersected sub-signal $B \in \B$, the loss $\cost(B,s)$ is accurately estimated by the (coreset) pair $(C_B,u_B)$. On the other hand, for every sub-signal $B \in \B$ which is intersected by $s$, by the guarantees of the representation $(C_B,u_B)$, the loss $\cost(B,s)$ will be approximated, using only $(C_B,u_B)$, up to some small error that depends on $\OPT_1(B)$. However, again by the properties of $\B$, we have that $\OPT_1(B)$ is small. Hence, using the union $(C,u)$ of the representations we can approximate $\cost(D,s)$ as required. Furthermore, combining that $|C_B| \in O(1)$ for every $B \in \B$ with the fact that $|\B|$ is small yields that $|C|$ is indeed small; see Theorem~\ref{theorem:coreset}.

\begin{algorithm}[h]
    \caption{\textsc{$\coresetAlg(D,k,\varepsilon)$}; see Theorem~\ref{theorem:coreset}}
    \label{Coreset}
    \SetKwInOut{Input}{Input}
	\SetKwInOut{Output}{Output}
    \Input{An $n\times m$ signal $D$, an integer $k \geq 1$, and an error parameter $\varepsilon \in (0,1/4)$.}
    \Output{A $(k,\varepsilon)$-coreset $(C, u)$ for $D$.}

    $s := $ an $(\alpha,\beta)_k$ approximation of $D$ for $\alpha \in O(k\log (nm))$ and $\beta \in k^{O(1)} \log^2{(nm)}$ \label{line:callBic} \texttt{;see Lemma~\ref{lem:ab} for suggested implementation.}

    $\gamma := \varepsilon^2/(\beta k)$, $\sigma := \frac{\cost(D, s)}{\alpha}$ and $C := \emptyset$ \label{line:consts}

    $\B := \partitionAlg(D, \gamma, \sigma)$ \label{line:callPartition} \tcp{see Algorithm~\ref{PTpartitionAllDims}.}

    \For{every set $B \in \B$ \label{line:everyBlock}} 
    {
        $(C_B, u_B) :=$ a $(1,0)$-coreset for $B$, (a zero error coreset for $k=1$), such that $C_B\subseteq B$, $|C_B| = 4$, and $ \sum_{(a,b) \in C_B} u_B((a,b))\cdot (b \mid b^2 \mid 1) = \sum_{(x,y) \in B}(y \mid y^2 \mid 1)$ 
    \label{line:compBlockCoreset} \texttt{this is done using Caratheodory's theorem; see Corollary~\ref{cor:cara} in the appendix.} 
    
    Replace each of the coordinates $a$ of the $4$ pairs $(a,b) \in C$ with one of the $4$ corner coordinates of the pairs in $B$ \label{Line:replaceCoords} \texttt{; see detailed explanation if the proof of Theorem~\ref{theorem:coreset}.}
    
        $C := C \cup C_B$ and $u((a,b)) := u_B((a,b))$ for every $(a,b) \in C_B$.
    }

    \Return $(C, u)$
\end{algorithm}

\begin{theorem}[Coreset] \label{theorem:coreset}
Let $D =\br{(x_1,y_1),\cdots,(x_{N},y_{N})}$ be an $n\times m$ signal i.e., $N := nm$. Let $k\geq 1$ be an integer (that corresponds to the number of leaves/rectangles), and $\varepsilon \in (0,1/4)$ be an error parameter. Let $(C,u)$ be the output of a call to $\coresetAlg(D,k,\varepsilon/\Delta)$ for a sufficiently large constant $\Delta\geq 1$; see Algorithm~\ref{Coreset}. Then, $(C,u)$ is a $(k,\varepsilon)$-coreset for $D$ of size $|C| \in \frac{(k\log(N))^{O(1)}}{\varepsilon^4}$; see Definition~\ref{def:epsCoreset}. Moreover, $(C, u)$ can be computed in $O(kN)$ time.
\end{theorem}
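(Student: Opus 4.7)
\textbf{Proof plan for Theorem~\ref{theorem:coreset}.}
The plan is to analyze Algorithm~\ref{Coreset} block-by-block: fixing a $k$-segmentation $s$, I would split the blocks of the balanced partition $\B$ into those on which $s$ is constant (\emph{good}) and those it intersects (\emph{bad}), argue zero error on good blocks, and absorb the error on the few bad blocks into the $\OPT_1(B)$ budget from Lemma~\ref{PTpartition2D}. Before the correctness argument I would dispose of the size and time bookkeeping: line~\ref{line:callBic} costs $O(kN)$ by Lemma~\ref{lem:ab} and returns $\alpha\in O(k\log N)$, $\beta\in k^{O(1)}\log^2 N$; line~\ref{line:callPartition} costs $O(N)$ by Lemma~\ref{PTpartition2D} and returns $|\B|\in O(\alpha/\gamma^2)$ blocks, each with $\OPT_1(B)\le\gamma^2\sigma$, and where at most $O(k\alpha/\gamma)$ are intersected by any $k$-segmentation; each Caratheodory step on line~\ref{line:compBlockCoreset} is $O(|B|)$ and contributes exactly four points to $C$. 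Summing gives total time $O(kN)$, and after substituting $\gamma=(\varepsilon/\Delta)^2/(\beta k)$ one obtains $|C|=4|\B|\in (k\log N)^{O(1)}/\varepsilon^4$.

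For the approximation bound, I would fix an arbitrary $s\in\S_k$ and write
\[
\cost(D,s)=\sum_{B\in\B}\cost(B,s),\qquad \sum_{B\in\B}\sum_{(a,b)\in C_B}u_B((a,b))\,(s(a)-b)^2
\]
for the true loss and the coreset estimate. For a good block with $s\equiv v$ on $B$, expanding $(v-b)^2=v^2-2vb+b^2$ and invoking that $(C_B,u_B)$ preserves the three moments $(\sum y,\sum y^2,|B|)$ yields $\sum_{(a,b)\in C_B}u_B((a,b))(v-b)^2=\cost(B,s)$ \emph{exactly}, so good blocks contribute zero error. For a bad block I would decompose $y=\bar{y}_B+(y-\bar{y}_B)$ on the input side and $b=\bar{y}_B+(b-\bar{y}_B)$ on the coreset side; moment-preservation again makes the constant-on-$B$ part of the expansion agree, leaving only cross terms whose magnitude, via Cauchy--Schwarz, is at most $O(\sqrt{\OPT_1(B)\cdot\cost(B,s)})$ per block. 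Summing over the at most $O(k\alpha/\gamma)$ bad blocks, using $\OPT_1(B)\le\gamma^2\sigma$ and $\sigma\le\OPT_k(D)\le\cost(D,s)$, and applying Cauchy--Schwarz once more across blocks, the total discrepancy is of order $\sqrt{(k\alpha/\gamma)\cdot\gamma^2\sigma\cdot\cost(D,s)}\le\varepsilon\cdot\cost(D,s)$ once $\Delta$ is chosen as a sufficiently large constant.

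The main obstacle is the bad-block analysis, where the step on line~\ref{Line:replaceCoords} that relocates the four Caratheodory points to the corners of $B$ must actually be used: the corner placement is what lets a cutting $k$-segmentation assign a well-defined value $s(a)$ to each coreset point (every corner falls in exactly one rectangle of $s$), so the block-interior geometry seen by the coreset stays coherent with that seen by $D$, and the mismatch is then absorbable by the $\OPT_1(B)\le\gamma^2\sigma$ budget from Lemma~\ref{PTpartition2D}. A secondary subtlety is that $\sigma$ is only a \emph{lower} bound on $\OPT_k(D)$ produced from the bicriteria loss, which is precisely what converts the per-block additive $\OPT_1(B)$ budget into a multiplicative $\varepsilon$ factor on $\cost(D,s)$; I would need to track constants carefully through both Cauchy--Schwarz applications to land on the stated $\varepsilon^{-4}$ exponent in $|C|$.
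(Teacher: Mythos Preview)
Your size and time bookkeeping is fine, and the good-block argument via moment preservation is exactly right. The gap is in the bad-block analysis. You take the coreset estimate on a block $B$ to be the naive weighted sum $\sum_{(a,b)\in C_B}u_B((a,b))(s(a)-b)^2$, with the four $a$'s sitting at the corners of $B$. But when $s$ cuts through $B$, this quantity cannot track $\cost(B,s)$ with an error controlled by $\OPT_1(B)$: take $B$ with all labels equal (so $\OPT_1(B)=0$, hence every $b$ in $C_B$ equals the common label), and let $s$ equal that label at all four corners but take a huge value $M$ on a thin strip through the interior. Then the coreset sum is $0$ while $\cost(B,s)$ is of order $M^2$, so the discrepancy is $\Theta(\cost(B,s))$, not $O(\sqrt{\OPT_1(B)\,\cost(B,s)})$. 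Your Cauchy--Schwarz decomposition around $\bar y_B$ does not save you because the term $\sum_{(x,y)\in B}(s(x)-\bar y_B)^2$ on the data side and $\sum_{(a,b)\in C_B}u_B(a,b)(s(a)-\bar y_B)^2$ on the coreset side need not agree: moment preservation controls the $y$-statistics, not the spatial distribution of $s$ over $B$.

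This is why the paper does \emph{not} use the naive weighted query. Its query procedure (Algorithm~\ref{CoresetQuery}) detects when $s$ intersects $B$ and, in that case, uses the corner coordinates to recover the geometry of $B$, enumerates the pieces $B\cap S'$ of the partition that $s$ induces, and redistributes the four coreset weights across those pieces according to their sizes~$|B\cap S'|$. This is formalized via a ``smoothed version'' $(S,w)$ of $(C_B,u_B)$ that spreads the coreset labels over all coordinates of $B$; the paper then shows $|\cost(B,s)-\cost((S,w),s)|\le \varepsilon\,\cost(B,s)+O(\OPT_1(B)/\varepsilon)$ via a pointwise inequality of the form $|(a-c)^2-(b-c)^2|\le \varepsilon(a-c)^2+(1+1/\varepsilon)(a-b)^2$, and finally verifies that the query algorithm computes $\cost((\hat S,\hat w),s)$ for some such smoothed pair. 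So the role of line~\ref{Line:replaceCoords} is not merely to make $s(a)$ well-defined, as you suggest, but to let the query recover the full rectangle $B$ so it can compute the piece sizes $|B\cap S'|$; your plan needs this richer query (or an equivalent mechanism) before the bad-block error can be bounded.
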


\textbf{Coreset size. }
While Theorem~\ref{theorem:coreset} gives a worst-case theoretical upper bound, this bound is too pessimistic in practice, as common in coreset papers~\cite{lucic2017training, jubran2020sets}.
This phenomenon is well known for coresets; see discussion e.g., in~\cite{feldman2020core,ros2020sampling}. The reasons might include: worst-case artificial examples vs. average behaviour on structured real-world data, noise removing/smoothing by coresets, the fact that in practice we run heuristics that output a local minima (and not optimal solutions with global minimum), non-tight analysis (especially when it comes to constants), etc.

Our experiments in Section~\ref{sec:ER} show that, empirically, the constructed coresets are significantly smaller: for $N := nm \sim 140,000$, $k=1000$, and $\varepsilon = 0.2$, Theorem~\ref{theorem:coreset} predicts, in the worst case, a coreset of size larger than the full dataset size $N$. However, such an $\varepsilon$ error is obtained with a coreset of size at most $1\%$ of the input; see Fig.~\ref{fig:ER}.

\section{Experimental Results} \label{sec:ER}
We implemented our coreset construction from Algorithm~\ref{Coreset} in Python $3.7$, and in this section we evaluate its empirical results, both on synthetic and real-world datasets. More results are placed in the supplementary material; see Section~\ref{sec:additionalER}. Open-source code can be found in~\cite{opencode}. The hardware used was a standard MSI Prestige 14 laptop with an Intel Core i7-10710U and 16GB of RAM.
Since our coreset construction algorithm does not compete with existing solvers, but improves them by reducing their input as a pre-processing step, we apply existing solvers as a black box on the small coreset returned by Algorithm~\ref{Coreset}. The results show that our coreset can boost, by up to x$10$ times, the running time and storage cost of common random forest implementations.

\textbf{Implementations for forests.}
We used the following common implementations: (i) the function \randomForest{} from the \texttt{sklearn.ensemble} package, and (ii) the function \lightGBM{} from the \texttt{lightGBM} package that implements a forest of gradient boosted trees. 
Both functions were used with their default hyperparameters, unless states otherwise.\\
\textbf{Data summarizations.}
We consider the following compression schemes:\\
\textbf{(i): }$\DTcoreset(D,k,\varepsilon)$ - The implementation based on Algorithm~\ref{Coreset}.
In all experiments we used a constant $k=2000$ for computing the coreset, regardless of the (larger) actual $k$ value in each test, since $k=2000$ was sufficient to obtain a sufficiently small empirical approximation error. Hence, the parameter $\varepsilon$ controls the trade-off between size and accuracy. \\
\textbf{(ii): }$\randomSample(D, \tau)$ - returns a uniform random sample of size $\tau$ from $D$. In all tests $\tau$ was set to the size of the coreset $\DTcoreset(D,k,\varepsilon)$ for fair comparison.\\
\textbf{Datasets.}
We used the following pair of datasets from the public UCI Machine Learning Repository~\cite{asuncion2007uci}, each of which was normalized to have zero mean and unit variance for every feature:\\
\textbf{(i): Air Quality Dataset} \cite{de2008field} - contains $n=9358$ instances and $m=15$ features.\\
\textbf{(ii) Gesture Phase Segmentation Dataset} \cite{madeo2013gesture} -  contains $n=9900$ instances and $m=18$ features.\\
\textbf{The experiment.} The goal was to predict missing entries in every given dataset, by training random forests on the available data. The test set (missing values) consists of $30\%$ of the dataset, and was extracted from the input dataset matrix by randomly and uniformly choosing a sufficient number of $5\times 5$ patches in the input dataset, and defining them as missing values. The final loss of a trained forest is the sum of squared distances between the forest predictions for the missing values, and the ground truth values.
To tune the hyperparameter $k$, we randomly generate a set $\K$ of possible values for $k$ on a logarithmic scale. Then, we either: (i) apply the standard tuning (train the forest on the full data, for each value in $\K$, and pick the one with the smallest test set error), or (ii) compress the input (only once) into a small representative set, and then apply the standard tuning on the small, rather than the full, data. 
The experiment was repeated $10$ times. All the results are averaged over all $10$ tests; see Fig.~\ref{fig:ER}.

\begin{figure}[h]
\centering
\includegraphics[width=\textwidth]{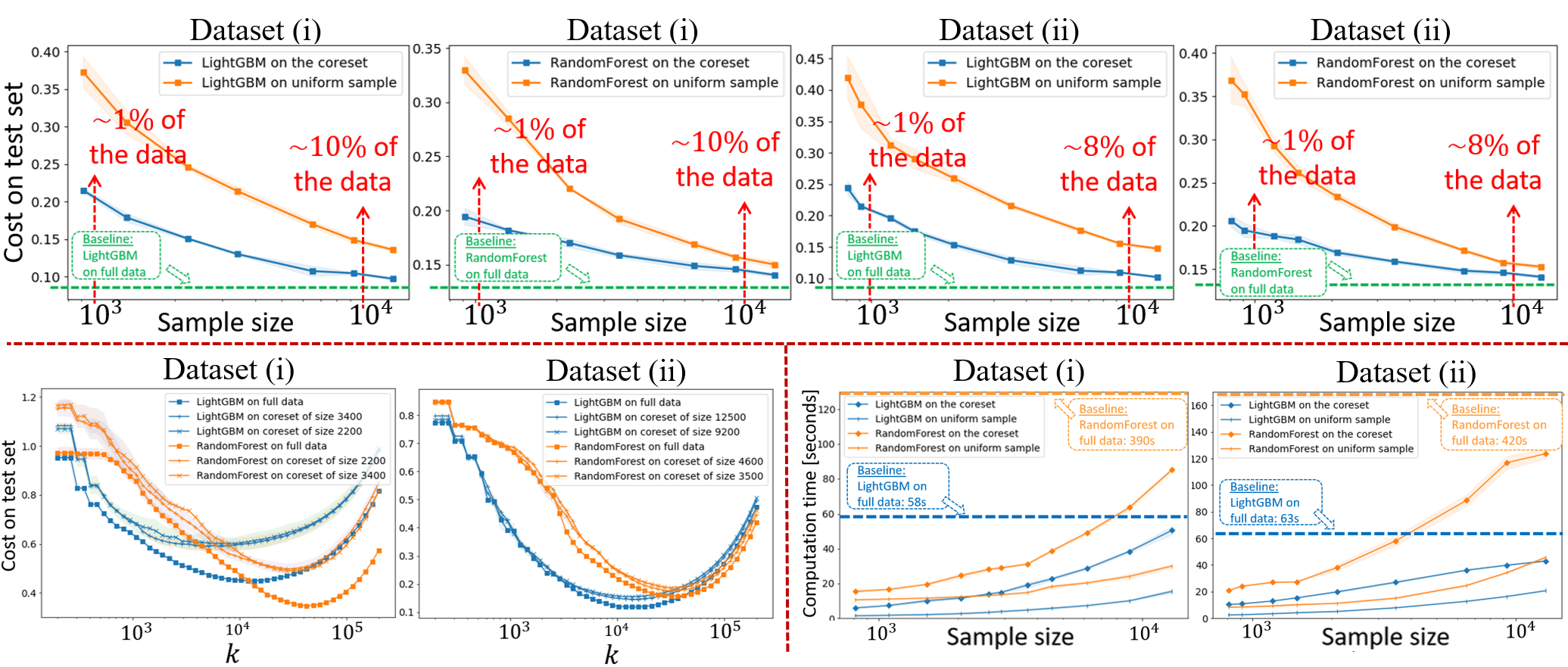}
\caption{\textbf{Experimental results.} \textbf{(Top): }The $X$-axis is the compression size. 
For every compression size $\gamma$, hyperparameter tuning is applied on both the coreset and the uniform sample (which are both of size $\gamma$). A random forest is then trained, on the full data, using those tuned parameters. The $Y$-axis presents the test set SSE loss of the trained forests. 
\textbf{(Bottom left): }Hyperparameter tuning. For every different value of $k$ ($X$-axis), a forest is trained using this parameter value either on the compression (of two different sizes) or on the full data. The $Y$-axis presents $\ell+ k/10^5$, where $\ell$ is the normal SSE loss of the trained forest on the test set.
\textbf{(Bottom right): }Time comparison. The $Y$-axis presents the total running time of both to compute the compression and to tune the parameter $k$ on the compression (out of $50$ different values). 
Note that the bottom right figures measure the total time to tune the parameter $k$ in the bottom left figures, but using many more compression sizes. The optimal obtained parameter was then used to train the random forest in the top figures.}
\label{fig:ER}
\end{figure}

\textbf{Discussion. }
While the size and accuracy of our coreset are independent of our exact implementation of Algorithm~\ref{Coreset}, the running time is heavily based on our naive implementation, as compared to the very efficient professional Python libraries.
This explains why most of the running time is still devoted to the coreset construction rather than the forest training. Nevertheless, even our simple implementation yielded improvements of up to x$10$ in both computational time and storage, for a relatively small accuracy drop of $0.03$ in the SSE. 
Tuning more than one hyperparameter will result in a bigger improvement. Furthermore, Fig.~\ref{fig:ER} empirically shows that tuning a hyperparameter on the coreset yields a loss curve very similar to the loss curve of tuning on the full data. Lastly, we observe that, in practice, our coresets have size much smaller than predicted in the pessimistic theory.

\section{Conclusions and Future Work} \label{sec:conclusion}

While coresets for $k$-trees do not exist in general, we provided an algorithm that computes such a coreset for every input \emph{$n\times m$ signal}. The coreset size depends polynomialy on $k\log(nm)/\varepsilon$ and can be computed in $O(nmk)$ time.
Our experimental results on real and synthetic datasets demonstrates how to apply existing forest implementations and tune their hyperparameters on our coreset to boost their running time and storage cost by up to x$10$.
In practice our coreset works very well also on non-signal datasets, probably since they have ``real-world'' properties that do not exist in the artificial worst-case example from Section~\ref{sec:firstCoreset}. An open problem is to define these properties. Moreover, while this paper focuses on the sum of squares distances loss, we expect that the results can be generalized to support other loss functions; see Section~\ref{sec:firstCoreset}. Lastly, supporting high-dimensional data (tensors), instead of matrices, is a straightforward generalization that can be achieved via minor modifications to our algorithms. Due to space limitation we also leave this to future work.

\bibliography{references_DT}
\bibliographystyle{plain}

\clearpage

\appendix

\section{Additional Experiments} \label{sec:additionalER}

In this section, we present some additional experiments conducted using our algorithm from Sections~\ref{sec:balancedPartition}-\ref{sec:coreset}. We give visual illustration both of our coreset itself and of the result of applying a very common decision tree implementation on the coreset, as compared to running the same function on the original (full) data; see Fig.~\ref{fig:coreset_blobs}-\ref{fig:coreset_moons}.

\begin{figure}
    \centering
    \includegraphics[scale=0.8]{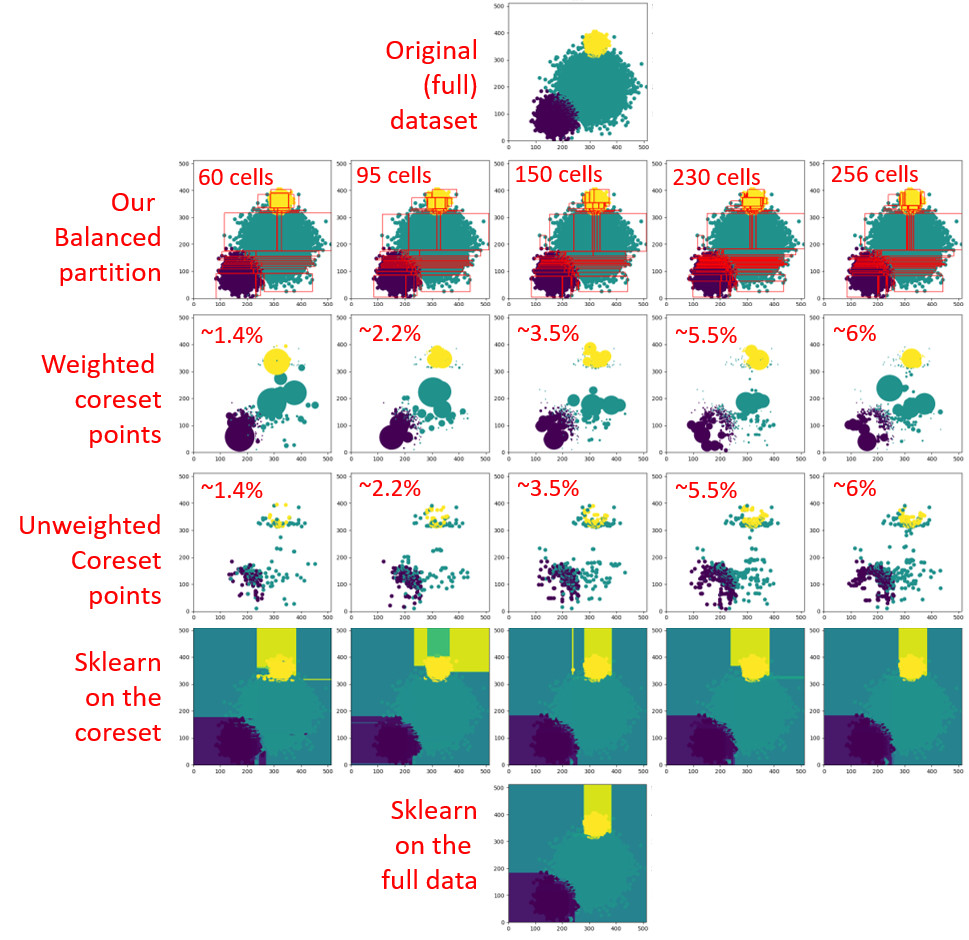}
    \caption{The blobs dataset. The dataset $D$ was generated using the function \texttt{sklearn.datasets.make\_blobs}, and contains $n=17,000$ points clustered into $3$ clusters (containing $8500, 5800$, and $2700$ points), each with a different label. A coreset $(C,u)$ was constructed using Algorithm~\ref{Coreset}. From top down, the rows illustrate: (i) The input dataset $D$, (ii) The balanced partition of $D$, including the number of sets in the partition (iii) The weighted coreset points. Each point $(x,y) \in C$ is plotted at location $x$, colored according to its label $y$, and its radius is proportional to its weight $u(x,y)$. The percentage of the coreset size relative to the full data is presented. (iv) The unweighted coreset points. Each point $(x,y)\in C$ is plotted at location $x$, colored according to $y$, and has a fixed radius. The percentage of the coreset size relative to the full data is presented. (v) The partition of the space via a decision tree computed using a call to \texttt{DecisionTreeRegressor} from the \texttt{sklearn.tree} package, where the input was the weighted coreset points only. Each region is assigned a color according to the label assigned to it by the computed tree. (vi) Similar to Row (v), but where the decision tree is trained on the full data. Algorithm~\ref{CoresetQuery} was used during training of the decision tree to evaluate the loss of each model.}
    \label{fig:coreset_blobs}
\end{figure}

\begin{figure}
    \centering
    \includegraphics[scale=0.8]{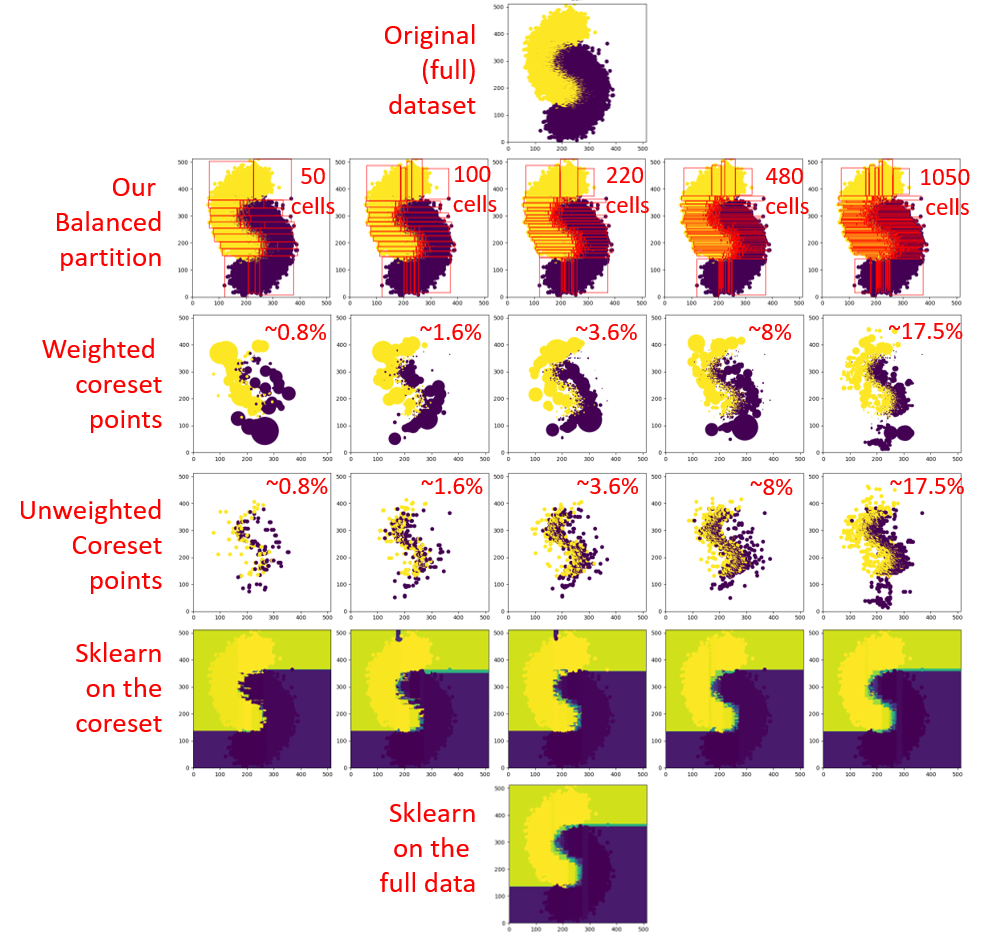}
    \caption{The moons dataset. The dataset was generated using the function \texttt{sklearn.datasets.make\_moons}. The dataset contains $n=24,000$ points spread across two interleaving half circles ($12,000$ points for each half circle), each with a different label. See caption of Fig.~\ref{fig:coreset_blobs} for a detailed explanation about the rows.}
    \label{fig:coreset_moons}
\end{figure}

\begin{figure}
    \centering
    \includegraphics[scale=0.8]{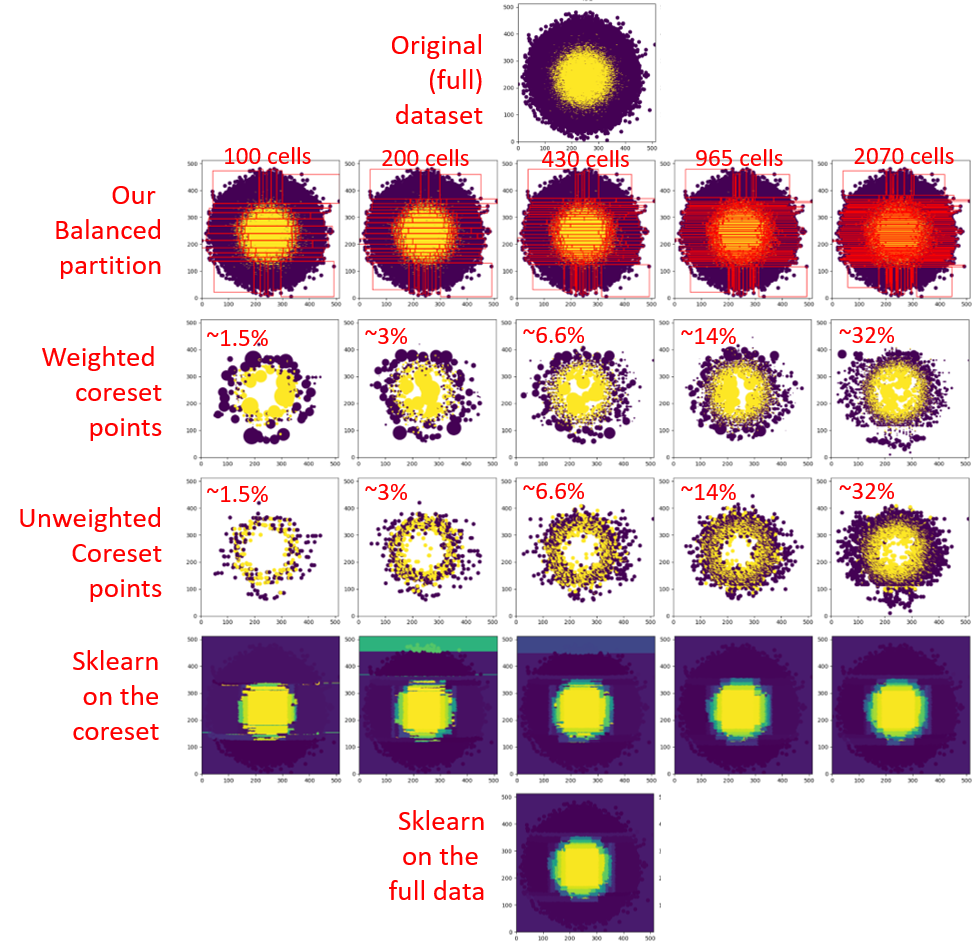}
    \caption{The circles dataset. The dataset was generated using the function \texttt{sklearn.datasets.make\_circles}. The dataset contains $n=26,000$ points spread across a big circle ($14,000$ points) and a small circle ($12,000$ points), each with a different label. See caption of Fig.~\ref{fig:coreset_blobs} for a detailed explanation about the rows.}
    \label{fig:coreset_circles}
\end{figure}

\paragraph{Discussion. } 
\textbf{Visual representation. }As seen in Fig.~\ref{fig:coreset_blobs}-\ref{fig:coreset_moons}, the balanced partition in the second row partitions the input data into multiple subsets, where, as expected, flat and relatively smooth regions are partitioned into a smaller number of large cells, while more complex regions are partitioned into a larger number of finer cells. This is expected since the balanced partition insures a small variance inside each cell. 

Furthermore, as seen in the third row of the above figures, the weighted coreset contains a small number of ``large'' circles (points with large weight) in the flat and relatively smooth regions, while it contains large number of ``small'' circles (points with small weight) in the more complex regions of the input.

\textbf{Accuracy. }As seen in the last two rows of Fig.~\ref{fig:coreset_blobs}-\ref{fig:coreset_moons}, the decision tree trained only on the coreset points resembles the decision tree trained on the full data, even for coresets of size only $6\%$, $8\%$, and $14\%$ of the input data, as seen in Fig.~\ref{fig:coreset_blobs},\ref{fig:coreset_moons}, and~\ref{fig:coreset_circles} respectively. This implies a x$10$ faster training time of a decision tree (or, similarly, a forest) on a given coreset, compared to training it on the full data, with almost no compromises to the accuracy.

The difference in the coreset size required in order to accurately represent the full data depends on the complexity of the input dataset. Indeed, the dataset in Fig.~\ref{fig:coreset_blobs} is a much simpler dataset for a decision tree to classify, compared to the dataset in Fig.~\ref{fig:coreset_circles}. Hence, the coreset sizes required in Fig.~\ref{fig:coreset_blobs} are smaller than the ones in Fig.~\ref{fig:coreset_circles}.

\section{Bi-criteria Approximation} \label{sec:bicretiria_appendix}
\paragraph{Notations. }A sub-signal $B$ is said to be \emph{horizontally intersected} by a $k$-segmentation function $s$ if there are $((i_1, j_1),y_1), ((i_2,j_2),y_2) \in B$ where $i_1 \neq i_2$ such that $s(i_1, j_1) \neq s(i_2,j_2)$. Similarly, a block $B$ of $D$ is said to be \emph{vertically intersected} by $s$ if there are $((i_1, j_1),y_1), ((i_2,j_2),y_2) \in B$ where $j_1 \neq j_2$ such that $s(i_1, j_1) \neq s(i_2,j_2)$. 
$B$ is said to be \emph{intersected} by $s$ if $B$ is either horizontally or vertically intersected, i.e., $\left|\br{s(x) \mid (x,y) \in B}\right| > 1$.
A set of sub-signals $\B$ is said to be horizontally (vertically) intersected by $s$ if it contains a sub-signal $B \in \B$ that is horizontally (vertically) intersected by $s$.

Also, we might abuse notation and denote by signal (sub-signal) an $n\times m$ signal (sub-signal) and by $k$-segmentation an $n\times m$ $k$-segmentation. 

In this section we give a constructive proof for Lemma~\ref{lem:ab}. A suggested implementation for this constructive proof is given in Algorithm~\ref{Alg:bicriteria}.

\begin{algorithm*}[h]
    \caption{\textsc{$\bicriteriaAlg(D,k)$}; Lemma~\ref{lem:ab}}
    \label{Alg:bicriteria}
    \SetKwInOut{Input}{Input}
	\SetKwInOut{Output}{Output}
    \Input{An $n\times m$ sub-signal $D = \br{(x_i,y_i)}_{i=1}^N$ and an integer $k \geq 1$.}
    \Output{An $(\alpha,\beta)_k$-approximation for $D$.}
    
    $\B := \emptyset$
    
    $\nu, \gamma :=$ sufficiently large constants \tcp{see proof of Lemma~\ref{lem:ab}}
    \While{$|D| > k\log{N}$}
    {
        \If{$D$ contains a row $R$ with $|R| \geq \frac{|D|}{\nu k}$} 
        {
            Partition $[m]$ into $t'=\gamma k$ intervals $[m] = \cup_{j=1}^{t'} I_j$ such that every $j\in [t']$, the size of each corresponding sub-signal $R_j = \br{((x_1,x_2),y)\in R \mid x_2 \in I_j}$ is $\abs{R_j} \in \br{\frac{|R|}{t'}-1, \frac{|R|}{t'}+1}$. 
            \tcp{e.g., by a greedy pass over $[m]$.}
            
            $\B :=$ the set of $t'-2k$ signals $R_j$ with the smallest $\OPT_1(R_j)$.
        } 
        \Else 
        {
            Partition $[n]$ into $\psi$ intervals $[n] = \cup_{j=1}^{\psi} I_j$ such that for every $j\in [\psi]$, the size of each corresponding sub-signal $D_j = \br{((x_1,x_2),y)\in D \mid x_1 \in I_j}$ is $\frac{|D|}{\nu k} \leq \abs{D_j} \leq \frac{2|D|}{\nu k}$. 
            \tcp{e.g., by a greedy pass over $[n]$.}
            
            \If{at least $\psi/2$ of the sub-signals $D_j$ do not contain a column $col$ of size $|col| \geq \frac{|D_j|}{2(\nu k)^2}$}
            {
                Vertically partition each of the (at least $\psi/2$) sub-signals $D_j$ into $\psi_j$ sub-signals, each such sub-signal $B$ of size $\frac{|D_j|}{2(\nu k)^2} \leq |B| \leq \frac{|D_j|}{(\nu k)^2}$, and let $\B'$ contain the union of all those sub-signals. \texttt{e.g., via a greedy algorithm.}
                
                $\B :=$ the set of $|\B'|-4\nu^2k^3 -2k\psi$ signals $B \in |\B'|$ with the smallest $\OPT_1(B)$.
            } 
            \Else 
            {
            $\B := \br{C \mid C \text{ is a column of }D_j, |C| \geq \frac{|D_j|}{2 (\nu k)^2} \text{ and }j \in [\psi]}$
            } 
        } 
        $D := D \setminus \cup_{B \in \B'} B$ and $\B := \B \cup \B$
    }
    $s(b) := 1/|B| \sum_{(x,y) \in B}y$ for every $b \in B$ and $B \in \B$.
    
    \Return $s$
\end{algorithm*}

We first prove a small technical observation (see Observation~\ref{claim:var}), and then we prove Lemma~\ref{lem:partition}, which will be used throughout the proof of Lemma~\ref{lem:ab}.

\begin{observation} \label{claim:var}
Let $A$ and $B$ be two $n\times m$ sub-signals. Then it holds that
\[
\OPT_1(A \cup B) \geq \OPT_1(A) + \OPT_1(B).
\]
\end{observation}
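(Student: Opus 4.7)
\textbf{Proof plan for Observation~\ref{claim:var}.}

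The plan is to reduce the claim to the elementary inequality $\min_{c} \bigl[ f(c) + g(c) \bigr] \geq \min_c f(c) + \min_c g(c)$, which holds for any two real-valued functions (since for the minimizer $c^*$ of the sum we have $f(c^*) \geq \min f$ and $g(c^*) \geq \min g$).

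First I would unfold the definition of $\OPT_1$. For any non-empty $n\times m$ sub-signal $S$, a $1$-segmentation is simply a constant function $c \in \REAL$, so
\[
\OPT_1(S) = \min_{c \in \REAL} \sum_{(x,y) \in S} (y-c)^2.
\]
Next, since $A$ and $B$ are disjoint sub-signals (as produced throughout the paper's partitions), for every constant $c \in \REAL$ we may split the sum by the partition $A \cup B$:
\[
\sum_{(x,y) \in A \cup B} (y-c)^2 \;=\; \sum_{(x,y) \in A} (y-c)^2 \;+\; \sum_{(x,y) \in B} (y-c)^2.
\]
Taking the minimum over $c$ on both sides, and applying the elementary inequality above with $f(c) = \sum_{A}(y-c)^2$ and $g(c) = \sum_{B}(y-c)^2$, I obtain
\[
\OPT_1(A \cup B) \;\geq\; \min_c \sum_{(x,y) \in A}(y-c)^2 \;+\; \min_c \sum_{(x,y) \in B}(y-c)^2 \;=\; \OPT_1(A) + \OPT_1(B),
\]
which is the desired inequality.

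There is essentially no hard step here: the proof reduces to a decomposition of a sum over a disjoint union plus a one-line min inequality. The only subtlety worth noting is that disjointness of $A$ and $B$ is needed for the additive splitting of the sum; this is consistent with all uses of this observation in Algorithm~\ref{Alg:bicriteria}, where the sub-signals produced are pairwise disjoint subsets of the current working signal. If one wishes an alternative expression, the slack in the inequality is exactly $|A|(\mu_A - \mu)^2 + |B|(\mu_B - \mu)^2$, where $\mu_A,\mu_B,\mu$ denote the average $y$-values of $A$, $B$, and $A \cup B$ respectively (the classical within/between variance decomposition), but this explicit form is not needed for the statement.
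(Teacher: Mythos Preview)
Your proof is correct and follows essentially the same approach as the paper: the paper lets $\mu$ be the mean of $A\cup B$, splits $\sum_{A\cup B}(y-\mu)^2$ into the two disjoint sums, and bounds each by the corresponding $\OPT_1$, which is exactly your $\min_c[f(c)+g(c)]\geq \min_c f(c)+\min_c g(c)$ instantiated at the minimizer $c^*=\mu$. Your remark that disjointness is needed (and tacitly assumed) is apt.
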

\begin{proof}
Let $C = A\cup B$.
Let $\mu = \frac{1}{|C|} \sum_{(x,y) \in C} y$ be the weighted mean of $A\cup B$.
By Definition of $\OPT$ we have that
\[
\begin{split}
\OPT_1(A \cup B) & = \sum_{(x,y) \in C} (y-\mu)^2 \\
& = \sum_{(x,y) \in A}(y-\mu)^2 + \sum_{(x,y) \in B}(y-\mu)^2\\
& \geq \OPT_1(A) + \OPT_1(B),
\end{split}
\]
where the first derivation holds since the mean of a points minimizes the sum of squared distances to those points, and the last derivation is by the definition of $\OPT_1$.
\end{proof}

\begin{lemma} \label{lem:partition}
Let $D = \br{(x_1,y_1),\cdots,(x_N,y_N)}$ be an $n\times m$ sub-signal and let $k\geq 1$ be an integer. Then, in $O(N)$ time we can find a set $\B$ of $|\B| = t \in O(k^3)$ mutually disjoint blocks with respect to $D$, for which 
\begin{enumerate}
\item $\sum_{B \in \B} \OPT_1(B) \leq \OPT_k(D)$.
\item $\abs{\cup_{B \in \B}B} \in \Omega\left(\frac{N}{k}\right)$.
\end{enumerate}
\end{lemma}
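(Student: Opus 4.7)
The plan is to construct $\B$ via a greedy axis-aligned grid on $D$, and to argue that only a small number of grid cells can be ``cut'' by the unknown optimal $k$-segmentation $s^*$.

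Let $s^*$ be a fixed optimal $k$-segmentation of $D$, so $\cost(D,s^*)=\OPT_k(D)$. My central observation is that if $B\subseteq D$ is a sub-signal on which $s^*$ assigns a single value (call such a sub-signal \emph{uncut}), then the mean minimizes SSE and hence
\[
\OPT_1(B)\le\cost(B,s^*|_B).
\]
Consequently, for any disjoint family $\B_0$ of uncut sub-signals,
\[
\sum_{B\in\B_0}\OPT_1(B)\le\cost(D,s^*)=\OPT_k(D).
\]
Since $s^*$ is unknown to the algorithm, I will construct an $s^*$-oblivious candidate family $\B'$, prove that $s^*$ can cut only $O(k^2)$ of its members, and then pick the $|\B'|-O(k^2)$ members of smallest $\OPT_1$ to obtain $\B$.

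The candidate family is built as follows. Fix a sufficiently large constant $\mu$ (e.g., $\mu=5$). Greedily split $[n]$ into $r:=\min(n,\mu k)$ consecutive row-intervals of sizes in $\{\lfloor n/r\rfloor,\lceil n/r\rceil\}$, and analogously split $[m]$ into $c:=\min(m,\mu k)$ column-intervals. The resulting grid of $rc\le(\mu k)^2$ rectangular sub-signals, each of size $\Theta(N/(rc))$, is the candidate family $\B'$. In the degenerate regime $n,m<\mu k$ we have $N\le(\mu k)^2=O(k^2)$ and I simply return the $N$ singletons (each has $\OPT_1=0$).

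The cut-counting is the crux. The partition $s^*$ consists of at most $k$ disjoint axis-aligned rectangles, producing at most $2k$ maximal horizontal edges and $2k$ maximal vertical edges. A candidate cell is cut by $s^*$ only if one of those edges passes through its interior. Each horizontal edge either lies on a row-group boundary (harmless) or strictly inside one row-group, in which case it crosses at most $c$ column-groups and so cuts at most $c$ cells. A symmetric bound holds for vertical edges. Summing yields at most $2k(r+c)$ cut cells, leaving at least $rc-2k(r+c)\ge\Omega(rc)$ uncut cells when $\mu$ is large enough (e.g., $\mu\ge 5$ suffices). Selecting the $rc-2k(r+c)=O(k^2)\subseteq O(k^3)$ cells of smallest $\OPT_1$ thus satisfies property (1) by the central observation, while their total mass is $\Omega(rc)\cdot\Theta(N/(rc))=\Omega(N)\supseteq\Omega(N/k)$, proving property (2). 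The running time is $O(N)$: one linear scan per greedy partition, and $\OPT_1$ of each cell is the variance of its $y$-values, computable in time linear in its size using running sums of $y$ and $y^2$.

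The main obstacle I anticipate is precisely the cut-counting: showing that each of the $O(k)$ edges of $s^*$ can cut at most $O(k)$ grid cells requires charging the edge to the (at most $c$ or $r$) grid-lines it crosses, rather than to its (possibly large) pixel length---this is the only place where the $k$-segmentation hypothesis is used quantitatively. Bookkeeping on boundary-versus-interior edges, and treating the degenerate small-$N$ case, are straightforward but must not be skipped.
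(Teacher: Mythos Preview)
Your proof is correct for the lemma as literally stated, and in fact yields a stronger conclusion than claimed: the uniform $\mu k\times\mu k$ grid produces only $O(k^2)$ blocks and $\Omega(N)$ (not merely $\Omega(N/k)$) coverage. The cut-counting argument is sound---each of the at most $2k$ horizontal (resp.\ vertical) rectangle-sides of $s^*$ lies in a single row-group and hence touches at most $c$ (resp.\ $r$) grid cells---and selecting the $rc-2k(r+c)$ cells of smallest $\OPT_1$ indeed gives $\sum\OPT_1\le\OPT_k(D)$, since at least that many cells are uncut and the smallest $|\B|$ values cannot exceed the sum over any $|\B|$-subset. The intermediate regime where exactly one of $n,m$ is below $\mu k$ should be spelled out (then one family of edges sits on row-group or column-group boundaries and contributes nothing to the cut count), but the arithmetic goes through.

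The paper, however, takes a genuinely different route: it partitions by \emph{element count} rather than by geometry, via a case analysis on whether $D$ contains a ``$\nu k$-heavy'' row (one holding at least a $1/(\nu k)$ fraction of all elements); if not, it slices $D$ into horizontal strips of roughly equal element count and recurses on each strip, distinguishing again whether strips have heavy columns. The reason for this extra machinery is that Lemma~\ref{lem:partition} is invoked iteratively inside the proof of Lemma~\ref{lem:ab} on sets $D_i=D_{i-1}\setminus\bigcup_{B\in\B_{i-1}}B$, which after the first round are no longer rectangular sub-signals. On such a ``punctured'' $D$ your uniform geometric grid would produce cells of wildly varying, possibly empty, mass, and the $\Omega(N)$ coverage argument collapses. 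So the paper's heavier case analysis buys robustness to irregular $D$, at the price of the weaker $O(k^3)$ block count and $\Omega(N/k)$ coverage. For the lemma exactly as written (rectangular $D$), your argument is both simpler and sharper; just be aware it does not survive the downstream application without modification.
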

\ifproofs
\begin{proof}
Let $\nu > 50$ be an arbitrary parameter and let $\gamma\geq 8$ be a parameter that will be defined later.
We will prove Lemma~\ref{lem:partition} for $t \leq 2\nu^3k^3$ and for $\abs{\cup_{B \in \B}B} \geq \frac{N}{8\nu k}$.

We start with the simple $1$-dimensional case, namely -- we assume that $m=1$. In this case, we just partition $[n]$ into
$t'=\gamma k$ consecutive intervals $[n] = \cup_1^{t'} E_j$, such that each corresponding sub-signal $D_j = \br{((a,b),y) \in D \mid a \in E_j}$, $j \in [t']$ of $D$ has equal share of elements in $D$ (up to $\pm 1$), i.e., $|D_j| \in \br{|D|/t'-1, |D|, |D|/t'+1}$ . 
This can be done
by moving point by point along the elements of $D$, which are assumed to be sorted in ascending order according to $a$ for every $((a,b),y) \in D$, and defining a new interval at the first moment the current interval contains more than $|D|/t'$ elements of $D$ or just at the prior point. We then compute for each $D_j, ~j\in [t']$ the optimal $\OPT_1(D_j),$ and return as output the set $\B$ containing the $t=t'-2k$ sub-signals $D_j$ with the smallest loss $\OPT_1(D_j)$, among all the $t'$ sub-signals $\br{D_1,\cdots,D_{t'}}$.

To see what guarantees we get, note that the computation takes
$O(nm) = O(N)$ time to sort the elements $((a,b),y)$ of $D$ according to $a$, since $a \in [nm]$ is a bounded integer. Afterwards, the above greedy partition also takes $O(nm) = O(N)$ time. Furthermore,
\[
\abs{\cup_{B \in \B} B} \geq (\gamma-2)k \cdot \frac{|D|}{\gamma k} \geq \frac{\gamma-2}{\gamma} |D| \geq \frac{N}{8\nu k}.
\]
For $\gamma \geq 8$ this proves Property (ii) above.

Finally, any $n\times 1$ $k$-segmentation function intersects at most $2k$ sub-signals from $\{D_j\}_{j=1}^{t'}$ (i.e., at most $2k$ sub-signals are assigned more than $1$ distinct value via the $k$-segmentation function). Hence, the optimal $k$-segmentation $s^*$ of $D$ intersects at most $2k$ of those sub-signals as well.
This implies that at least $t' -2k \geq (\gamma-2)k$ of the intervals $\{D_j\}_{j=1}^{t'}$ which are assigned $1$ distinct value $\left|\br{s(x) \mid (x,y) \in D_j}\right| = 1$. Hence, by Observation~\ref{claim:var}, we conclude that $\OPT_k(D) \geq \sum_{B \in \B} \OPT_1(D)$ which verifies the 1st item above.

We remark that for the $1$-dimensional case we can do much better
(there is  an overall $(1+\eps)$-approximation of the $k$-segmentation using logarithmic number of blocks), but this will not be used here. Another remark is that we have ignored the $\pm 1$ slack in the sizes above, making the actual part of $D$ that is removed at least $\frac{\gamma -2}{\gamma}|D| - t'$. This is
insignificant in the $1$-dimensional case above, as for $t= O(1)$ and for $|D| \geq \log n$ this would be an insignificant fraction, while for smaller $D$, we can just use
single point sub-signals. 

\noindent
{\bf The 2-dim case }: 
Consider a row $R$ of $D$, say the $i'$th row $R = \br{((a,b),y) \in D \mid a=i}$.
We call a row $R$ of $D$ $r$-heavy if $|R| \geq |D|/r$, namely -- $R$ contains at least $|D|/r$ elements from $D$. Analogously, we define a column to be $r$-heavy.

Assume first that our $D$ contains a $\nu k$-heavy row $R$. We choose $R$
and use it as in the $1$-dimensional case. As explained above we can
find in $R$ a set of disjoint sub-signals $\B$ containing $\gamma
k$ blocks and for which the first item above holds for $\OPT_k(R)$ and in particular for $\OPT_k(D)$, i.e., $\sum_{B \in \B} \OPT_1(B) \leq \OPT_k(R) \leq \OPT_k(D)$. Further, using the above
guarantees for the $1$-dim case, $\abs{\cup_{B\in \B}B} \geq \frac{\gamma-2}{\gamma} \cdot |R| \geq \frac{\gamma-2}{\gamma} \cdot
\frac{|D|}{\nu k}$. This proves 2nd item for this case
(with $\gamma \geq 3$).

Otherwise, let $e_i = |R_i|$ where $R_i = \br{((a,b),y) \in D \mid a=i}$. By our assumption $e_i \leq |D|/\nu k $ for every $i \in [n]$. Our
algorithm is essentially identical to the $1$-dimensional case, on
the $1$-dim array $L=(e_1, \ldots , e_{n_1})$ where we weight the $i$th element by $e_i$. Namely, we find a partition of $L$ into $\psi$ contiguous
subintervals $\mathcal{E} = \br{E_1,\cdots,E_\psi}$ such that the corresponding sub-signals $D_j = \br{((a,b),y) \in D \mid a \in E_j}$ of $D$ are as equal as possible.
By our assumption this could be done so that for any $j \in \psi$, the number of elements in $D_j$ is between $\frac{|D|}{\nu k}$ and
at most $\frac{2|D|}{ \nu k}$. This is since adding a
new `point' from the list to an existing interval may increase the sum
by at most $|D|/\nu k$.
this implies that $\nu k/2 \leq \psi \leq \nu k$. 

Next we perform the above algorithm again on $D_1,\cdots,D_{\psi}$ with the intention to ``vertically partition'' each such $D_j$, i.e., split each $D_j$ into sets according to the value $b$ for every $((a,b),y) \in D_j$ (rather than considering the value $a$ above).
Let $r=2\nu^2 k^2$, we continue with the following case analysis: (i) at least a $\frac{1}{2}$-fraction of $\br{D_1,\cdots,D_\psi}$ contain no a $r$-heavy column, and (ii) at most a $\psi/2$ of $\br{D_1,\cdots,D_\psi}$ contain no a $r$-heavy column.

\textbf{Case (i): }At least a $\frac{1}{2}$-fraction of $\br{D_1,\cdots,D_\psi}$ contain no a $r$-heavy column. Then we
partition each set $D_j$ with no $r$-heavy column into
$\psi_i$ sub-signals $\br{D_j^{(1)},\cdots,D_j^{(\psi_i)}}$ of nearly equal number of points, where the partition is applied onto the values $b$ of every $((a,b),y) \in D_j$. By a reasoning similar to that above, each $r/2 \leq \psi_i \leq r$, and each such block $B \in \br{D_j^{(1)},\cdots,D_j^{(\psi_i)}}$ contains $|D_i|/r \leq |B|
\leq 2|D_i|/r$. Using the bounds on $|D_i|$ and $r$ we get $\frac{|D|}{2\nu^3  k^3} \leq|B| \leq
\frac{2|D|}{\nu^3 k^3}$. In particular we conclude that the total
number of such sub-signals is at most $t'$, and  $ t' \leq 2\nu^3
k^3$. On the  other hand, $t' \geq  \frac{\psi}{2}
\frac{|D|}{\nu k} \cdot \frac{\nu^3 k^3}{2|D|} \geq  \nu^3 k^3 /8$.
Let $\B_2$ be the collection of these sub-signals.

We choose for our output collection, $\B$, the set of $t'-z$ sub-signals $B \in \B_2$ with the smallest $\OPT_1(B)$, for $z = 2k(r + \psi) \leq 6k^3 \nu^2 $.

We note that $\B$ contains $t$ sub-signals, where $t' - z \leq t \leq t'$.  
Further, by the lower bound on $t$ it follows that $\abs{\cup_{B\in \B}B} \geq (t' -z) \cdot  \frac{|D|}{2\nu^3  k^3} \geq (\nu^3 k^3/8 - 6k^3 \nu^2  ) \frac{|D|}{2\nu^3  k^3} \geq \frac{|D|}{8} (1- 48/\nu) $.  

This verifies the 2nd item for this case.
Finally, we note that any row of $D$ is shared by at most $r$ sub-signals of $\B$, and each column of $D$ is shared by at most $\psi$ sub-signals of $\B$. Hence, any $k$-segmentation function may intersect at most $z = 2k(r +\psi)$ sub-signals from $\B$. Therefore, for any $k$-segmentation $s$ there are at least $t'-z$ sub-signals in $\B_2$ which are not-intersected by $s$. By our definition of $\B$ to be the set of $t'-z$ sub-signals in $\B_2$ with the smallest loss, we obtain that the loss $\cost(D,s)$ is at least $\sum_{B \in \B}\cost(B)$ proving the 1st item of the lemma for this case.

\textbf{Case (ii): }At most $\psi/2$ of $\br{D_1,\cdots,D_\psi}$ contain no a $r$-heavy column, namely -- at least $\psi/2$ of the $D_i$ have a $r$-heavy column. In this case we take the heavy column from each $D_i$ as its own sub-signal. We get a collection $\B_1$ of $\psi_1 \geq \psi/2$ blocks.
We now return as output the set $\B$ of the $\psi_1-2k$ sub-signals $B \in \B_1$ with the smallest $\OPT_1(B)$. We note that number of blocks we output in this case is at most $\psi \leq \nu k$.

Note also that $\B$ contains at least $\psi/2$ blocks, it
follows that $|\cup_{B\in \B}B|) \geq \frac{\psi}{2} \cdot
\frac{|D|}{2\nu^2 k^2} \geq \frac{|D|}{8 \nu k}$ which proves the 2nd
item in the lemma.

Finally, note that any $k$-segmentation $s$ can intersect at most $2k$
intervals from $\B_1$ (similarly to the $1$-dim case). Hence, there are at least $|\B_1|-2k = \psi_1-2k$ sub-signals in $\B_1$ that are not-intersected by $s$, which implies that its loss $\cost(D,s)$ is at least the sum $\sum_{B \in \B} \OPT_1(B)$, which proves the 1st item of this lemma.

{\bf Remark:  } we did not optimise the parameter. A slightly better
partition can be obtains (less blocks), but this is good enough for our purposes.

\textbf{Computational time: }Note that the elements of the input $D$ can be sorted in lexicographic order in $O(nm) = O(N)$ time since the coordinates $a$ and $b$ for every $((a,b),y) \in D$ are bounded integers.
Then, a linear-time preprocessing can be applied to the input $D$ to store some statistics, e.g., the number of elements in each non-empty row or column, and the index of the next non-empty row or column for every element in $D$. Afterwards, the above greedy partition also takes $O(nm) = O(N)$ time.
\end{proof}
\fi

We now restate and prove Lemma~\ref{lem:ab} from Section~\ref{sec:bicretiria}.
\begin{lemma} [Lemma~\ref{lem:ab}] \label{lem:ab_appendix}
Let $D = \br{(x_1,y_1),\cdots,(x_N,y_N)}$ be an $n\times m$ signal and $k \geq 1$ be an integer. Then, in $O(kN)$ time we can compute an $(\alpha,\beta)_k$-approximation for $D$, where $\alpha \in O(k\log N)$ and $\beta \in  O(k^{O(1)} \log^2{N})$.
\end{lemma}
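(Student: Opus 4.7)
The plan is to prove the lemma by iteratively applying the peeling guarantee of Lemma~\ref{lem:partition} to the input signal $D$ and then assembling the harvested rectangles into a single $\beta k$-segmentation. Initialize $D_0 := D$ and $\B' := \emptyset$. At iteration $i$, I would invoke Lemma~\ref{lem:partition} on the current point set $D_i$ to obtain a collection $\B_i$ of at most $t \in O(k^3)$ mutually disjoint axis-parallel rectangles whose union covers at least a $c/k$ fraction of $D_i$ and whose total one-segmentation cost satisfies $\sum_{B \in \B_i} \OPT_1(B) \leq \OPT_k(D_i) \leq \OPT_k(D)$; the second inequality holds because the optimal $k$-segmentation of $D$ restricted to $D_i \subseteq D$ is feasible for $D_i$ with loss no larger. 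Then set $D_{i+1} := D_i \setminus \bigcup_{B \in \B_i} B$ and $\B' := \B' \cup \B_i$, stopping at the first iteration $\psi$ with $|D_\psi| \leq k \log N$, and appending each remaining point as a singleton block (zero one-segmentation loss).

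Next I would bound the number of iterations and the cumulative loss. Since $|D_{i+1}| \leq (1 - c/k)|D_i|$, we get $\psi \in O(k \log N)$. Summing property~(i) of Lemma~\ref{lem:partition} over all iterations yields $\sum_{B \in \B'} \OPT_1(B) \leq \psi \cdot \OPT_k(D) \in O(k \log N) \cdot \OPT_k(D)$. Defining $s:[n]\times[m] \to \REAL$ to equal $\mu_B := \tfrac{1}{|B|}\sum_{(x,y) \in B} y$ on each rectangle $B \in \B'$, the SSE loss telescopes to $\cost(D, s) = \sum_{B \in \B'} \OPT_1(B)$, which establishes $\alpha \in O(k \log N)$.

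To turn $s$ into a bona fide $\beta k$-segmentation, I would gridify: extend the horizontal and vertical bounding lines of every rectangle in $\B'$ across the full domain $[n] \times [m]$. With $|\B'| \leq t\psi \in O(k^4 \log N)$, this produces a grid with at most $(2|\B'|+1)^2 \in O(k^8 \log^2 N)$ axis-parallel rectangular cells; each cell lies inside at most one rectangle of $\B'$, so $s$ is constant on each cell, and the grid is a valid axis-parallel rectangular partition, yielding a $\beta k$-segmentation with $\beta \in O(k^7 \log^2 N) \subseteq k^{O(1)} \log^2 N$, matching the claim.

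Finally, each iteration runs in time $O(|D_i|)$ by Lemma~\ref{lem:partition}; since $|D_i| \leq (1 - c/k)^i |D|$, the total running time is $\sum_{i=0}^{\psi - 1} O(|D_i|) \in O(kN)$ via a geometric sum. The main obstacle I expect is to justify that Lemma~\ref{lem:partition} applies to $D_i$ even after the first peel, since $D_i$ need not be a rectangular sub-signal; I would address this by inspecting the proof of Lemma~\ref{lem:partition} --- its greedy row/column splits depend only on element counts within bounded integer coordinates, not on $D_i$ being rectangular --- or, failing that, by decomposing $D_i$ into its maximal rectangular remainders and applying the lemma to each piece separately before merging the outputs.
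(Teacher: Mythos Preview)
Your proposal is correct and follows essentially the same approach as the paper: iterate Lemma~\ref{lem:partition} to peel off a constant-over-$k$ fraction of the remaining points per round, bound the number of rounds by $O(k\log N)$ via geometric decay, sum the per-round loss bounds to get $\alpha\in O(k\log N)$, and gridify the $O(k^4\log N)$ harvested rectangles into $O(k^8\log^2 N)$ cells to obtain the claimed $\beta$. Your explicit flag about $D_i$ not being a rectangular sub-signal after the first peel is apt---the paper glosses over this, but as you suspect, the proof of Lemma~\ref{lem:partition} only uses element counts along rows/columns with bounded integer coordinates and therefore goes through for arbitrary subsets of $[n]\times[m]$.
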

The proof of the above claim is a constructive proof. A suggested implementation is provided in Algorithm~\ref{Alg:bicriteria}.
\ifproofs
\begin{proof}
The top level idea of the algorithm is as follows. We suggest an iterative algorithm. We start the first iteration with $D_1 = D$ and, using Lemma
\ref{lem:partition}, find a collection of disjoint sub-signals
$\B_1 = \{ B_1, \cdots, B_t\}$ in $D_1$ (which will not necessarily cover
the entire signal $D_1$), such that: (i) the sum of their $1$-segmentation loss satisfies,
$\sum_{i=1}^t \OPT_1(B_i) \leq \OPT_k(D_1) = \OPT_k(D)$, and (ii) $\cup_{B \in \B_1}B$ has size $\abs{\cup_{B \in \B_1}B} \geq
|D_1|/c$ for some $c$ (e.g., $c \in O(k)$ in the lemma). Let $\B_1$ be such a collection. We then `delete' from $D_1$ the elements of $\cup_{B \in \B_1}B$, and set $D_2 = D_1 \setminus \cup_{B \in \B_1}B$.

In the $i$th iteration, we repeat the same process with respect to the current set $D_i$. Namely, we find a collection $\B_i$ of at most $t$ disjoint sub-signals in $D$, for which: (i) $\sum_{B \in \B_i} \OPT_1(B) \leq \OPT_k(D_i) \leq \OPT_k(D)$, and (ii)  $\cup_{B \in \B_i} B$ has size $ \abs{\cup_{B \in \B_i} B} \geq |D_{i}|/c$, i.e., those blocks cover at least a constant fraction of $D_i$. 

Repeating these iterative procedure for at most $\psi = O(c\log (nm))$ times, we end up covering all entries of $D$ with sub-signals where the overall loss of the sub-signals in each iteration is at most $\OPT_k(D)$.
This defines a collection of at most $t \psi$ sub-signals $\B$ that cover the entire original set $D$. Hence, the total overall loss over those sub-signals is $\sum_{B \in \B'}OPT_1(B) \leq \psi \OPT_k(D)$.
By defining the output function $s$ such that for every $B \in \B$ and $b \in B$, $s$ assigns to $b$ the mean value of $B$, i.e., $s(b) = 1/\abs{B} \sum_{((i,j),y) \in B}y$, we obtain that $\cost(B,s) = \OPT_1(B)$ for every $B \in \B$, and that the $\cost(D,s) \leq \sum_{B \in \B}\cost(B,s) \leq \psi \OPT_k(D)$.

While for every $B \in \B$ $s$ assigns the same value for every element $b \in B$, there is not necessarily a partition of $n\times m$ into $|\B| \in O(t \psi)$ distinct axis-parallel blocks that correspond to the sub-signals of $\B$. Therefore, $s$ is not necessarily a $|\B|$-segmentation function.
However, looking at all possible intersections of the sub-signals in $\B$, it is easy to realize that the $t \psi$ sub-signals in $\B$ define a partition of $D$ into at most $O(t^2\psi^2)$ sub-signals $\B'$ that indeed correspond to a distinct partition of $n\times m$ into $|\B'|$ distinct axis-parallel blocks.
Hence, $s$ is guaranteed to be a $|\B'|$-segmentation function.

The parameters that are guaranteed by the lemma are $c \in O(k)$, $t \in O(k^3)$.  This implies that $\beta = |\B'| \in O(t^2 \psi^2) = O(k^8 \log^2 nm )$ and $\alpha = \psi \in O(k\log nm)$. 

\textbf{Computational time: }By Lemma~\ref{lem:partition}, each of the $\psi$ iterations above takes time linear in the input size. The input size in the $i$'th iteration is $O(N((k-1)/k)^i)$ since at each iteration we remove at least a $1/k$ fraction of the input. Hence, the total running time is the sum of the geometric series $N \cdot \sum_{i \in \psi}((k-1)/k)^i \in O(kN)$.
\end{proof}
\fi

\section{Balanced Partition} \label{sec:balancedPartition_appendix}
In this section we give our full proof for Lemma~\ref{PTpartition2D}.
We first prove the following lemma regarding the output of Algorithm~\ref{PTpartitionOneDim}.
\begin{lemma} \label{lem:colPartition}
Let $D$ be an $n\times m$ sub-signal, and $\sigma>0$ be a parameter. Let $\B = \br{B_1,\cdots,B_{|\B|}}$ be the output of a call to $\partitionAlgOneD(D,\sigma)$, where the sub-signals in $\B$ are numbered according to the order in which each of them was added to $\B$; see Algorithm~\ref{PTpartitionOneDim}. Then the following properties hold:
\begin{enumerate}
    \item $\B$ is a partition of $D$.
    \item $\OPT_1(B) \leq \sigma$ for every sub-signal $B \in \B$.
    \item If $|\B| >8k$ then for any $k$-segmentation $s$ that does not horizontally intersect $D$ we have that $\cost(D,s) \geq \left(\frac{|\B|}{4} - 2k\right) \sigma$.
    \item $\B$ can be computed in $O(|D|)$ time.
\end{enumerate}
\end{lemma}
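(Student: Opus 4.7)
The plan is to verify the four properties in turn; Properties 1, 2, and 4 are short and Property 3 is the main work. Property 1 follows by a direct induction on the number of processed columns: each outer iteration commits cells with disjoint column support (a contiguous slice in the else branch, or, via the recursive call on $B^T$, horizontal strips that partition a single column), and the outer loop terminates only after all of $[1,m]$ is covered. Property 2 follows by construction: the else branch commits only $lastB$, whose $\OPT_1$ was already at most $\sigma$ when it was saved; the if branch invokes the procedure recursively on $B^T$, which is one level deep (every ``column'' of $B^T$ is a single cell with $\OPT_1=0$, so the if branch never fires again) and returns strips of $\OPT_1\le\sigma$ by induction, with transposition preserving $\OPT_1$. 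Property 4 is standard: maintain the running sum and sum of squares of $y$-values for the current $B$, so each cell added to an extending slice or strip updates $\OPT_1$ in $O(1)$ time, and summing over all extensions and the single level of recursion gives $O(|D|)$.

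For Property 3, I group $\B$ into \emph{super-blocks}: a \emph{Type A} super-block is the (possibly single-column) slice committed in one execution of the else branch, and a \emph{Type B} super-block is the full set of horizontal strips produced by one execution of the if branch on a column $B$ with $\OPT_1(B)>\sigma$. Let $S$ be the number of super-blocks, $T_B$ the number of Type B super-blocks, and $k_j\ge 2$ the number of strips in the $j$-th Type B super-block (the bound $k_j\ge 2$ holds because the column has $\OPT_1>\sigma$ while every strip has $\OPT_1\le\sigma$), so $|\B|=(S-T_B)+\sum_j k_j$. The key local invariant is that for any two consecutive super-blocks $S_i,S_{i+1}$ one has $\OPT_1(S_i\cup S_{i+1})>\sigma$: the A/A case uses the loop-exit condition of the else branch, namely $\OPT_1(\text{committed slice}\cup\text{next column})>\sigma$, combined with monotonicity of $\OPT_1$; the A/B case uses the same exit condition together with the fact that a Type B super-block equals its full column; the B/A and B/B cases follow directly from $\OPT_1(\mathrm{col}\,c)>\sigma$ and monotonicity. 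The analogous argument for the recursive call gives a \emph{strip invariant}: inside any Type B super-block, consecutive strips $R_\ell,R_{\ell+1}$ satisfy $\OPT_1(R_\ell\cup R_{\ell+1})>\sigma$.

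Next I use the hypothesis that $s$ does not horizontally intersect $D$. This forces $s$, restricted to $D$, to depend only on the column, so $s$ partitions the columns of $D$ into at most $k$ contiguous column-ranges (each requiring a distinct rectangle of $s$) and introduces at most $k-1$ column-boundaries. Pair the super-blocks disjointly as $(S_1,S_2),(S_3,S_4),\ldots$; the column-ranges of distinct disjoint pairs are themselves disjoint, so each column-boundary lies inside at most one pair, and at least $\lfloor S/2\rfloor-(k-1)$ pairs are \emph{unified}. For any unified pair, $s$ is constant on $S_{2j-1}\cup S_{2j}$, and by the consecutive-super-block invariant
\[
\cost(S_{2j-1}\cup S_{2j},s)\;\ge\;\OPT_1(S_{2j-1}\cup S_{2j})\;>\;\sigma,
\]
which yields a first lower bound $\cost(D,s)\ge(S/2-k)\,\sigma$. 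Independently, inside every Type B super-block $S_j$ the hypothesis forces $s$ to be constant on the whole (single) column, so pairing consecutive strips via the strip invariant gives $\cost(S_j,s)\ge\lfloor k_j/2\rfloor\,\sigma$; summing over $j$ and using $k_j\ge 2$ produces a second lower bound $\cost(D,s)\ge\frac{|\B|-S}{2}\,\sigma$.

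The last step is to combine the two bounds, and this is also the main obstacle: they overlap (both ``see'' the cost inside the Type B super-blocks), so I cannot add them, but the maximum suffices. A short case split on $S$ does the job: if $S\le|\B|/2+4k$ then $\frac{|\B|-S}{2}\ge|\B|/4-2k$; if $S>|\B|/2+4k$ then $S/2-k>|\B|/4+k\ge|\B|/4-2k$. Either way $\cost(D,s)\ge(|\B|/4-2k)\,\sigma$, which is the required bound (non-trivial precisely when $|\B|>8k$, matching the hypothesis of Property 3). The delicate point is exactly this trade-off: the super-block pair bound is strong when $S$ is comparable to $|\B|$, the strip pair bound is strong when $S\ll|\B|$ (many strips per Type B super-block), and only by using both arguments together does one recover the $|\B|/4$ coefficient of the lemma.
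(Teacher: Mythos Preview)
Your proof is correct. For Property~3 you take a genuinely different route from the paper. The paper works directly at the level of the blocks $B_1,\ldots,B_{|\B|}$: it argues (using that every recursive call returns at least two strips) that among the disjoint consecutive pairs $(B_{2i-1},B_{2i})$ there are at least $|\B|/4$ that are ``same-type'' (both produced by a recursive call or both not), shows each such pair has $\OPT_1(B_{2i-1}\cup B_{2i})>\sigma$, and then removes the at most $2k$ pairs containing a vertically intersected block. Your argument instead introduces a two-level hierarchy (super-blocks and, inside Type~B super-blocks, strips), proves a separate lower bound at each level, and finishes with a case split on the number $S$ of super-blocks.

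What each approach buys: the paper's argument is shorter, but its count of $|\B|/4$ good disjoint pairs is stated tersely and the case of two consecutive recursive blocks coming from \emph{different} columns is not addressed explicitly. Your two-bound scheme sidesteps this cross-column issue entirely: super-block pairs handle the column structure, strip pairs handle the within-column structure, and the case split on $S$ recovers the $|\B|/4$ coefficient without ever needing to pair blocks across a recursive-call boundary. The price is a slightly longer argument and the need to observe that the two bounds overlap (so one must take the maximum rather than the sum).
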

\ifproofs
\begin{proof}
We consider the variables defined in Algorithm~\ref{PTpartitionOneDim}. 

\textbf{Proof of (i): }By construction it immediately follows that $\B$ is a partition of $D$. 

\textbf{Proof of (ii): }Consider a sub-signal $B \in \B$. We prove (ii) for each of the following cases: 
\textbf{Case (a):} $B$ was added to $\B$ at Line~\ref{line:addToOutput1D_3}, 
and \textbf{Case (b): }$B$ was either added to $\B$ at Line~\ref{line:addToOutput1D_2}


\textbf{Case (a): }In this case, by the condition at Line~\ref{line:condOptIsBig}, $B$ must satisfy that $\OPT_1(B) \leq \sigma$.

\textbf{Case (b): }In this case, $B$ was returned via a recursive call. Hence, this case holds trivially by Case (a) above.

Therefore, (ii) above holds by combining Cases (a)--(b).

\textbf{Proof of (iii): }Let $t = |\B|$ and assume for simplicity that $t$ is an even number. Recall that the index of each sub-signal in $\B$ indicates its order of insertion to $\B$, i.e., $B_1$ is the first sub-signal that was inserted to $\B$ and $B_t$ was the last such sub-signal to be inserted to $\B$.

Observe that each recursive call $\B' := \partitionAlgOneD(B^T, \sigma)$ at Line~\ref{line:1dRecursive} returns at least $|\B'| \geq 2$ sub-signals. This is because the recursive call happens only when $\OPT_1(B^T) = \OPT_1(B) > \sigma$, which can only happen if $\left|\br{(i,j) \mid \left((i,j),y\right) \in B}\right| > 1$, i.e., $B^T$ exceeds the maximum tolerance, and can indeed be partitioned into sub-signals.
Hence, there are at least $t/4$ distinct pairs of consecutive sub-signals $B_i$ and $B_{i+1}$ that were both either computed via the recursive call or both were not computed via the recursive call. We now show that each such pair satisfies $\OPT_1(B_i \cup B_{i+1}) > \sigma$.

Consider a pair of consecutive sub-signals $B_i$ and $B_{i+1}$ that were both not computed via the recursive call at Line~\ref{line:1dRecursive}. Let $B' \subseteq B_{i+1}$ contain the elements $((i,j),y) \in B_{i+1}$ with the smallest value of $i$ over all elements of $B_{i+1}$. By the greedy partition loop at Line~\ref{line:condOptIsBig} we obtain that $\OPT_1(B_i \cup B') > \sigma$. 
We now have that
\[
\OPT_1(B_i \cup B_{i+1}) \geq \OPT_1(B_i \cup B') > \sigma,
\]
where the first inequality is by Claim~\ref{claim:var}.

Consider a pair of consecutive sub-signals $B_i$ and $B_{i+1}$ that were both computed via the recursive call at Line~\ref{line:1dRecursive}. Then, similarly to the previous argument, we obtain that $\OPT_1(B_i \cup B_{i+1}) > \sigma$.

Now, let $s$ be a $k$-segmentation that does not horizontally intersect $\B$, i.e., it does not horizontally intersect any $B \in \B$. By the definition of $s$, there might be at most $2k$ sub-signals in $\B$ which are vertically intersected by $s$. Hence, among the $t/4$ distinct consecutive pairs of sub-signals discussed above there are at least $t/4-2k$ such pairs that are not intersected by $s$.

Since $\B$ is a partition of $D$, we have that $\cost(D,s)$ is at least the sum of $\OPT_1(B_i \cup B_{i+1}) \geq \sigma$, over the above $t/4-2k$ non-intersected pairs of sub-signals. Hence,
\[
\cost(D,s) \geq \left(\frac{t}{4}-2k\right) \sigma = \left(\frac{|\B|}{4}-2k\right) \sigma.
\]

\textbf{Proof of (iv): }The greedy Algorithm~\ref{PTpartitionOneDim} can be implemented so that it computes only $O(|D|)$ operations. The most costly operation is the computation of $\OPT_1(B)$ for some sub-signal $B$. We now argue that this can be computed in $O(1)$ time.
Let $B$ be a sub-signal and let $\mu_B = 1/|B| \sum_{(x,y) \in B} y$ be its mean value. Observe that
\begin{equation} \label{eq:openSquared}
\begin{split}
\OPT_1(B) & 
= \sum_{(x,y) \in B}(y-\mu_B)^2\\=
& \sum_{(x,y) \in B} y^2 + |B| \cdot \mu_B -2 \mu_B \sum_{(x,y) \in B} y.
\end{split}
\end{equation}
By precomputing and storing some statistics at each of the signal's elements, then the three terms on the right hand side of~\eqref{eq:openSquared} can all be evaluated in $O(1)$ time for any sub-signal $B$.
Hence, the total running time of Algorithm~\ref{PTpartitionOneDim} is linear in the input size.

\end{proof}
\fi

We now restate and prove Lemma~\ref{PTpartition2D} from Section~\ref{sec:balancedPartition}.
\begin{lemma} \label{PTpartition2D_appendix}
Let $D$ be an $n\times m$ signal, $k\geq 1$ be an integer, $\varepsilon \in (0,1/4)$ be an error parameter, and
$s:[n]\times[m] \to \REAL$ be an $(\alpha,\beta)_k$-approximation of
$D$. Define $\sigma := \frac{\cost(D,s)}{\alpha}$ and $\gamma :=
\frac{\varepsilon^2}{\beta k}$. Then algorithm
$\partitionAlg(D,\gamma,\sigma)$ outputs a partition $\B$ of $D$ 
that is an $\left(O\left(\frac{\alpha}{\gamma^2}\right), \gamma^2\sigma, O\left(\frac{k\alpha}{\gamma}\right)\right)_k$-balanced partition in $O(|D|)$ time.
\end{lemma}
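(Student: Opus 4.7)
The plan is to verify the three balanced-partition properties and the linear running time, using Lemma~\ref{lem:colPartition} as the main tool.

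Property (ii) is immediate: each $B\in\B$ arises from some call to $\partitionAlgOneD(\cdot,\gamma^2\sigma)$ (directly at Line~\ref{line:partitionSlice} of Algorithm~\ref{PTpartitionAllDims} or inside its recursion at Line~\ref{line:1dRecursive}), so Lemma~\ref{lem:colPartition}(ii) yields $\OPT_1(B)\leq\gamma^2\sigma$ for free.

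For Property (i), I first bound the number $T$ of outer-loop iterations (``horizontal slices''). Each iteration except possibly the last ends either because the enlarged $\B'$ overshoots $1/\gamma$ (``extended'' iteration) or because even the one-row initial $\B'$ already exceeds $1/\gamma$ (``edge'' iteration); in both cases there is a slice $S$ (the iteration's ``witness'') with $|\partitionAlgOneD(S,\gamma^2\sigma)|>1/\gamma$. Since $1/\gamma=\beta k/\varepsilon^2>8\beta k$ under $\varepsilon<1/4$, I apply Lemma~\ref{lem:colPartition}(iii) to each witness with its $k$-parameter taken to be $\beta k$ and with the bicriteria approximation $s$ as the reference segmentation: whenever $s$ does not horizontally intersect $S$, one gets $\cost(S,s)\geq(|\B'|/4-2\beta k)\gamma^2\sigma=\Omega(\gamma\sigma)$. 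Consecutive witnesses share at most one row, so $\sum_S\cost(S,s)\leq 2\cost(D,s)=2\alpha\sigma$; and $s$, being a $\beta k$-segmentation, can horizontally intersect at most $O(\beta k)$ of them. Combining these two facts gives $(T-O(\beta k))\cdot\Omega(\gamma\sigma)\leq 2\alpha\sigma$, which rearranges to $T\in O(\alpha/\gamma)$ using $\beta k\leq\alpha/\gamma$. Non-edge iterations each contribute $\leq 1/\gamma$ sub-signals to $\B$, for an aggregate of $O(\alpha/\gamma^2)$. Edge iterations are single-row slices, and Lemma~\ref{lem:colPartition}(iii) bounds each such $|\B'_R|\leq 4\cost(R,s)/(\gamma^2\sigma)+O(\beta k)$; summing over the pairwise-disjoint edge rows yields $O(\alpha/\gamma^2)+O(\beta k\cdot T)=O(\alpha/\gamma^2)$. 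Hence $|\B|\in O(\alpha/\gamma^2)$.

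For Property (iii), fix any $k$-segmentation $\hat{s}$ and split the slices into H-slices (those horizontally intersected by $\hat{s}$) and the rest. Since $\hat{s}$ has only $O(k)$ horizontal boundaries, and edge-case slices (being single rows) can never be horizontally intersected by definition, there are at most $O(k)$ H-slices, all of them normal, each contributing $\leq 1/\gamma$ intersected sub-signals, for an $O(k/\gamma)$ total. On every non-H-slice $R$, $\hat{s}|_R$ depends only on the column coordinate, so the single-column (narrow) sub-signals coming from the recursive call are not intersected at all, and the wide sub-signals are controlled by the very consecutive-pair argument that drives Lemma~\ref{lem:colPartition}(iii)'s proof: at most $O(k)$ wide sub-signals can be intersected per non-H-slice. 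Summing over the $T=O(\alpha/\gamma)$ slices gives the desired $O(k\alpha/\gamma)$ bound on total intersections.

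The main obstacle I expect is the $O(|D|)$ running time, since the inner loop of Algorithm~\ref{PTpartitionAllDims} naively recomputes $\partitionAlgOneD$ from scratch every time a row is appended, which would cost $\Theta(n^2m)$ in the worst case. The savings require two implementation details: (a) precomputing 2-D prefix sums of $y$ and $y^2$ so that $\OPT_1(B)$ can be evaluated in $O(1)$ on any sub-signal (mirroring Lemma~\ref{lem:colPartition}(iv)), and (b) maintaining the column-partition incrementally while appending rows and terminating each $\partitionAlgOneD$ call as soon as more than $1/\gamma$ pieces are produced. A standard amortised charging of each column-split to a distinct entry of $D$ then delivers the claimed $O(nm)=O(|D|)$ bound.
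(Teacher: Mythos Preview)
Your proof is correct and follows essentially the same approach as the paper. Both arguments verify the three balanced-partition properties by (a) reading off the $\OPT_1$ bound from Lemma~\ref{lem:colPartition}(ii), (b) bounding the number of horizontal slices via Lemma~\ref{lem:colPartition}(iii) applied with the $\beta k$-segmentation $s$ and then splitting into multi-row slices (each $\le 1/\gamma$ blocks) versus single-row slices (handled by summing the per-row cost bound), and (c) counting intersections by separating the $O(k)$ horizontally-cut slices from the rest. The only cosmetic differences are that you work with single ``witness'' slices while the paper works with unions of consecutive slice pairs, and your running-time sketch invokes incremental maintenance plus amortized charging whereas the paper argues each call at Line~\ref{line:partitionSlice} can be made $O(m)$ via prefix sums; both treatments explicitly defer the low-level details.
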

\ifproofs
\begin{proof}
To prove that $\B$ is a $\left(O\left(\frac{\alpha}{\gamma^2}\right), \gamma^2\sigma, O\left(\frac{k\alpha}{\gamma}\right)\right)_k$-balanced partition as in Definition~\ref{def:balancedPartition}, we need to prove the following properties:
\begin{enumerate}
    \item $\OPT_1\left(B\right) \leq \gamma^2\sigma$ for every $B \in \B$.
    \item $\B$ is a partition of $D$ whose size is $|\B| \in O\left(\frac{\alpha}{\gamma^2}\right)$.
    \item For every $k$-segmentation $\hat{s}$ there are  $O\left(\frac{k\alpha}{\gamma}\right)$ sub-signals $B \in \B$ for which $\hat{s}$ assigns at least $2$ distinct values, i.e., $\left|\br{\hat{s}(x) \mid (x,y) \in B}\right| \geq 2$.
\end{enumerate}

\textbf{Proof of (i): }Observe that the output set $\B$ contains the the union of multiple output sets $\B' := \partitionAlgOneD( \cdot , \gamma^2\sigma)$ computed via calls to Algorithm~\ref{PTpartitionOneDim}. 
By Property (ii) of Lemma~\ref{lem:colPartition}, every sub-signal $B \in \B'$ in such output set $\B'$ satisfies that $\OPT_1\left(B\right) \leq \gamma^2\sigma$. Hence, Property (i) of Lemma~\ref{PTpartition2D} immediately holds.

\textbf{Proof of (ii): }By the greedy construction it holds that $\B$ is a partition of $D$. 
We now prove that the number of times that Line~\ref{line:addToOutput} was executed is $t \in O(\alpha/\gamma)$, i.e., the number of times we append a set of signals $last\B'$ to $\B$ is at most $O(\alpha/\gamma)$. 
Let $\B_1 ,\cdots, \B_t$ denote the set of sub-signals $last\B'$ in each of the $t$ executions of Line~\ref{line:addToOutput}, i.e., $\B_1 := last\B'$ at the first time Line~\ref{line:addToOutput} was executed. Each such set is called a \emph{horizontal set}.

Recall that $s \in \S_{\beta k}$ is a $\beta k$-segmentation. 
First of all, by the definition of a $\beta k$-segmentation function there are at most $2\beta k$ sets among the horizontal sets $\B_1,\cdots, \B_t$ which can be horizontally intersected by $s$.

Consider two consecutive horizontal sets $\B_i, \B_{i+1}$ that are not horizontally intersected by $s$, and let $H = \bigcup_{B \in B_{i}\cup B_{i+1}}B$ be the union of all the sub-signals in $\B_i \cup \B_{i+1}$.
We now argue that the loss $\cost(H,s)$ is at least $O(\gamma \sigma)$. Since $\B_i$ and $\B_{i+1}$ are two different horizontal sets, by the greedy construction we know that their union $H$ could have been partitioned via a call to $\E := \partitionAlgOneD(H , \gamma^2\sigma)$ into a set $\E = \br{E_1,\cdots,E_{|\E|}}$ of at least $|\E| \geq \frac{1}{\gamma}$ blocks. 
By substituting $D = H, k=\beta k, \B = \E$ and $\sigma = \gamma^2 \sigma$ in Property (iii) of Lemma~\ref{lem:colPartition}, for a $\beta k$-segmentation $s$, we have that
\[
\begin{split}
\cost(H,s) & \geq \left(\frac{|\E|}{4} - 2\beta k\right) \gamma^2 \sigma
\geq \left(\frac{1}{4\gamma} -2\beta k\right) \gamma^2\sigma\\ 
& \geq \left(\frac{1}{4\gamma} -\frac{\beta k}{9 \varepsilon^2}\right) \gamma^2\sigma
= \left(\frac{1}{4\gamma} -\frac{1}{9\gamma}\right) \gamma^2\sigma \\
& \geq \gamma \sigma/2,
\end{split}
\]
where the second derivation holds for $\varepsilon \in (0,1/3)$, and the third derivation is by the definition of $\gamma$.

Assume by contradiction that there are more than $\frac{2\alpha}{\gamma}$ such pairs of consecutive horizontal sets $\B_i, \B_{i+1}$, which are not horizontally intersected by $s$. The loss of those slices to $s$ would be bigger than $\frac{2\alpha}{\gamma} \cdot \frac{\gamma\sigma}{2} = \alpha \sigma = \cost(D,s)$, which is a contradiction. Therefore, the number of pairs of consecutive horizontal sets, which are not horizontally intersected by $s$, cannot exceed $O\left(\frac{\alpha}{\gamma}\right)$. Observe that the total number of horizontal sets  that can be intersected by $s$ is at most $2\beta k$.
Hence, the total number of horizontal sets is at most
\begin{equation}\label{eq:boundm}
m \in O\left(\frac{\alpha}{\gamma} + 2\beta k\right) \in O\left(\frac{\alpha}{\gamma}\right).
\end{equation}

We now prove that the number of output cells is at most $|\B| \in O\left(\frac{\alpha}{\gamma^2}\right)$ in two steps. In step (i) we consider the horizontal sets that contain more than one row of $D$ and show that they contain a total of $O\left(\frac{\alpha}{\gamma^2}\right)$ sub-signals. In step (ii) we consider the horizontal sets that contain exactly one row of $D$ and prove that they also contain a total of $O\left(\frac{\alpha}{\gamma^2}\right)$ sub-signals.

\textbf{Step (i): }By~\eqref{eq:boundm}, the total number of horizontal sets is at most $m \in O(\frac{\alpha}{\gamma})$. Therefore, the total number of horizontal slices that contain more than one row of $A$ is also at most $O\left(\frac{\alpha}{\gamma}\right)$.
By the construction in Algorithm~\ref{PTpartitionAllDims}, each such horizontal set $\B_i$ with more than $1$ row of $A$ is partitioned into at most $\frac{1}{\gamma}$ sub-signals. Hence, the total number of blocks in horizontal sets than contain more than one row of $D$ is at most $O\left(\frac{\alpha}{\gamma} \cdot \frac{2}{\gamma}\right) = O\left(\frac{\alpha}{\gamma^2}\right)$.

\textbf{Step (ii): }Consider all the horizontal sets $\B_i$ which contain one row of $D$, and which have been partitioned into $|\B_i| \leq 2\beta k \leq 1/\gamma$ blocks. The total number of blocks in such horizontal slices is thus bounded by the maximum number of horizontal slices $m \in O(\alpha/\gamma)$ times $1/\gamma$ for a total of at most $O(\alpha / \gamma^2)$ blocks.

For the rest of this step, we assume that all horizontal sets $\B_i$ have $|\B_i| \geq 2\beta k$. Let $G \subseteq [t]$ contain the indices of the horizontal sets which contain exactly one row of $D$, and let $i\in G$. Observe that $\B_i$ was computed, at some point, via a call $\B_i := \partitionAlgOneD(\cup_{B \in \B_i}B, \gamma^2\sigma)$. Also, since the points $\cup_{B \in \B_i}B$ of $\B_i$ all have the same row index,  observe that $\B_i$ cannot be horizontally intersected by $s$. Therefore, by substituting $D = \cup_{B \in \B_i}B, k=\beta k$ and $\sigma = \gamma^2\sigma$ in Property (iii) of Lemma~\ref{lem:colPartition} we obtain that
\begin{equation} \label{eq:boundBj}
\cost(\B_i,s) \geq \left(\frac{|\B_i|}{4}-2\beta k\right)\cdot \gamma^2\sigma.
\end{equation}

Furthermore, we have that
\begin{equation} \label{eq:boundsummj}
\alpha \cdot \sigma = \cost(D,s) \geq \sum_{i \in G} \cost(\B_i,s) \geq \sum_{i \in G}\left(\frac{|\B_i|}{4}-2\beta k\right) \cdot \gamma^2\sigma,
\end{equation}
where the first derivation is by the definition of $\sigma$, the second derivation holds since $\br{(x,y) \in B \mid B \in \B_i, i\in G} \subseteq D$, and the third derivation is by~\eqref{eq:boundBj}.
Rearranging terms in~\eqref{eq:boundsummj} concludes Step (ii) as
\[
\begin{split}
\sum_{i \in G} |\B_i|& 
\leq \frac{4\alpha}{\gamma^2} + 8\sum_{i \in G}\beta k 
\leq \frac{4\alpha}{\gamma^2} + 8t \beta k \\
& \leq \frac{4\alpha}{\gamma^2} + \frac{8\alpha \beta k}{\gamma} 
\in O\left(\frac{\alpha}{\gamma^2}\right).
\end{split}
\]
Therefore, the total number of blocks in horizontal sets that contain exactly one row of $D$ is at most $O\left(\frac{\alpha}{\gamma^2}\right)$.

\textbf{Proof of (iii): }
By the properties above we have that: (i) there are at most $O(\alpha/\gamma)$ horizontal sets, and (ii) each horizontal set $\B_i$ either contains at most $O(1/\gamma)$ sub-signals, or all the points $((i,j),y) \in B$ of all the blocks $B \in \B_i$ have the same row index $i$. 
Let $\hat{s}$ be a $k$-segmentation. $\hat{s}$ can horizontally intersect all the (at most) $1/\gamma$ sub-signals of at most $k$ horizontal sets, and can vertically intersect at most $1$ block from each of the $O(\alpha/\gamma)$ horizontal sets. Hence, the total number of intersected sub-signals is $O(k\alpha/\gamma)$.

\textbf{Computational time: }We now prove that $\B$ can be computed in $O(|D|)$ time. The computational time of Algorithm~\ref{PTpartitionAllDims} is dominated by the computational time of Line~\ref{line:partitionSlice} where we partition a slice $S$. Using Algorithm~\ref{PTpartitionOneDim} we can partition each such slice $S$ in linear $O(|S|)$ time; see Lemma~\ref{lem:colPartition}. 
Therefore, the naive implementation, i.e. by calling Algorithm~\ref{PTpartitionOneDim} for every slice $S$, will result in $O(|D|^2)$ overall time, since many rows of $D$ participate many times in such a call to Algorithm~\ref{PTpartitionOneDim}.

However, we can implement Line~\ref{line:partitionSlice} in $O(m)$ time, rather than $O(|S|)$ time, by preprocessing the input signal $D$, in linear time $O(|D|)$, and storing some statistics for every element $((i,j),y) \in D$. For example, one can store the sum of values and squared values over all elements $((i',j'),y')$ where $i'< i$ or $j'<j$. Using those values we can compute $\OPT(B)$ in $O(1)$ time for every sub-signal $B$ of $D$. Now, using such statistics (and possibly more statistics), Line~\ref{line:partitionSlice} can be implemented in $O(m)$ time via a greedy algorithm that iterates over the points of the last row $R = \br{((i,j),y) \in D \mid i = r_{end}}$ added to $S$ (i.e. with no need to iterate over other elements of $S$). We leave the small details to the reader.

\end{proof}
\fi

\section{Coreset Construction} \label{sec:coresetConstruction_appendix}
In this section, we provide the proof of correctness for our main coreset construction algorithm presented in Algorithm~\ref{Coreset}; see Theorem~\ref{theorem:coreset_appendix}. 
Furthermore, we provide an algorithm than gets as input a $k$-segmentation $s$, as well as a $(k,\varepsilon)$-coreset for some input dataset $D$, which was computed using Algorithm~\ref{Coreset}. The algorithm returns a $(1+\varepsilon)$-approximation to the loss $\cost(D,s)$, in $O(k|C|)$ time; see Algorithm~\ref{CoresetQuery} and full details in Lemma~\ref{lemma:coresetQuery}.

In what follows, for an $n\times m$ sub-signal $B$ and a weight function $u:B\to[0,\infty)$, we abuse notation and denote $u((a,b))$ by simply $u(a,b)$ for $(a,b) \in B$.

\textbf{Some intuition behind Algorithm~\ref{CoresetQuery}. }Given a $(k,\varepsilon)$-coreset $(C,u)$ for an input dataset $D =\br{(x_1,y_1),\cdots,(x_{N},y_{N})}$, and a $k$-segmentation $s$, the algorithm outputs a $(1+\varepsilon)$-approximation to $\cost(D,s)$ in time that depends only on $k$ and $|C|$.

During the computation of $(C,u)$ in Algorithm~\ref{Coreset}, a partition $\B$ of $D$ was computed. Then, for every set $B$ in the partition $\B$, a representative pair $(C_B,u_B)$ for $B$ was computed and added to $C$. 

To approximate the loss $\cost(D,s)$, we will approximate individually $\cost(B,s)$ for every $B \in \B$, and return the sum of those losses.
Therefore, we now consider a single set $B \in \B$, and consider the following two cases. 

\textbf{Case (i) : }$s$ assigns the same value for all the elements of $B$. Then, by construction, it is guaranteed that $\cost(B,s) = \sum_{(x,y) \in C_B} u_B(x,y)(s(y)-y)^2$. Therefore, in this case, $\cost(B,s)$ will be accurately estimated using $(C_B,u_B)$.

\textbf{Case (ii) : }$s$ assigns more than one unique value to the elements of $B$. In this case, if we ignore the computational time, we would ideally want to compute a ``smoothed version'' $(S,w)$ of $(C_B,u_B)$, as shown in Fig.~\ref{fig:blockCoreset} (see~\eqref{eq:propS}-~\eqref{eq:propSSum} below for formal details). Then, we would return the loss $\sum_{(x,y) \in S} w(x,y)(s(y)-y)^2$. However, computing $(S,w)$ is not necessary, since there are many subsets of $B$ in which all the elements $x \in B$ have simultaneously the same label in $S$ and are assigned the same value by $s$. Combining this with the fact that those subsets are of rectangular (simple) shape, we obtain that the loss over those subsets can be evaluated efficiently, as computed in Algorithm~\ref{CoresetQuery}.

\begin{algorithm}[h]
    \caption{\textsc{$\coresetQuery((C,u), s)$}; see Lemma~\ref{lemma:coresetQuery}}
    \label{CoresetQuery}
    \SetKwInOut{Input}{Input}
	\SetKwInOut{Output}{Output}
    \Input{A $(k,\varepsilon)$-coreset $(C,u)$ which was returned from a call to $\coresetAlg(D,k,\varepsilon/\Delta)$ in Algorithm~\ref{Coreset}, for some $n\times m$-signal $D$, $k\geq 1$, $\varepsilon \in (0,1)$ and a sufficiently large $\Delta \geq 1$.\\
    A $k$-segmentation (or $k$-tree) $s$.}
    \Output{A $(1+\varepsilon)$-approximation to the loss $\cost(D,s)$.}

    $loss := 0$
    
    \For{every $4$ consecutive elements $\hat{C} = \br{(a_i,b_i)}_{i=1}^4$ in $C$\label{line:forBlockCoreset}} 
    {
        Denote by $B$ the sub-signal that corresponds to $C'$. \label{line:correspondingB}\tcp{By construction in Algorithm~\ref{Coreset}, the coordinates $a$ of the $4$ elements $(a,b) \in \hat{C}$ are the corners of $B$.}
        
        $z := |\br{s(x) \mid (x,y) \in B}|$ \label{line:compNumIntersection}
        
        \If {$z = 1$ \tcp{i.e., $s$ does not intersect $B$}} 
        {
            $loss_{\hat{C}} := \sum_{(x,y) \in C_B} u_B(x,y)(s(x)-y)^2$. \label{line:compLossCase1} \tcp{note that $s(x_1) = s(x_2)$ for every $x_1,x_2 \in B$} \label{line:ifZEq1}
        } 
        \Else 
        { \tcp{In this case, $s$ intersects $B$}
        
            Denote by $S$ the partition that $s$ induces onto $[n]\times [m]$. \label{line:startSecondCase} \tcp{$S$ contains $|S| \leq k$ subsets of $[n]\times [m]$.}
            
            $i := 1$
            
            \For{every $S' \in S$ \label{line:forEverySPrimeInS}} {
                Denote by $\ell$ the label that $s$ assigns to the elements of $S'$ i.e., $s(x,y) = \ell$ for every $(x,y) \in S'$
                
                $z := \abs{B \cap S'}$ \tcp{The number of element in the intersection of the $S'$ and the subset of $[n]\times [m]$ that is represented by $C'$.}
                
                $loss_{\hat{C}} := 0$
                
                \While{$z \geq 1$}
                {
                    \If{$u(a_i,b_i) \leq z$}
                    {
                        $loss_{\hat{C}} := loss_{\hat{C}} + u(a_i,b_i) \cdot (\ell-b_i)^2$ \label{line:LossCAddIf}
                        
                        $u(a_i,b_i) := 0$
                        
                        $z := z - u(a_i,b_i)$
                        
                        $i := i + 1$ \label{line:WhileZGeq1IncrementI}
                    } \Else {
                        $loss_{\hat{C}} := loss_{\hat{C}} + z \cdot (\ell-b_i)^2$ \label{line:LossCAddElse}
                        
                        $u(a_i,b_i) := u(a_i,b_i)- z$
                        
                        $z := 0$ \label{line:WhileZGeq1ZeroZ}
                    }
                }
                $loss := loss + loss_{\hat{C}}$. \label{line:addToLoss}
            }
        }
    }

    \Return $loss$
\end{algorithm}

\begin{lemma} \label{lemma:coresetQuery}
Let $D =\br{(x_1,y_1),\cdots,(x_{N},y_{N})}$ be an $n\times m$ signal i.e., $N := nm$. Let $k\geq 1$ be an integer, $\varepsilon \in (0,1/4)$ be an error parameter, and $(C,u)$ be the output of a call to $\coresetAlg(D,k,\varepsilon)$ (see Algorithm~\ref{Coreset}). Let $s$ be a $k$-segmentation (in particular, a $k$-tree). Finally, let $loss$ be the output of a call to $\coresetQuery((C,u),s)$; see Algorithm~\ref{CoresetQuery}. Then there is a sufficiently large constant $\Delta\geq 1$ such that
\[
\abs{\cost(D,s) - loss} \leq \Delta \varepsilon \cdot \cost(D,s).
\]
Moreover, $loss$ can be computed in $O(k|C|)$ time.
\end{lemma}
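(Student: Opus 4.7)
The plan is to decompose the error block-by-block. By construction of Algorithm~\ref{Coreset}, $C$ is the disjoint union of the four-element sets $C_B$ over $B\in\B$, so the outer loop of Algorithm~\ref{CoresetQuery} processes these groups in order; write $loss=\sum_{B\in\B}loss_B$ for the contributions accordingly. Since also $\cost(D,s)=\sum_{B\in\B}\cost(B,s)$, it suffices to bound $|\cost(B,s)-loss_B|$ per block and aggregate. For every $B$ \emph{not} intersected by $s$, the algorithm enters the branch at Line~\ref{line:ifZEq1}; expanding $\cost(B,s)=|B|\ell^2-2\ell\sum_{(x,y)\in B}y+\sum_{(x,y)\in B}y^2$ with the unique value $\ell$, and substituting the three Caratheodory identities $\sum_j u_B(a_j,b_j)=|B|$, $\sum_j u_B(a_j,b_j)\,b_j=\sum y$, $\sum_j u_B(a_j,b_j)\,b_j^2=\sum y^2$ guaranteed by Line~\ref{line:compBlockCoreset}, yields $loss_B=\cost(B,s)$ \emph{exactly}, with zero error.

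The main technical step is the intersected case. Here $s$ partitions $B$ into at most $k$ regions $B_1,\ldots,B_p$ with $s\equiv\ell_t$ on $B_t$, and the inner while-loop of Algorithm~\ref{CoresetQuery} produces nonnegative allocations $w_{t,j}\geq 0$ satisfying the two invariants $\sum_j w_{t,j}=|B_t|$ (the exit condition of the while-loop) and $\sum_t w_{t,j}=u_B(a_j,b_j)$ (weight conservation across regions). The plan is to expand both $\cost(B,s)$ and $loss_B$ around the block mean $\mu_B=|B|^{-1}\sum_{(x,y)\in B}y$. The variance tail $\sum_t\sum_j w_{t,j}(b_j-\mu_B)^2=\sum_j u_B(a_j,b_j)(b_j-\mu_B)^2$ equals $\OPT_1(B)=\sum_t\sum_{(x,y)\in B_t}(y-\mu_B)^2$ by Caratheodory, while the quadratic term $\sum_t|B_t|(\ell_t-\mu_B)^2$ appears on both sides with matching coefficients, so everything cancels except
\[
\cost(B,s)-loss_B \;=\; -2\sum_{t=1}^p(\ell_t-\mu_B)\,A_t,\qquad A_t := \sum_{(x,y)\in B_t}(y-\mu_B)-\sum_j w_{t,j}(b_j-\mu_B).
\]
Cauchy-Schwarz applied to each of the two sums in $A_t$ yields $|A_t|\leq\sqrt{|B_t|}(\sqrt{p_t}+\sqrt{q_t})$, where $p_t:=\sum_{(x,y)\in B_t}(y-\mu_B)^2$ and $q_t:=\sum_j w_{t,j}(b_j-\mu_B)^2$ satisfy $\sum_t p_t=\sum_t q_t=\OPT_1(B)$. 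A second Cauchy-Schwarz on $\sum_t|\ell_t-\mu_B|\sqrt{|B_t|}(\sqrt{p_t}+\sqrt{q_t})$, combined with the elementary bound $\sum_t|B_t|(\ell_t-\mu_B)^2\leq 2\cost(B,s)+2\OPT_1(B)$ obtained via $(a+b)^2\leq 2a^2+2b^2$, then delivers the per-block estimate
\[
|\cost(B,s)-loss_B|\in O\bigl(\sqrt{\cost(B,s)\cdot\OPT_1(B)}+\OPT_1(B)\bigr).
\]

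It remains to aggregate. Only blocks in the intersected set $\B_{\mathrm{int}}$ contribute, and Lemma~\ref{PTpartition2D} supplies $|\B_{\mathrm{int}}|\in O(k\alpha/\gamma)$ together with $\OPT_1(B)\leq\gamma^2\sigma$ for each such $B$, so $T:=\sum_{B\in\B_{\mathrm{int}}}\OPT_1(B)\in O(k\alpha\gamma\sigma)$. Using $\sigma\leq\OPT_k(D)\leq\cost(D,s)$, the choice $\gamma=\varepsilon^2/(\beta k)$, and $\alpha/\beta\leq 1$ from the parameters $\alpha\in O(k\log N)$ and $\beta\in k^{O(1)}\log^2 N$ given by Lemma~\ref{lem:ab}, this aggregate satisfies $T\in O(\varepsilon^2)\cdot\cost(D,s)$. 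A final Cauchy-Schwarz step $\sum_B\sqrt{\cost(B,s)\OPT_1(B)}\leq\sqrt{\cost(D,s)\cdot T}$ converts the per-block bounds into a total error of $O(\sqrt{\cost(D,s)\cdot T}+T)=O(\varepsilon)\cdot\cost(D,s)$, which gives the claimed $\Delta\varepsilon\cdot\cost(D,s)$ bound once $\Delta$ absorbs the implicit constants (and consumes the $\varepsilon/\Delta$ rescaling used when $(C,u)$ was constructed). The $O(k|C|)$ running time is immediate: the outer loop has $|C|/4$ iterations, an intersected iteration has at most $k$ region-iterations, and the inner while-loop amortizes to $O(|C_B|+k)=O(k)$ per block since the pointer $i$ advances at most $|C_B|=4$ times and each region is exited in $O(1)$. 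The main obstacle is the intersected-case algebra above, which cleanly ties the residual error to $\OPT_1(B)$; everything else is Cauchy-Schwarz bookkeeping combined with the balanced-partition guarantees of Lemma~\ref{PTpartition2D}.
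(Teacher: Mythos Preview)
Your proposal is correct and takes a genuinely different route from the paper. For the intersected case, the paper introduces an auxiliary object---the \emph{smoothed coreset} $(S,w)$, a per-coordinate spreading of $(C_B,u_B)$ over $B$---and bounds $|\cost(B,s)-\cost((S,w),s)|$ via the $\varepsilon$-weighted inequality $\bigl||a-c|^2-|b-c|^2\bigr|\leq\varepsilon|a-c|^2+(1+1/\varepsilon)|a-b|^2$, arriving at the per-block bound $\varepsilon\,\cost(B,s)+O(\OPT_1(B)/\varepsilon)$; it then identifies $loss_{\hat C}$ with $\cost((\hat S,\hat w),s)$ for a specific smoothed version. You skip the smoothed-coreset abstraction entirely: you read off the invariants $\sum_j w_{t,j}=|B_t|$ and $\sum_t w_{t,j}=u_B(a_j,b_j)$ directly from the while-loop, expand both quantities around the block mean $\mu_B$, observe that the variance and quadratic terms cancel exactly by the Caratheodory identities, and bound the surviving cross term with two Cauchy--Schwarz steps. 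This yields the sharper per-block estimate $O\bigl(\sqrt{\cost(B,s)\,\OPT_1(B)}+\OPT_1(B)\bigr)$, which in fact implies the paper's bound by AM--GM. Your aggregation then needs one more Cauchy--Schwarz over blocks, whereas the paper's per-block bound already separates into a $\cost$-term and an $\OPT_1$-term and sums linearly. Both routes close with the same balanced-partition arithmetic ($|\B_{\mathrm{int}}|\in O(k\alpha/\gamma)$, $\OPT_1(B)\le\gamma^2\sigma$, $\sigma\le\OPT_k(D)$, and $k\alpha\gamma=\alpha\varepsilon^2/\beta\in O(\varepsilon^2)$). Your argument is more elementary and self-contained; the paper's smoothed-coreset viewpoint is more pictorial and perhaps generalizes more readily to other base losses. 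One small wording nit: the parenthetical about ``the $\varepsilon/\Delta$ rescaling used when $(C,u)$ was constructed'' belongs to Theorem~\ref{theorem:coreset}, not to the present lemma, whose hypothesis is $\coresetAlg(D,k,\varepsilon)$; here $\Delta$ simply absorbs the implicit constants.
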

\begin{proof}
We consider the variables defined in Algorithm~\ref{CoresetQuery}.

First, consider a subset $\hat{C}$ of $C$ from some iteration of the For loop at Line~\ref{line:forBlockCoreset} of Algorithm~\ref{CoresetQuery}, and let $B$ be the sub-signal that corresponds to $\hat{C}$, as in Line~\ref{line:correspondingB}.
We now prove that the loss $loss_{\hat{C}}$ computed in the same iteration of the For loop (i.e., at Lines~\ref{line:correspondingB}-~\ref{line:addToLoss}) satisfies the following claim.
\begin{claim} \label{claim:oneBlock}
Let $z = |\br{s(x) \mid (x,y) \in B}|$ be the number of distinct values $s$ assigns to the coordinates of $B$ (as computed in Line~\ref{line:compNumIntersection}). Then,
$loss_{\hat{C}}$ satisfies that
\[
\begin{cases}
\cost(B, s) = loss_{\hat{C}} & \text{if } $z = 1$\\
\left|\cost(B, s) - loss_{\hat{C}}\right| \leq \varepsilon \cdot \cost(B, s) +  O\left(\frac{\OPT_1(B)}{\varepsilon}\right) & \text{otherwise} 
\end{cases}
\]
\end{claim}
\begin{proof}
We prove Claim~\ref{claim:oneBlock} using the following case analysis: (i) $z=1$ and (ii) $z \geq 2$.

\textbf{Case (i): }$z=1$. 
We prove that $\cost(B,s) = loss_{\hat{C}}$.

Since the input coreset $(C,u)$ was computed using Algorithm~\ref{Coreset}, we know that the set $\hat{C}$ was computed at Line~\ref{line:compBlockCoreset} of Algorithm~\ref{Coreset}, along with a weight function $\hat{u}$. Hence, the pair $(\hat{C},\hat{u})$ satisfy, by construction, the following property:
\begin{equation} \label{eq:propHatC}
\sum_{(a,b) \in \hat{C}} \hat{u}((a,b))\cdot (b \mid b^2 \mid 1) = \sum_{(x,y) \in B}(y \mid y^2 \mid 1).
\end{equation}

Now, for any constant $\hat{s} \in \REAL$, we have that
\begin{equation} \label{eq:1meanCoreset}
\begin{split}
& \sum_{(a,b) \in \hat{C}} \hat{u}(a,b) (b-\hat{s})^2\\
& = \sum_{(a,b) \in \hat{C}} \hat{u}(a,b) \cdot b^2 + \hat{s}^2 \sum_{(a,b) \in \hat{C}} \hat{u}(a,b)
-2\hat{s}\sum_{(a,b) \in \hat{C}} \hat{u}(a,b) b\\
& = \sum_{(x,y) \in B} y^2 + \hat{s}^2 \sum_{(x,y) \in B} 1 -2\hat{s}\sum_{(x,y) \in B} y\\
& = \sum_{(x,y) \in B} (y-\hat{s})^2,
\end{split}
\end{equation}
where the second equality is by~\eqref{eq:propHatC}.

Since $z = |\br{s(x) \mid (x,y) \in B}| = 1$, there is a constant $\hat{s} \in \REAL$ such that 
\begin{equation} \label{eq:oneMeanCost}
\cost(B, s) = \sum_{(x,y) \in B} (y-\hat{s})^2. 
\end{equation}
Hence, we have that 
\[
\cost(B,s) = \sum_{(x,y) \in B} (y-\hat{s})^2 =\sum_{(a,b) \in \hat{C}} \hat{u}(a,b) (b-\hat{s})^2 = loss_{\hat{C}},
\]
where the first derivation is by~\eqref{eq:oneMeanCost}, the second derivation is by~\eqref{eq:1meanCoreset}, and the last derivation is by the definition of $loss_{\hat{C}}$ at Line~\ref{line:compLossCase1} of Algorithm~\ref{CoresetQuery}.

\textbf{Case (ii): }$z \geq 2$. 
We prove that
\[
\left|\cost(B, s) - loss_{\hat{C}}\right| \leq \varepsilon \cdot \cost(B, s) +  O\left(\frac{\OPT_1(B)}{\varepsilon}\right).
\]

We first observe that, by the triangle inequality, for any $a,b,c \in \REAL$ we have that
\begin{equation} \label{eq:epsTriangleIneqality}
\begin{split}
||a-c|^2 - |b-c|^2| 
& = ||a-c| - |b-c|| \cdot \left( |a-c| + |b-c| \right) \\
& \leq |a-b| \cdot (2|a-c| + |a-b|) \\
& = |a-b|^2 + 2|a-c|\cdot|a-b| \\
& = |a-b|^2 + 2\sqrt{\varepsilon}|a-c|\cdot \frac{|a-b|}{\sqrt{\varepsilon}} \\
& \leq |a-b|^2 + \varepsilon\cdot |a-c|^2 + \frac{|a-b|^2}{\varepsilon} \\
& = \varepsilon \cdot |a-c|^2 + \left(1 + \frac{1}{\varepsilon}\right) \cdot (a-b)^2, 
\end{split}
\end{equation}
where the second inequality holds since $2xy \leq x^2 + y^2$ for every $x,y \in \REAL$.

\textbf{Smoothed coreset. }
In Algorithm~\ref{Coreset} we computed some small compression $C_B$, along with a weights function $u_B$, for every subset $B$ in the partition of the input. The size $|C_B|$ of this compression is a small constant, independent of the (potentially large) size of $B$. The pair $(C_B,u_B)$ satisfy a set of properties, which we visually demonstrate via this ``smoothed coreset'' notion; see Fig.~\ref{fig:blockCoreset}. Informally, the ``smoothed version'' of $(C_B,u_B)$ is another pair $(C_B’,u_B’)$, such that $C_B’$ contains a duplication of the elements of $C_B$. The number of duplications of every element $c$ from $C_B$ is according to its weight $u_B(c)$.

We now formally define a ``smoothed version'' of a pair $(\hat{C}, \hat{u})$.
A pair $(S, w)$ is said to be a \emph{smoothed version} of the pair $(\hat{C}, \hat{u})$ if it satisfies the following properties:
(i) $(S,w)$ has the same sum of weights, sum of labels, and sum of squared labels as $(\hat{C},\hat{u})$,
(ii) The set of coordinates $\br{a | (a,b) \in S}$ in $S$ covers the entire set of coordinates $\br{x | (x,y) \in B}$ of the original set $B$, with possible duplicates, and
(iii) The sum of weights over all elements in $S$ with the same coordinate is $1$. Formally,
\begin{equation} \label{eq:propS}
\sum_{(a,b) \in S} w((a,b))\cdot (b \mid b^2 \mid 1) = \sum_{(a,b) \in \hat{C}}\hat{u}((a,b))(b \mid b^2 \mid 1),
\end{equation}
\begin{equation}\label{eq:propSNoHoles}
\br{x | (x,y) \in B} = \br{a | (a,b) \in S},
\end{equation}
and
\begin{equation}
\label{eq:propSSum}
\sum_{(a,b) \in S: a=x} w((a,b)) = 1 \text{ for every }(x,y) \in B.
\end{equation}

In what follows, for every pair $(S,w)$ which is a smoothed version of $(\hat{C},\hat{u})$, we prove the following two properties:
We now prove the following two properties: first, that
\begin{equation} \label{eq:smoothedCoresetProp}
\left|\cost(B, s) - \cost((S,w), s)\right| \leq \varepsilon \cdot \cost(B, s) +  O\left(\frac{\OPT_1(B)}{\varepsilon}\right),
\end{equation}
for every every pair $(S,w)$ which is a smoothed version of $(\hat{C},\hat{u})$. Second, we need to prove there is some pair $(\hat{S},\hat{w})$ which is a smoothed version of $(\hat{C},\hat{u})$ that satisfies
\begin{equation} \label{eq:lossSW}
loss_{\hat{C}} = \cost((\hat{S},\hat{w}), s),
\end{equation}
where $loss_{\hat{C}}$ is the loss computed at Lines~\ref{line:correspondingB}-~\ref{line:addToLoss}, using only the pair $(\hat{C},\hat{u})$ (i.e., at the current iteration of the outer-most For loop of Algorithm~\ref{CoresetQuery}), without actually computing $(\hat{S},\hat{w})$).
Case (ii) then immediately holds by combining~\eqref{eq:smoothedCoresetProp} and~\eqref{eq:lossSW} above.

\paragraph{A proof of~\eqref{eq:smoothedCoresetProp}.}
Let $(S,w)$ be a pair which is a smoothed version of $(\hat{C},\hat{u})$.
By definition of $(S,w)$, we have that $w$ sums to $1$ over all $(a,b) \in S$ with the same $a$, as in~\eqref{eq:propSSum}. Therefore, for every $(x,y)\in B$ we can rewrite the term $(y - s(x))^2$ as $\sum_{(a,b) \in S: a = x} w(a,b) (y - s(x))^2$. Now, define $y_B(x) = y$ for every $(x,y) \in B$. We therefore have that
\begin{equation} \label{eq:uSumsTo1}
\begin{split}
\cost(B,s) & = \sum_{(x,y) \in B} (y - s(x))^2\\ 
& = \sum_{(x,y) \in B} \left(\sum_{(a,b) \in S: a=x} w(a,b)\right) \cdot (y - s(x))^2\\ 
& = \sum_{(x,y) \in S} w(x,y) (y_B(x) - s(x))^2,
\end{split}
\end{equation}
where the last equality holds by \eqref{eq:propSNoHoles} and by simply combining the two sums.

We now have that
\begin{align}
& \left|\cost(B, s) - \cost((S, u),s)\right|\nonumber\\ 
& = \Bigg| \sum_{(x,y) \in S} w(x,y) (y_B(x) - s(x))^2 - \sum_{(x,y) \in S} w(x,y) (y-s(x))^2\Bigg| \label{eq:splitBlock2}\\
& = \left| \sum_{(x,y) \in S} w(x,y)\cdot \left((y_B(x) - s(x))^2 - (y - s(x))^2 \right)\right| \nonumber\\
& \leq \sum_{(x,y) \in S} w(x,y)\left| (y_B(x) - s(x))^2 - (y - s(x))^2\right| \label{eq:splitBlock3}\\
& \leq \sum_{(x,y) \in S} w(x,y)\Bigg( \varepsilon \cdot (y_B(x) - s(x))^2 + \left(1+\frac{1}{\varepsilon}\right)(y_B(x)  - y)^2\Bigg) \label{eq:splitBlock4}\\
& = \varepsilon \cdot \sum_{(x,y) \in S} u(x,y)\cdot (y_B(x) - s(x))^2 + \left(1+\frac{1}{\varepsilon}\right) \sum_{(x,y) \in S}  w(x,y)\cdot (y_B(x)  - y)^2 \nonumber\\
& = \varepsilon \cdot \cost(B,s) + \left(1+\frac{1}{\varepsilon}\right) \sum_{(x,y) \in S}  u(x,y)\cdot (y_B(x)  - y)^2, \label{eq:splitBlock6}
\end{align}
where~\eqref{eq:splitBlock2} is by combining~\eqref{eq:uSumsTo1} and the definition of $\cost$, \eqref{eq:splitBlock3} holds since the sum of absolute values is greater or equal than the absolute value of a sum, \eqref{eq:splitBlock4} holds by substituting in~\eqref{eq:epsTriangleIneqality} every term in the sum, and~\eqref{eq:splitBlock6} is by~\eqref{eq:uSumsTo1}.

 We now bound the rightmost term of~\eqref{eq:splitBlock6}. Let $\hat{s} \equiv 1/|B| \sum_{(x,y)\in B}y$ be a $1$-segmentation function that returns the label mean of $B$. We have that
\begin{align}
& \sum_{(x,y) \in S}  w(x,y)\cdot (y_B(x)  - y)^2\nonumber\\
& \leq 2 \cdot \sum_{(x,y) \in S} w(x,y)\cdot \left( (y_B(x) - \hat{s}(x))^2 + (y - \hat{s}(x))^2 \right) \label{eq:boundRightTerm1}\\
& = 2 \sum_{(x,y) \in S} w(x,y)\cdot (y_B(x) - \hat{s}(x))^2\nonumber\\ & \quad\quad+ 2 \sum_{(x,y) \in S} w(x,y)\cdot (y - \hat{s}(x))^2 \nonumber\\
& = 2 \cdot(\cost(B, \hat{s}) + \cost((S,u), \hat{s})) \label{eq:boundRightTerm152}\\
& = 2 \cdot(\cost(B, \hat{s}) + \cost(B, \hat{s})) \label{eq:boundRightTerm2}\\
& = 4 \cdot\cost(B, \hat{s}) \nonumber\\
& = 4 \cdot\OPT_1(B) \label{eq:boundRightTerm4}
\end{align}
where~\eqref{eq:boundRightTerm1} is by the weak triangle inequality, \eqref{eq:boundRightTerm152} is by combining the definition of $\cost$ with~\eqref{eq:uSumsTo1}, \eqref{eq:boundRightTerm2} holds by Case (i) above, and \eqref{eq:boundRightTerm4} holds since the label means minimizes its sum of squared differences to the labels.

Equation~\eqref{eq:smoothedCoresetProp} now holds by combining~\eqref{eq:splitBlock6} and~\eqref{eq:boundRightTerm4}.

\paragraph{A proof of~\eqref{eq:lossSW}.}
To prove~\eqref{eq:lossSW}, in Fig.~\ref{fig:smoothed_coreset_proof} we construct a smoothed version $(\hat{S}, \hat{w})$ of $(\hat{C},\hat{u})$ which satisfies~\eqref{eq:lossSW}. 

Furthermore, by combining the construction of $(\hat{S}, \hat{w})$ with the computation of $loss_{\hat{C}}$ in Lines~\ref{line:startSecondCase}-\ref{line:addToLoss} of Algorithm~\ref{CoresetQuery}, we obtain, as desired, that $loss_{\hat{C}} = \cost((\hat{S},\hat{w}), s)$.

\begin{figure}
    \centering
    \includegraphics[width=\textwidth]{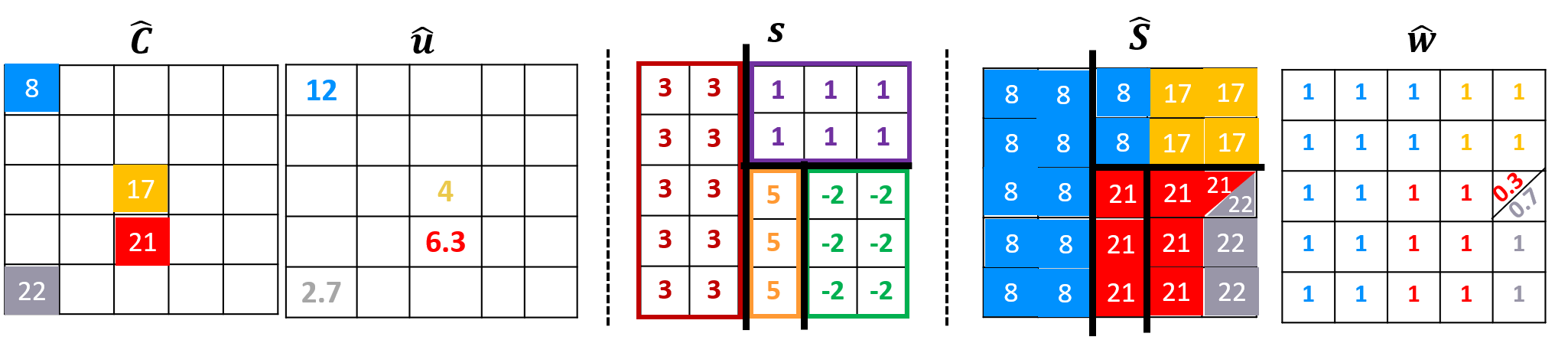}
    \caption{\textbf{(Left):} The pair $(\hat{C},\hat{u})$. \textbf{(Middle): }A $4$-segmentation $s$, which induces a partition of $[5]\times[5]$ into $4$ sets $\B = B_1 \cup B_2 \cup B_3 \cup B_4$ where $B_1 = \br{(1,1),(2,1),(3,1),(4,2),\cdots,}$, $B_2 = \br{(1,3),(2,3),(1,4),(2,4),\cdots,}$, $B_3 = \br{(3,3),(4,3),(5,3)}$, $B_4 = \br{(3,4),(4,4),(4,5),(3,5),\cdots}$. \textbf{(Right): }A smoothed version $(\hat{S}, \hat{w})$ of $(\hat{C},\hat{u})$. There can be more than one unique smoothed version for the same pair $(\hat{C},\hat{u})$; see Properties in~\eqref{eq:propS}-\eqref{eq:propSSum}.
    The pair $(\hat{S}, \hat{w})$ is constructed by iterating over every set $B \in \B$. Every element in $B$ is assigned to a label from the labels of the $4$-coreset points ($8, 17, 21, 22$) as follows: If $|B| > \hat{u}(l_i)$, then $\hat{u}(l_i)$ elements of $B$ are assigned to $l_1$ and $|B| - \hat{u}(l_i)$ are assigned to $l_{i+1}$. If $|B| \leq \hat{u}(l_i)$ then all the elements of $B$ are assigned to $\hat{u}(l_i)$, and $|B|$ is subtracted from $\hat{u}(l_i)$, and so on. If $\hat{u}$ assigns fractional weights, then some elements of $B$ might be assigned to more than one label, as long as the sum of weights over every element in $B$ is $1$. By construction, $(\hat{S},\hat{w})$ satisfies Properties~\eqref{eq:propSNoHoles}-\eqref{eq:propSSum}. Hence, $(\hat{S},\hat{w})$ is a smoothed version of $\hat{C},\hat{u})$. Computing $\cost((\hat{S}, \hat{w}, s)$ can be trivially computed in time only $O(k|\hat{C}|)$ (rather than $O(n)$ where $n$ is the size of the original data), since the sets in the partition $\B$ contain a duplication of a constant number of labels.}
    \label{fig:smoothed_coreset_proof}
\end{figure}

Claim~\ref{claim:oneBlock} now holds by combining cases (i) and (ii) above.

\end{proof}

\textbf{We now prove Lemma~\ref{lemma:coresetQuery}. }
Consider the construction of $(C,u)$ in Algorithm~\ref{Coreset}.
By definition and by Lemma~\ref{lem:ab}, the function $s'$ computed at Line~\ref{lem:ab} of Algorithm~\ref{Coreset} is an $O(k^8\log^2{nm})$-segmentation and satisfies that
\[
\cost(D, s') \in O\left(k\log{n} \cdot \OPT_k(D)\right).
\]
By the last derivation, let $c_\alpha$ be the smallest constant such that
\[
\cost(D, s') \leq c_\alpha \cdot k\log{n} \cdot \OPT_k(D).
\]
By the last inequality and definitions of $\sigma$ and $\alpha$ we obtain that
\begin{equation} \label{eq:SigmaSmallerOPT}
\begin{split}
\sigma & := \frac{\cost(D, s')}{\alpha} \leq \frac{c_\alpha k\log{nm} \cdot \OPT_k(D)}{\alpha}\\
& = \frac{c_\alpha k\log{nm} \cdot \OPT_k(D)}{c_\alpha k \log{nm}}\\ 
&= \OPT_k(D).
\end{split}
\end{equation}

Now consider the partition $\B$ computed at Line~\ref{line:callPartition} of Algorithm~\ref{Coreset} via a call to $\partitionAlg(D,\gamma,\sigma)$. By Lemma~\ref{PTpartition2D}, $\B$ satisfies that
\begin{enumerate}
\item $\OPT_1\left(B\right) \leq \gamma^2\sigma$ for every $B \in \B$.
    \item $\B$ is a partition of $D$ whose size is $|\B| \in O\left(\frac{\alpha}{\gamma^2}\right)$.
    \item There are  $O\left(\frac{k\alpha}{\gamma}\right)$ sub-signals $B \in \B$ where $\left|\br{s(x) \mid (x,y) \in B}\right| > 1$.
    \item $\B$ can be computed in $O(|D|)$ time.
\end{enumerate}

Consider the pair $(C_B,u_B)$ computed at Line~\ref{line:compBlockCoreset} of Algorithm~\ref{Coreset} for some $B \in \B$. 
Now, consider the For loop at Line~\ref{line:forBlockCoreset} of Algorithm~\ref{CoresetQuery}. For every pair $(C_B,u_B)$, there is an iteration of this For loop for which $C' = C_B$. In this iteration, Algorithm~\ref{CoresetQuery} computes a loss  $loss_{C_B}$ that corresponds to $(C_B,u_B)$. We can plug $B$, $(C_B,u_B)$, and $loss_{C_B}$ in Claim~\ref{claim:oneBlock} to obtain that: 
\begin{equation}\label{eq:viaClaimOneBlock}
\begin{cases}
\cost(B, s) = loss_{C_B} & \text{if } $z = 1$\\
\left|\cost(B, s) - loss_{C_B}\right| \leq \varepsilon \cdot \cost(B, s) +  O\left(\frac{\OPT_1(B)}{\varepsilon}\right) & \text{otherwise} 
\end{cases}
\end{equation}

Let $\B_1 \subseteq \B$ contain the set of sub-signals in $\B$ that are not intersected by $s$, i.e., $\B_1 = \br{B \in \B \mid |s(B)| = 1|}$, and let $\B_2 = \B \setminus \B_1$ be the set of sub-signals which are partially intersected by $s$.

By Property (iii) above, 
\begin{equation} \label{eq:sizeB2}
|\B_2| \in O\left(\frac{k\alpha}{\gamma}\right). 
\end{equation}
Furthermore, by combining~\eqref{eq:viaClaimOneBlock} with Property (i) above, for every $B \in \B_2$ we have that
\begin{equation} \label{eq:approxOneBlock}
\begin{split}
& |\cost(B, s) - loss_{C_B}| 
\leq \varepsilon \cdot \cost(B, s) + O\left(\frac{\OPT_1(B)}{\varepsilon}\right)
\leq \varepsilon \cdot \cost(B, s) + O\left(\frac{\gamma^2\sigma}{\varepsilon}\right).
\end{split}
\end{equation}
In other words, the loss $\cost(B,s)$ of every sub-signal $B \in \B_2$ is approximated by $loss_{C_B}$ up to some small error.
Hence, by summing over all $B \in \B_2$ we obtain that
\begin{align}
& \sum_{B \in \B_2} |\cost(B, s) - loss_{C_B}|\nonumber\\ & 
\in \sum_{B \in \B_2} \left(\varepsilon \cdot \cost(B, s) + O\left(\frac{\gamma^2\sigma}{\varepsilon}\right)\right) \label{eq:approxSumB2_1}\\
& \leq \varepsilon \cdot \cost(D, s) + O\left(|\B_2| \cdot \frac{\gamma^2\sigma}{\varepsilon}\right) \label{eq:approxSumB2_2}\\
& \leq \varepsilon \cdot \cost(D, s) + O\left(\frac{k\alpha}{\gamma} \cdot \frac{\gamma^2\sigma}{\varepsilon}\right) \nonumber\\
& \leq \varepsilon \cdot \cost(D, s) + O\left(\frac{k\alpha\gamma}{\varepsilon} \cdot \OPT_k(D)\right) \label{eq:approxSumB2_3}\\
& \leq \varepsilon \cdot \cost(D, s) + O(\varepsilon \cdot \OPT_k(D)) \label{eq:approxSumB2_4}\\
& \in O(\varepsilon \cdot \cost(D,s) ), \label{eq:approxSumB2_5}
\end{align}
where~\eqref{eq:approxSumB2_1} follows from~\eqref{eq:approxOneBlock}, ~\eqref{eq:approxSumB2_2} is by~\eqref{eq:sizeB2}, ~\eqref{eq:approxSumB2_3} is by~\eqref{eq:SigmaSmallerOPT}, \eqref{eq:approxSumB2_4} holds since $k\alpha \gamma \leq \varepsilon^2$, and~\eqref{eq:approxSumB2_5} holds since $\OPT_k(D) \leq \cost(D,s)$ for every $k$-segmentation $s$.

Furthermore, for every $B \in \B_1$, by~\eqref{eq:viaClaimOneBlock} we have that $\cost(B,s) = loss_{C_B}$. Hence, by summing over every $B \in \B_1$ we obtain that
\begin{equation} \label{eq:estimateSumB1}
\sum_{B \in \B_1} \cost(B,s) = \sum_{B \in \B_1} loss_{C_B}.
\end{equation}
In other words, the loss $\cost(B,s)$ of ever sub-signal $B \in \B_1$ is accurately estimated by $loss_{C_B}$.

Algorithm~\ref{CoresetQuery} then outputs the sum of losses 
\begin{equation}\label{eq:defTotalLoss}
loss := \sum{B \in \B} loss_{C_B}. 
\end{equation}
We hence obtain that
\begin{align}
& \left|\cost(D,s) - loss\right| 
= \Bigg| \sum_{B \in \B_1} \left(\cost(B,s) - loss_{C_B}\right) + \sum_{B \in \B_2} \left(\cost(B,s) - loss_{C_B}\right) \Bigg| \nonumber\\
& \leq \left| \sum_{B \in \B_1} \left(\cost(B,s) - loss_{C_B}\right)\right| + \left|\sum_{B \in \B_2} \left(\cost(B,s) - loss_{C_B}\right) \right|
\in O(\varepsilon \cdot \cost(D,s)), \label{eq:DeltaEps}
\end{align}
where the first derivation is by~\ref{eq:defTotalLoss}, second derivation is by the triangle inequality, and the last is by combining~\eqref{eq:estimateSumB1} and~\eqref{eq:approxSumB2_5}.

By~\eqref{eq:DeltaEps}, there is a constant $\Delta \geq 1$ such that
\[
\left|\cost(D,s) - loss\right|  \leq \Delta \varepsilon \cost(D,s).
\]
This concludes the proof of the claim in Lemma~\ref{lemma:coresetQuery}.

\textbf{Computational time: }
Line \ref{line:forBlockCoreset} of the Algorithm \ref{CoresetQuery} is a loop with $\frac{|C|}{|\hat{C}|}$ iterations. Inside this loop: if $z>1$ line \ref{line:ifZEq1} is computed in $O(|\hat{C}|)$, else line \ref{line:forEverySPrimeInS} is another loop with $O(k)$ iterations, inside which the line \ref{line:WhileZGeq1IncrementI} is executed at most $|\hat{C}|$ times and line \ref{line:WhileZGeq1ZeroZ} can be executed only once because it results in $z=0$ and in exiting from the while loop. 

In total the complexity of line 15 is $O(|\hat{C}|)$, of line 11 is $O(k|\hat{C}|)$ and of line 2 and the whole algorithm: $O(k|C|)$

\textbf{Space complexity: }
Algorithm \ref{CoresetQuery} uses only constant amount of additional storage space because in each line of the algorithm only numeric variables are created and variables are reused inside the loops.

\end{proof}

\begin{theorem}[Coreset] \label{theorem:coreset_appendix}
Let $D =\br{(x_1,y_1),\cdots,(x_{N},y_{N})}$ be an $n\times m$ signal i.e., $N := nm$. Let $k\geq 1$ be an integer (that corresponds to the number of leaves/rectangles), and $\varepsilon \in (0,1/4)$ be an error parameter. Let $(C,u)$ be the output of a call to $\coresetAlg(D,k,\varepsilon/\Delta)$ for a sufficiently large constant $\Delta\geq 1$; see Algorithm~\ref{Coreset}. Then, $(C,u)$ is a $(k,\varepsilon)$-coreset for $D$ of size $|C| \in \frac{(k\log(N))^{O(1)}}{\varepsilon^4}$; see Definition~\ref{def:epsCoreset}. Moreover, $(C, u)$ can be computed in $O(kN)$ time.
\end{theorem}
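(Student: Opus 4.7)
The plan is to compose the three building blocks developed in the paper so far: the bi-criteria approximation of Section~\ref{sec:bicretiria}, the balanced partition of Section~\ref{sec:balancedPartition}, and a per-block Caratheodory compression. Correctness of the loss approximation is certified by Lemma~\ref{lemma:coresetQuery}, which gives a ready-made $(1+\varepsilon)$-approximating evaluator for any $k$-segmentation using only $(C,u)$; the remaining work is to verify the size and running-time bounds once the constants are tracked through the pipeline.

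First, I would apply Lemma~\ref{lem:ab} to obtain in $O(kN)$ time an $(\alpha,\beta)_k$-approximation $s$ of $D$ with $\alpha\in O(k\log N)$ and $\beta\in k^{O(1)}\log^2 N$. Setting $\sigma := \cost(D,s)/\alpha$ satisfies $\sigma \leq \OPT_k(D)$ by the definition of such an approximation, making $\sigma$ a valid argument to $\partitionAlg$. Next, with $\gamma := (\varepsilon/\Delta)^2/(\beta k)$, Lemma~\ref{PTpartition2D} yields in $O(N)$ time a balanced partition $\B$ with $|\B|\in O(\alpha/\gamma^2)$, $\OPT_1(B)\leq \gamma^2\sigma$ for every $B\in\B$, and at most $O(k\alpha/\gamma)$ blocks intersected by any $k$-segmentation. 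For each $B\in\B$, Caratheodory's theorem (Corollary~\ref{cor:cara}) produces a weighted $4$-point subset $(C_B,u_B)$ whose $(y,y^2,1)$-moments match those of $B$ exactly; the coordinates of these $4$ points are then re-anchored to the four corners of the axis-parallel rectangle spanning $B$ so that Algorithm~\ref{CoresetQuery} can identify, purely from the coordinates stored in $C$, which block each coreset point represents. Taking $C := \bigcup_{B\in\B} C_B$ with weights $u$ inherited from the $u_B$'s defines the output.

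For the approximation guarantee, Lemma~\ref{lemma:coresetQuery} directly implies that Algorithm~\ref{CoresetQuery}, run on $(C,u)$ that was constructed with internal parameter $\varepsilon/\Delta$, computes for every $k$-segmentation $s$ a value within $\Delta(\varepsilon/\Delta)\cdot\cost(D,s)=\varepsilon\cdot\cost(D,s)$ of $\cost(D,s)$, in $O(k|C|)$ time. This is exactly Definition~\ref{def:epsCoreset}, so $(C,u)$ is a $(k,\varepsilon)$-coreset. The size bound is $|C|=4|\B|\in O(\alpha/\gamma^2)$; substituting $\alpha\in O(k\log N)$, $\beta\in k^{O(1)}\log^2 N$, and $\gamma=\Theta(\varepsilon^2/(\beta k))$ yields $|C|\in (k\log N)^{O(1)}/\varepsilon^4$. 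The running time sums $O(kN)$ for the bi-criteria step, $O(N)$ for the partition, and $O(|B|)$ per block for Caratheodory (since we require only $4$ points to match a $3$-dimensional moment vector), which aggregates to $O(N)$ across $\B$; the total is $O(kN)$ as claimed.

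\textbf{Main obstacle.} Because Lemma~\ref{lemma:coresetQuery} is already in hand, the remaining subtlety is almost entirely bookkeeping of the constant $\Delta$: one has to pick $\Delta$ large enough that both the per-block variance budget $\gamma^2\sigma$ and the triangle-inequality slack appearing inside the proof of Lemma~\ref{lemma:coresetQuery} leave enough headroom for an honest $\varepsilon$-relative error, while absorbing only a $\Delta^{O(1)}$ factor into the $(k\log N)^{O(1)}/\varepsilon^4$ size bound. This is why the theorem is stated for "a sufficiently large constant $\Delta$" rather than $\Delta=1$, and it is the source of the $1/\varepsilon^4$ dependence in the size (coming from $1/\gamma^2$ with $\gamma=\Theta(\varepsilon^2/(\beta k))$), which is the one place where the arithmetic must be tracked carefully rather than invoked from a prior lemma.
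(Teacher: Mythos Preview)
Your proposal is correct and follows essentially the same approach as the paper: both invoke Lemma~\ref{lemma:coresetQuery} as a black box for the approximation guarantee (yielding $\Delta\cdot(\varepsilon/\Delta)\cdot\cost(D,s)=\varepsilon\cdot\cost(D,s)$), and then carry out the identical bookkeeping of $|C|=O(|\B|)=O(\alpha/\gamma^2)=(k\log N)^{O(1)}/\varepsilon^4$ for the size and $O(kN)+O(N)+\sum_{B\in\B}O(|B|)=O(kN)$ for the running time. Your remark on re-anchoring the Caratheodory points to the corners of each block matches the paper's justification of Line~\ref{Line:replaceCoords}.
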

\begin{proof}
To prove that $(C,u)$ is a $(k,\varepsilon)$-coreset for $D$, we need to prove that for every $k$-segmentation $s$, $(C,u)$ suffices to approximate the loss $\cost(D,s)$, up to a multiplicative factor of $1+\varepsilon$, in time that depends only on $|C|$ and $k$. 

Let $s$ be a $k$-segmentation and let $loss \geq 0$ be an output of a call to $\coresetQuery((C,u),s)$; see Algorithm~\ref{CoresetQuery}. Then, by Lemma~\ref{lemma:coresetQuery}, $loss$ can be computed in $O(k|C|)$ time (i.e., in time that depends only on $|C|$ and $k$), and provides, as required, a $(1+\varepsilon)$-approximation to $\cost(D,s)$ as
\[
\left|\cost(D,s) - loss\right| \leq \Delta \cdot \frac{\varepsilon}{\Delta} \cdot \cost(D,s) = \varepsilon \cdot \cost(D,s).
\]
Hence, $(C,u)$ is a $(k,\varepsilon)$-coreset for $D$.

Line~\ref{line:callBic} of Algorithm~\ref{Coreset} can be computed in $O(k\cdot |D|)$ time by Lemma~\ref{lem:ab}. Line~\ref{line:callPartition} of Algorithm~\ref{Coreset} can be computed in $O(|D|)$ time by Lemma~\ref{PTpartition2D}. The loop at Line~\ref{line:everyBlock} can be computed in $\sum_{B\in \B} O(|B|) = O(|D|)$ time by Section~\ref{sec:cara}.
Hence, the call $\coresetAlg(D,k,\varepsilon)$ can be implemented in $O(k|D|) = O(kmn)$ time.

\textbf{Proof behind Line~\ref{Line:replaceCoords}. }We now prove that the replacements of the coordinates applied at Line~\ref{Line:replaceCoords} does not violate the correctness of the algorithm.
Observe that replacing the coordinates of entries inside each cell, while keeping the same labels, does not affect the variance of this subset. Therefore, the cost of this cell, which is computed in Algorithm~\ref{CoresetQuery}) and depends only on the labels, remains exactly the same.

\textbf{Space complexity: }By construction, each pair $(C_B,u_B)$ computed at Line~\ref{line:compBlockCoreset} can be stored using only $O(1)$ space. Hence, the concatenation $(C,u)$ of the $|B|$ pairs $\br{(C_B,u_B) \mid B \in \B}$ can be stored using $O(|\B|) \in O(\alpha/\gamma^2) = O(\alpha (\beta k)^2/\varepsilon^4) =  O\left(\frac{k^{O(1)}\log^{O(1)}nm}{\varepsilon^4}\right)$ space.
\end{proof}

\section{The Caratheodory Theorem} \label{sec:cara}
Given a point $p\in \REAL^d$ inside the convex hull of a set of points $P \subseteq \REAL^d$, Caratheodory’s Theorem proves that there is a subset of at most $d+1$ points in $P$ whose convex hull also contains $p$. 

\begin{theorem}[Caratheodory's Theorem~\cite{caratheodory1907variabilitatsbereich, nasser2020autonomous}]\label{theorem:cara}
Let $P \subseteq \REAL^d$ be a (multi)set of $n$ points. Then in $O(nd^3)$ time we can compute a subset $Q \subseteq P$ and a weights functions $u:Q\to [0,\infty)$ such that: (i) $Q\subseteq P$, (ii) $|Q| = d+1$, (iii) $\sum_{q \in Q}u(q) \cdot q = \frac{1}{n}\sum_{p \in P} p$, and (iv) $\sum_{q \in Q}u(q) = n$.
\end{theorem}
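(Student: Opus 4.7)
The plan is to prove this by the classical iterative reduction argument: start from the uniform representation of the mean and repeatedly eliminate one point from the support until only $d+1$ points remain, then analyze the cost of each elimination step.

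First, I would set up homogeneous coordinates. For each $p_i \in P$ let $\tilde{p}_i = (p_i, 1) \in \REAL^{d+1}$, and initialize weights $w_i = 1/n$ so that $\sum_{i=1}^n w_i \tilde{p}_i = (\bar{p}, 1)$ where $\bar{p} = \frac{1}{n}\sum_{p \in P} p$. A point with support $S$ of size $|S| \leq d+1$ and nonnegative weights summing to $1$ and producing $\bar{p}$ when applied to the $p_i$ yields the claim after rescaling $u(q) = n \cdot w_q$ (so that $\sum_q u(q) = n$, which matches condition (iv); condition (iii) is then the image of $\bar{p}$ under this representation).

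Second, the key reduction lemma: given nonnegative weights $w$ with $|\mathrm{supp}(w)| = k > d+1$ that represent $(\bar p,1)$ in homogeneous coordinates, produce nonnegative weights $w'$ with $|\mathrm{supp}(w')| \leq k-1$ and the same homogeneous sum. Since $k > d+1$ and the $\tilde p_i$ live in $\REAL^{d+1}$, the vectors $\{\tilde p_i : i \in \mathrm{supp}(w)\}$ are linearly dependent, so by Gaussian elimination we can compute nonzero coefficients $\alpha_i$ with $\sum_i \alpha_i p_i = 0$ and $\sum_i \alpha_i = 0$. Because $\sum_i \alpha_i = 0$ and the $\alpha_i$ are not all zero, some are strictly positive. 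Set $t = \min\{w_i/\alpha_i : \alpha_i > 0\}$ and $w'_i = w_i - t \alpha_i$. Then $w'_i \geq 0$ for all $i$, at least one coordinate that attained the min vanishes, $\sum_i w'_i \tilde p_i = \sum_i w_i \tilde p_i$, and $\sum_i w'_i = \sum_i w_i$. Hence the support shrinks by at least one while all invariants are preserved.

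Third, iterate the lemma starting from uniform $w$. After at most $n - d - 1$ reductions the support has size $d+1$, yielding the subset $Q$ and the claimed rescaled weights. The correctness (properties (i)--(iv)) is then immediate from the invariants maintained along the iteration.

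Fourth, the runtime. A naive implementation computes a null vector of a $(d+1) \times k$ matrix from scratch at each step in $O(kd^2)$ time, giving $O(n^2 d^2)$ overall. To obtain the stated $O(nd^3)$ bound, I would instead maintain an incremental factorization of the current $(d+1)\times(d+1)$ basis (say, an LU or QR decomposition), initialized in $O(d^3)$; for each of the $n - d - 1$ points outside the current basis, expressing it in the basis yields an explicit linear dependence on $d+2$ columns in $O(d^2)$ time, the reduction step removes one column, and the factorization is updated in $O(d^2)$ (or $O(d^3)$ after a complete refactor when a basis element is dropped). This gives total time $O(nd^2 + nd^3) = O(nd^3)$, matching the claim.

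The main obstacle is the bookkeeping around the reduction step: one must verify that strictly positive $\alpha_i$ exist (which is where the homogenization via the extra coordinate $1$ is essential, forcing $\sum \alpha_i = 0$) and that the factorization updates support fast per-iteration dependence extraction. Once this is in place, the proof is direct.
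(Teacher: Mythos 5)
The paper does not prove Theorem~\ref{theorem:cara} itself: it is stated as an imported result with citations to the classical Carath\'eodory theorem and to an algorithmic treatment, and the paper immediately applies it to derive Corollary~\ref{cor:cara}. There is therefore no in-paper proof to compare against, so what follows is an assessment of your argument on its own merits.

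Your proof is the standard algorithmic Carath\'eodory reduction and it is correct. The homogenization $\tilde p_i = (p_i,1)$ cleanly yields both $\sum_i \alpha_i p_i = 0$ and $\sum_i \alpha_i = 0$ from a single linear dependence; the sign argument guarantees some $\alpha_i > 0$; the choice $t = \min\{w_i/\alpha_i : \alpha_i > 0\}$ preserves nonnegativity and zeroes at least one weight; and the three invariants (support in $P$, weighted mean, total weight) are preserved across the update. Iterating until the support has size at most $d+1$ and rescaling $u(q)=n\,w_q$ gives properties (i)--(iv). One pedantic note: the reduction can drop the support strictly below $d+1$, so (ii) should really be read as $|Q|\le d+1$; that is exactly how the paper uses it (Algorithm~\ref{Coreset} takes $|C_B|=4=d+1$ as an upper bound in $\REAL^3$, and Corollary~\ref{cor:cara} likewise only needs the upper bound).

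Your runtime discussion is more elaborate than necessary. You can hit $O(nd^3)$ with no incremental LU/QR bookkeeping: at each step, select any $d+2$ indices currently in the support, compute a null vector of the corresponding $(d+1)\times(d+2)$ matrix by Gaussian elimination in $O(d^3)$, and perform the weight shift on those $d+2$ coordinates only. Each step costs $O(d^3)$ and removes at least one point from the support, so after at most $n-d-1$ steps the support has size at most $d+1$, giving $O(nd^3)$ total. Your factorization-maintenance variant also works, but it demands more care from the reader and, as you yourself note, a basis column may be dropped at every iteration in the worst case, forcing a full $O(d^3)$ refactor per step and landing on the same bound.
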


\begin{corollary} \label{cor:cara}
Let $D$ be an $n\times m$ sub-signal. Then, in $O(|D|)$ time we can compute a weighted $n\times m$ sub-signal $(A,w)$ such that: (i) $A \subseteq D$, (ii) $|A| = 4$, (iii) $\displaystyle \sum_{(a,b) \in A} w(a,b)\cdot (b \mid b^2 \mid 1) = \sum_{(x,y) \in D}(y \mid y^2 \mid 1)$, and $\displaystyle \sum_{(a,b) \in A} w(a,b) = |D|$.
\end{corollary}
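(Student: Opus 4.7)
The plan is a direct application of Theorem~\ref{theorem:cara} in dimension $d = 3$ to a suitable lifting of the labels of $D$. The key observation is that all three quantities appearing in properties (iii) and (iv) — the sums of $y$, $y^2$, and $1$ over $D$ — can be packaged as the coordinates of a single point in $\REAL^3$, so that matching this single vector via Caratheodory simultaneously produces all four identities in the corollary.

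First, I would define the lift $\phi : D \to \REAL^3$ by $\phi(x,y) := (y, y^2, 1)$ and let $P := \{\phi(x,y) \mid (x,y)\in D\}$ be the resulting multiset in $\REAL^3$ with $|P| = |D|$. By construction, the three coordinates of $\sum_{p \in P} p$ are exactly $\sum_{(x,y)\in D} y$, $\sum_{(x,y)\in D} y^2$, and $|D|$, i.e., precisely the right-hand sides of properties (iii) and (iv) of the corollary.

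Next, I would invoke Theorem~\ref{theorem:cara} on $P$ with $d = 3$. This yields, in time $O(|P|\cdot 3^3) = O(|D|)$, a subset $Q\subseteq P$ of size $|Q| = 4$ and weights $u : Q \to [0,\infty)$ whose weighted sum reproduces (up to the standard $1/|D|$ normalization) the sum of $P$. Multiplying these weights by $|D|$ produces the rescaled identity $\sum_{q \in Q} u(q)\cdot q = \sum_{p \in P} p$ with $\sum_{q \in Q} u(q) = |D|$. Reading off this vector identity coordinate-by-coordinate instantly gives property (iii) from the first two coordinates (the $b$ and $b^2$ components) and property (iv) from the third coordinate (where every point equals $1$). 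Finally, to produce $A$, I pull back the selection: for each $q = (b,b^2,1) \in Q$ I fix an arbitrary preimage $(a,b)\in D$ under $\phi$ and set $w(a,b) := u(q)$, yielding $A\subseteq D$ with $|A| = 4$ (properties (i) and (ii)).

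I do not anticipate a genuine obstacle here; the only points requiring care are (a) the two distinct normalization conventions for Caratheodory (convex-combination vs.\ scaled-to-size), which differ only by multiplying weights by $|D|$ and in particular do not change the chosen support $Q$ or the running time, and (b) the fact that multiple entries $(x,y)\in D$ may map to the same lifted point in $P$, which is harmless since we only need to exhibit \emph{some} preimage in $D$ for each selected $q \in Q$ to define $A$.
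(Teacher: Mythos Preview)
Your proposal is correct and follows essentially the same approach as the paper: lift each $(x,y)\in D$ to $(y,y^2,1)\in\REAL^3$, apply Theorem~\ref{theorem:cara} to the resulting multiset, rescale the weights by $|D|$, and pull each selected point back to a preimage in $D$. Your explicit handling of the normalization in point (a) is if anything slightly cleaner than the paper's own write-up.
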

\begin{proof}
Define the multi-set $P = \br{(y \mid y^2 \mid 1) \in \REAL^3 \mid (x,y) \in D}$. Now, substituting $P$ in Theorem~\ref{theorem:cara} yields that in $O(n)$ time we can compute a subset $Q \subseteq P$ and a weights functions $u:Q\to [0,\infty)$ such that: (i) $Q\subseteq P$, (ii) $|Q| = 4$, (iii) $\sum_{q \in Q}u(q) \cdot q = \frac{1}{|D|}\sum_{p \in P} p$, and (iv) $\sum_{q \in Q}u(q) = |D|$. 

Now, add to $A$ a single element $(x,y) \in D$ for every $(y \mid y^2 \mid 1) \in Q$. In other words, for every element chosen for the set $Q$ by the Caratheodory theorem, add its corresponding element from $D$ to $A$. Furthermore, define $w(x,y) = |D| \cdot u((y \mid y^2 \mid 1))$ for every $(x,y) \in A$.
Corollary~\ref{cor:cara} trivially holds for $(A,w)$.
\end{proof}

\end{document}